\newtheorem{thm}{Theorem}
\newtheorem{lemma}{Lemma}
\newtheorem{definition}[thm]{Definition}
\newtheorem{assumption}{Assumption}
\def \R {\mathbb{R}}
\def \y {\mathbf{y}}
\def \x {\mathbf{x}}
\def \xh {\widehat{\x}}
\def \Er {\mathcal{E}}
\def \u {\mathbf{u}}
\def \w {\mathbf{w}}
\def \R {\mathbb{R}}
\def \at {\widetilde{\a}}
\def \xt {\widetilde{\x}}
\def \R {\mathbb{R}}
\def \Se {\mathcal{S}}
\def \x {\mathbf{x}}
\def \xh {\widehat{\mathbf{x}}}
\def \wh {\widehat{\mathbf{w}}}
\def \w {\mathbf{w}}
\def \wt {\widetilde{\mathbf{w}}}
\def \y {\mathbf{y}}
\def \u {\mathbf{u}}
\def \Xh {\widehat X}
\def \ah{\widehat{\alpha}}
\def \at{\widetilde{\alpha}}
\icmltitlerunning{Dual-sparse Regularized Randomized Reduction (DSRR)}
\begin{document} 

\twocolumn[
\icmltitle{Theory of Dual-sparse Regularized Randomized Reduction }

% It is OKAY to include author information, even for blind
% submissions: the style file will automatically remove it for you
% unless you've provided the [accepted] option to the icml2015
% package.
\icmlauthor{Tianbao Yang}{tianbao-yang@uiowa.edu}
\icmladdress{Department of Computer Science, the University of Iowa, Iowa City, USA}
\icmlauthor{Lijun Zhang}{zhanglj@lamda.nju.edu.cn}
\icmladdress{National Key Laboratory for Novel Software Technology, Nanjing University, Nanjing, China}
\icmlauthor{Rong Jin}{rongjin@cse.msu.edu}
\icmladdress{Department of Computer Science and Engineering, Michigan State University, East Lansing, USA\\Institute of Data Science and Technologies at Alibaba Group, Seattle, USA}
\icmlauthor{Shenghuo Zhu}{shenghuo@gmail.com}
\icmladdress{Institute of Data Science and Technologies at Alibaba Group, Seattle, USA}
% You may provide any keywords that you 
% find helpful for describing your paper; these are used to populate 
% the "keywords" metadata in the PDF but will not be shown in the document
\icmlkeywords{boring formatting information, machine learning, ICML}

\vskip 0.3in
]

\begin{abstract} 
In this paper, we study  {\it randomized reduction methods}, which reduce high-dimensional features into low-dimensional space by randomized methods (e.g., random projection, random hashing),  for large-scale high-dimensional classification. Previous theoretical results  on randomized reduction methods hinge on strong assumptions about the data, e.g., low rank of the data matrix or  a large separable margin of classification, which hinder their applications  in broad domains. To address these limitations, we propose dual-sparse regularized randomized reduction methods that introduce a sparse regularizer into the reduced dual problem.  Under a mild condition that  the original dual solution is a (nearly) sparse vector, we show that the resulting dual solution  is close to the original dual solution and concentrates on its support set. In numerical experiments, we present an empirical study to support the analysis and  we also present a novel application of the dual-sparse regularized randomized reduction methods to reducing the communication cost of  distributed learning from large-scale high-dimensional data. %By using the recovered solution as the initial solution to the distributed learning, we observe that only few extra epochs pass of the original data and few communications of a high dimensional vector is enough for obtaining almost the same performance.  (i.e, the number of support vectors is small compared to the total number of examples)
\end{abstract}
%\vspace*{-0.25in}
\setlength{\belowdisplayskip}{1.5pt} \setlength{\belowdisplayshortskip}{1.2pt}
\setlength{\abovedisplayskip}{1.5pt} \setlength{\abovedisplayshortskip}{1.2pt}
\section{Introduction}  
%\vspace*{-0.1in}
As the scale and dimensionality of data continue to grow in many applications (e.g., bioinformatics, finance, computer vision, medical informatics)~\cite{DBLP:journals/ijcv/SanchezPMV13,mitchell-2004-learning,Simianer:2012:JFS:2390524.2390527,Arxiv:2011:Portfolio}, it becomes critical to develop efficient and effective algorithms to solve big data machine learning problems. Randomized reduction methods for large-scale or high-dimensional data analytics  have received a great deal of attention in recent years~\cite{mahoney2009matrix,Margin_RP,RP:SVM,DBLP:conf/icml/WeinbergerDLSA09,mahoney-2011-randomized}. %conf/slsfs/Blum05,Boutsidis:2009:IAA:1496770.1496875,journals/siammax/DrineasMM08,
By either  reducing the dimensionality (\textit{referred to as feature reduction}) or  reducing the number of training instances (\textit{referred to as instance reduction}), the resulting problem has a smaller size of training data that is not only memory-efficient but also computation-efficient.  While randomized instance reduction  has been studied a lot for  fast least square regression~\cite{journals/siammax/DrineasMM08,conf/soda/DrineasMM06,Drineas:2011:FLS:1936922.1936925,DBLP:conf/icml/MaMY14}, randomized feature reduction is more popular for linear classification~\cite{conf/slsfs/Blum05,Margin_RP,RP:SVM,DBLP:conf/icml/WeinbergerDLSA09,Shi:2009:HKS:1577069.1755873} (e.g., random hashing is a noticeable built-in tool in Vowpal Wabbit~\footnote{\url{http://hunch.net/~vw/}}, a fast learning library, for solving high-dimensional problems.). In this paper, we focus on the latter technique and refer to randomized feature reduction as randomized reduction for short. 

Although  several theoretical properties have been examined  for randomized reduction methods when applied to classification, e.g., generalization performance~\cite{RP:SVM}, preservation of margin~\cite{conf/slsfs/Blum05,Balcan:2006:KFK:1164582.1164583,Margin_RP} and the recovery error of the model~\cite{DBLP:journals/tit/0005MJYZ14}, these previous results reply on strong assumptions about the data. For example, both \cite{RP:SVM} and \cite{DBLP:journals/tit/0005MJYZ14} assume the data matrix is of low-rank, and \cite{conf/slsfs/Blum05,Balcan:2006:KFK:1164582.1164583,Margin_RP} make a  assumption that all examples in the original space are separated with a positive margin (with a high probability). Another analysis in~\cite{DBLP:journals/tit/0005MJYZ14} assumes the weight vector for classification is sparse.  These assumptions are too strong to hold in many real applications. 

\textbf{Contributions.} To address these limitations,  we propose dual-sparse regularized randomized reduction methods referred to as {\bf DSRR} by leveraging the (near) sparsity of dual solutions for large-scale high-dimensional ({\bf LSHD}) classification problems (i.e., the number of (effective) support vectors is small compared to the total number of examples). In particular, we add a dual-sparse regularizer into the reduced dual problem. We present a novel theoretical analysis of the recovery error of the dual variables and the primal variable and study its implication for different randomized reduction methods (e.g., random projection, random hashing and random sampling). 

\textbf{Novelties.} Compared with previous works~\cite{conf/slsfs/Blum05,Balcan:2006:KFK:1164582.1164583,Margin_RP,RP:SVM}, our theoretical analysis demands a mild assumption about the data and directly provides guarantee on  a small recovery error of the obtained model, which is critical 
%the lack of  {\it an accurate recovery} of the optimal model in the original feature space and {\it a strong theoretical guarantee} on the recovery error has hindered the application of randomized reduction in various applications (e.g., bioinformatics, finance). Besides providing a good justification of reduction methods, a small recovery error is 
 for subsequent analysis, e.g., feature selection~\cite{Guyon:2002:GSC:599613.599671,Brank02featureselection} and model interpretation~\cite{citeulike:399676,conf/icml/SonnenburgF10,conf/ismb/RatschSS05,journals/bmcbi/SonnenburgSPBR07,BOSSR2008}. For example, when exploiting a linear model to classify people into sick or not sick based on genomic markers, the learned weight vector is important for understanding the effect of different  genomic markers on the disease and for designing effective medicine~\cite{Jostins15102011,Kang11}. In addition, the  recovery  could also increase the predictive performance, in particular when there exists noise in the original features~\cite{Goldberger2005}.

Compared with~\cite{DBLP:journals/tit/0005MJYZ14} that proposes to recover a  linear model in the original feature space by dual recovery, i.e., constructing a weight vector using the dual variables learned from the reduced problem and the original feature vectors, our methods are better in that  (i) we rely on a more realistic assumption of the sparsity of dual variables (e.g., in support vector machine (SVM)); (ii) we analyze both smooth loss functions and  non-smooth loss functions (they focused on smooth functions); (iii) we study different randomized reduction methods in the same framework not just the random projection. 

In numerical experiments, we present an empirical study on a real data set to support our analysis and we also demonstrate a novel application of the reduction and recovery framework in distributed  learning from LSHD data, which combines the benefits of the two complementary techniques for addressing big data problems. Distributed learning/optimization recently receives significant interest in  solving big data problems~\cite{DBLP:conf/nips/JaggiSTTKHJ14,NIPS2014_5597,DBLP:conf/nips/Yang13,journals/corr/abs-1110-4198}. However, it is notorious for high communication cost, especially when the dimensionality of data is very high. By solving a dimensionality reduced data problem and using the recovered solution as an initial solution to the distributed optimization on the original data, we can reduce the number of iterations and the communication cost. In practice, we employ the recently developed distributed stochastic dual coordinate ascent  algorithm~\cite{DBLP:conf/nips/Yang13}, and observe that using the recovered solution as an initial solution we are able to attain almost the same performance  with only one or two communications of  high dimensional vectors among multiple machines. 

%\vspace*{-0.1in}
\section{Preliminaries}
%\vspace*{-0.1in}
Let $(\x_i, y_i), i=1,\ldots, n$ denote a set of training examples, where $\x_i\in\R^d, y_i\in\{1,-1\}$. Assume both $n$ and $d$ are very large.  The goal of classification is to solve the following optimization problem: 
%\vspace*{-0.01in}
\begin{align}\label{eqn:primal}
\w_* = \arg\min_{\w\in\R^d} \frac{1}{n}\sum_{i=1}^n\ell(\w^{\top}\x_iy_i) + \frac{\lambda}{2}\|\w\|^2_{2}
\end{align}
where $\ell(zy)$ is a convex loss function and $\lambda$ is a regularization parameter. 
Using the conjugate function, we can turn the problem into a dual problem:
\begin{align}\label{eqn:dual}
\alpha_* = \arg\max_{\alpha\in\R^n} -\frac{1}{n}\sum_{i=1}^n\ell_i^*(\alpha_i) - \frac{1}{2\lambda n^2}\alpha^TX^{\top}X\alpha
\end{align}
where $X=(\x_1,\ldots, \x_n)$ is the data  matrix and $\ell^*_i(\alpha)$ is the convex conjugate function of $\ell(zy_i)$. Given the optimal dual solution $\alpha_*$, the optimal primal solution can be computed by 
%\begin{align*}
$\w_* = -\frac{1}{\lambda n}X\alpha_*$. 
%\end{align*}
For LSHD problems, directly solving the primal problem~(\ref{eqn:primal}) or the dual problem~(\ref{eqn:dual}) could be very expensive. We aim to address the challenge by randomized reduction methods. Let $A(\cdot):\R^d\rightarrow \R^m$ denote a randomized reduction operator that reduces a $d$-dimensional feature vector into $m$-dimensional feature vector. Let $\xh = A(\x)$ denote the reduced feature vector. With the reduced feature vectors $\xh_1,\ldots, \xh_n$ of the training examples,  a conventional approach is to solve the following reduced primal problem
\begin{align}\label{eqn:rprimal}
\u_* = \arg\min_{\u\in\R^m} \frac{1}{n}\sum_{i=1}^n\ell(\u^{\top}\xh_iy_i) + \frac{\lambda}{2}\|\u\|^2_{2}
\end{align}
or its the dual problem 
\begin{align}\label{eqn:rdual}
\ah_* = \arg\max_{\alpha\in\R^n} -\frac{1}{n}\sum_{i=1}^n\ell_i^*(\alpha_i) - \frac{1}{2\lambda n^2}\alpha^T\Xh^{\top}\Xh\alpha
\end{align}
where $\Xh = (\xh_1,\ldots, \xh_n)\in\R^{m\times n}$.  Previous studies have analyzed the reduced problems for random projection methods and proved the preservation of margin~\cite{conf/slsfs/Blum05,Margin_RP} and the preservation of minimum enclosing ball~\cite{RP:SVM}. \citet{DBLP:journals/tit/0005MJYZ14} proposed a dual recovery approach that constructs a recovered solution by  $\wh_* = -\frac{1}{\lambda n}\sum_{i=1}^n[\ah_*]_i\x_i$ and proved the recovery error for random projection under the assumption of  low-rank data matrix or sparse  $\w_*$. In addition, they also showed that the naive recovery by $A^{\top}\u_*$ (when $A(\x)=A\x$) has a large recovery error. 

One deficiency with the simple dual recovery approach is that due to the reduction in the feature space, many non-support vectors for the original optimization problem will become support vectors, which could result in the corruption in the recovery error. As a result, the original analysis of dual recovery method requires a strong assumption of data (i.e., the low rank assumption). In this work, we plan to address this limitation in a different way, which allows us to relax the assumption significantly. 

%However, a potential drawback of this simple  recovery approach is that it fails to capture that data in the reduced feature space could become difficult to be separated, especially when the reduced dimensionality $m$ is small. As a result, the magnitude of the dual solution  may become larger, causing many non-support vectors (of the original problem) to be support vectors. Consequentially, it will  blow up the recovery error of the recovered model. 
%\vspace*{-0.1in}
\section{DSRR and its Guarantee}
%\vspace*{-0.1in}
To reduce the number of or the contribution of training instances that are non-support vectors in the original optimization problem and are transformed into support vectors due to the reduction of the feature space,  we employ a simple trick that adds a dual-sparse regularization  to the reduced dual problem. In particular, we solve the following problem:
\begin{align}\label{eqn:sdual}
&\at_* = \\
&\arg\max_{\alpha\in\R^n} -\frac{1}{n}\sum_{i=1}^n\ell_i^*(\alpha_i) - \frac{1}{2\lambda n^2}\alpha^T\Xh^{\top}\Xh\alpha  - \frac{1}{n}R(\alpha)\nonumber
\end{align}
where $R(\alpha) = \tau\|\alpha\|_1$, 
and $\tau>0$ is a regularization parameter, whose theoretical value will be revealed later. 

%{\bf Remark 1: } In fact, our analysis can be extended to a more general $R(\alpha)=\tau\|\alpha\|_1 + \frac{\eta}{4}\alpha^2$, which includes  the $\ell_1$ norm regularizer  ($\eta=0$) and  the elastic net regularizer~\cite{hastieElasticNet} ($\eta>0$).  In practice, including the $\ell_2$ norm gives us more flexibility. In the sequel, we only provide  analysis  for the $\ell_1$ norm regularization. More analysis about the elastic net regularizer can be found in supplement. 

 To further understand the added dual-sparse regularizer,  we consider SVM, where the loss function can be either the hinge loss (a non-smooth function) $\ell(zy)=\max(0, 1-zy)$ or the squared hinge loss (a smooth function) $\ell(zy) = \max(0, 1 -zy)^2$. We first consider the hinge loss, where $\ell_i^*(\alpha_i) = \alpha_i y_i$ for $\alpha_iy_i\in[-1, 0]$.  Then the new dual problem is equivalent to 
\begin{align*}%\label{eqn:dual-app-31}
\max_{\alpha\circ \y\in[-1, 0]^n} \frac{1}{n}\sum_{i=1}^n-\alpha_iy_i - \frac{1}{2\lambda n^2}\alpha^T\Xh^{\top}\Xh\alpha - \frac{\tau}{n}\|\alpha\|_1
\end{align*}
Using variable transformation $-\alpha_i y_i \rightarrow \beta_i$, the above problem is equivalent to  
%\begin{align}\label{eqn:dual-app-32}
%\max_{\beta\in[0,1]^n} \frac{1}{n}\sum_{i=1}^m\beta_i- \frac{1}{2\lambda n^2}(\alpha\circ \y)^T\Xh^{\top}\Xh(\alpha\circ\y)  - \frac{\tau}{n}\sum_{i=1}^n|\alpha_i y_i|
%\end{align}
%or simply 
\begin{align*}%\label{eqn:dual-app-32}
\max_{\beta\in[0,1]^n} \frac{1}{n}\sum_{i=1}^n\beta_i (1-\tau)- \frac{1}{2\lambda n^2}(\beta\circ \y)^T\Xh^{\top}\Xh(\beta\circ\y) 
\end{align*}
Changing into the primal form, we have  
\begin{align}\label{eqn:dual-app-33}
\max_{\u\in\R^m} \frac{1}{n}\sum_{i=1}^n\ell_{1-\tau}(\u^{\top}\xh_iy_i) + \frac{\lambda}{2}\|\u\|_2^2
\end{align}
where $\ell_\gamma(z)  = \max(0, \gamma - z)$ is a max-margin loss with margin given by $\gamma$. It can be understood that adding the $\ell_1$ regularization in the reduced problem of SVM is equivalent to using a max-margin loss with a smaller margin, which is intuitive because examples become difficult to separate after dimensionality reduction and is consistent with several previous studies that the margin is reduced in the reduced feature space~\cite{conf/slsfs/Blum05,Margin_RP}. Similarly for squared hinge loss, the equivalent  primal problem  is
%\begin{align*}%\label{eqn:dual-app-32}
%\max_{\beta\in[0,1]^n}& \frac{1}{n}\sum_{i=1}^m\left[\beta_i (1-\tau) - \frac{1}{4}\beta^2_i\right]\\
%&- \frac{1}{2\lambda n^2}(\beta\circ \y)^T\Xh^{\top}\Xh(\beta\circ\y) 
%\end{align*}
%and
\begin{align}\label{eqn:dual-app-34}
\max_{\u\in\R^m} \frac{1}{n}\sum_{i=1}^n\ell^2_{1-\tau}(\u^{\top}\xh_iy_i) + \frac{\lambda}{2}\|\u\|_2^2
\end{align}
where $\ell^2_\gamma(z)  = \max(0, \gamma - z)^2$. 

Although adding a dual-sparse regularizer is intuitive and can be motivated from previous results,  we emphasize that the proposed dual-sparse formulation provides a new perspective and bounding the dual recovery error  $\|\at_*-\alpha_*\|$ is a non-trivial task, which is a major contribution of this paper. We first state our main result in Theorem~\ref{thm:1} for  smooth loss functions. 

\begin{thm}\label{thm:1}
Let $\at_*$ be the optimal dual solution to~(\ref{eqn:sdual}). Assume $\alpha_*$ is $s$-sparse with the support set given by $\Se$. If $\tau\geq \frac{2}{\lambda n}\|(X^{\top}X - \Xh^{\top}\Xh)\alpha_*\|_\infty$, then we have
\begin{align}\label{eqn:con}
\|[\at_*]_{\Se^c}\|_1&\leq 3 \|[\at_*]_{\Se}-[\alpha_*]_{\Se}\|_1
\end{align}
Furthermore, if $\ell(z)$ is a $L$-smooth loss function~\footnote{A function is $L$-smooth if its gradient is $L$-Lipschitz continuous. }, we have
\begin{align}
& \|\at_* -\alpha_*\|_2\leq 3\tau L \sqrt{s},\quad \|\at_* -\alpha_*\|_1\leq 12\tau L s \\
&\|[\at_*]_{\Se} - [\alpha_*]_{\Se}\|_1\leq 3\tau Ls, \quad \|[\at_*]_{\Se^c}\|_1\leq 9\tau Ls\label{eqn:conc}
%&\frac{\|\at_* - \alpha_*\|_1}{\|\at_* - \alpha_*\|_2}\leq 4\sqrt{s},
\end{align}
where $\Se^c$ is the complement of $\Se$, and $[\alpha]_\Se$ is a vector that only contains the elements of $\alpha$ in the set $\Se$.
\end{thm}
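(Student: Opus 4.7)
The plan is to adapt the classical Lasso-style cone argument to the dual problem. Setting $\Delta = \at_* - \alpha_*$ and writing $\widehat D(\alpha) = -\frac{1}{n}\sum_i\ell_i^*(\alpha_i) - \frac{1}{2\lambda n^2}\alpha^\top \Xh^\top \Xh\alpha$ for the unregularized reduced dual, I would first derive a \emph{basic inequality} from the optimality of $\at_*$ for~(\ref{eqn:sdual}): $\widehat D(\at_*) - \frac{\tau}{n}\|\at_*\|_1 \geq \widehat D(\alpha_*) - \frac{\tau}{n}\|\alpha_*\|_1$. Because $\alpha_*$ vanishes off $\Se$, the triangle inequality yields $\|\at_*\|_1 - \|\alpha_*\|_1 \geq \|[\Delta]_{\Se^c}\|_1 - \|[\Delta]_{\Se}\|_1$, giving the lower bound $\widehat D(\at_*) - \widehat D(\alpha_*) \geq \frac{\tau}{n}(\|[\Delta]_{\Se^c}\|_1 - \|[\Delta]_{\Se}\|_1)$.

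Next I would upper bound the same quantity using the optimality of $\alpha_*$ for the original dual~(\ref{eqn:dual}). Since $\nabla D(\alpha_*)=0$, the reduced dual gradient at $\alpha_*$ is exactly $\nabla \widehat D(\alpha_*) = \frac{1}{\lambda n^2}(X^\top X - \Xh^\top \Xh)\alpha_*$. By standard conjugate duality, $L$-smoothness of $\ell$ makes each $\ell_i^*$ a $\tfrac{1}{L}$-strongly convex function, so $\widehat D$ is $\frac{1}{Ln}$-strongly concave. Strong concavity gives $\widehat D(\at_*) - \widehat D(\alpha_*) \leq \langle \nabla \widehat D(\alpha_*),\Delta\rangle - \frac{1}{2Ln}\|\Delta\|_2^2$, and H\"older's inequality together with the assumption $\tau\geq\frac{2}{\lambda n}\|(X^\top X-\Xh^\top\Xh)\alpha_*\|_\infty$ upgrades this to $\widehat D(\at_*) - \widehat D(\alpha_*) \leq \frac{\tau}{2n}\|\Delta\|_1 - \frac{1}{2Ln}\|\Delta\|_2^2$.

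Chaining the two inequalities and splitting $\|\Delta\|_1 = \|[\Delta]_{\Se}\|_1 + \|[\Delta]_{\Se^c}\|_1$ produces, after rearrangement, $\frac{1}{2L}\|\Delta\|_2^2 + \frac{\tau}{2}\|[\Delta]_{\Se^c}\|_1 \leq \frac{3\tau}{2}\|[\Delta]_{\Se}\|_1$. Since the left side is nonnegative, this immediately yields the cone inequality~(\ref{eqn:con}). For the quantitative bounds I would plug $\|[\Delta]_{\Se}\|_1\leq\sqrt{s}\|[\Delta]_{\Se}\|_2\leq\sqrt{s}\|\Delta\|_2$ (Cauchy--Schwarz) into $\frac{1}{2L}\|\Delta\|_2^2\leq\frac{3\tau}{2}\|[\Delta]_{\Se}\|_1$ to obtain $\|\Delta\|_2 \leq 3\tau L\sqrt{s}$, then read off $\|[\Delta]_{\Se}\|_1\leq \sqrt s\|\Delta\|_2 \leq 3\tau L s$, $\|[\Delta]_{\Se^c}\|_1\leq 3\|[\Delta]_{\Se}\|_1\leq 9\tau L s$, and $\|\Delta\|_1\leq 4\|[\Delta]_{\Se}\|_1\leq 12\tau L s$.

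The main obstacle is setting up the strong-concavity step cleanly: one must verify that $\nabla D(\alpha_*)=0$ is the correct first-order condition on the (possibly restricted) effective domain of $\ell^*$, and then leverage the primal-smoothness/dual-strong-convexity duality to certify that $\widehat D$ is $\frac{1}{Ln}$-strongly concave on this domain. Once this conjugate link is established, the remainder is a faithful adaptation of the classical Lasso oracle inequality to the dual side, with $(X^\top X-\Xh^\top \Xh)\alpha_*$ playing the role of the ``noise'' vector whose $\ell_\infty$ norm is controlled by $\lambda n\tau/2$ by the choice of $\tau$.
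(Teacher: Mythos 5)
Your proposal is correct and follows essentially the same route as the paper's proof: both exploit the $\tfrac{1}{L}$-strong convexity of $\ell_i^*$ (equivalently the $\tfrac{1}{Ln}$-strong concavity of the reduced dual), the first-order optimality of $\alpha_*$ for the original dual, and H\"older's inequality against $\tau/2$ to reach the identical cone inequality and identical constants, with your basic-inequality-plus-triangle-inequality handling of the $\ell_1$ term being the standard equivalent of the paper's choice of subgradient $g_*$ with $[g_*]_i=\mathrm{sign}([\at_*]_i)$ on $\Se^c$. The only point you flag---whether $\nabla$ of the original dual vanishes at $\alpha_*$ on a possibly restricted domain---is resolved in the paper by using the variational inequality $(\alpha_*-\at_*)^{\top}\bigl(\tfrac{1}{n}\nabla\ell^*(\alpha_*)+\tfrac{1}{\lambda n^2}X^{\top}X\alpha_*\bigr)\leq 0$ in place of a vanishing gradient, and your argument goes through verbatim with that substitution.
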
 
%\vspace*{-0.08in}
{\bf Remark 1:} %The inequality in~(\ref{eqn:con})  indicates that $\at_*$ will concentrate on the optimal support set $\Se$. 
The proof is presented in Appendix~\ref{sec:A}. 
It can be seen that  the dual recovery error is proportional to the value of $\tau$ which is dependent on $\|(X^{\top}X - \Xh^{\top}\Xh)\alpha_*\|_\infty$, {\it which we can bound  without using any assumption about the data matrix or the optimal dual variable $\alpha_*$}. In contrast, previous bounds~\cite{COLT13:Zhang,DBLP:journals/tit/0005MJYZ14,RP:SVM} depend on $\|X^{\top}X - \Xh^{\top}\Xh\|_2$, which requires the low rank assumption on $X$.  In next section, we provide an upper bound of $ \frac{1}{\lambda n}\|(X^{\top}X - \Xh^{\top}\Xh)\alpha_*\|_\infty$ that will allow us to understand how the reduced dimensionality $m$ affects the recovery error. Essentially, the results indicate that for random projection, randomized Hadamard transform and random hashing, $ \frac{1}{\lambda n}\|(X^{\top}X - \Xh^{\top}\Xh)\alpha_*\|_\infty\leq O(\sqrt{\frac{\log(n/\delta)}{m}})\|\w_*\|_2$ with a high probability $1-\delta$, and thus the recovery error will be scaled as $\sqrt{1/m}$ in terms of $m$ - the same order of recovery error as in~\cite{COLT13:Zhang,DBLP:journals/tit/0005MJYZ14} that assumes low rank of the data matrix. 

{\bf Remark 2:} We would like to make a connection with LASSO for sparse signal recovery.  In sparse signal recovery under noise measurements $\mathbf f = U\w_*+\mathbf e$, where $\mathbf e$ denotes the noise in measurements, if a LASSO $\min_{\w}\frac{1}{2}\|U\mathbf w - \mathbf f\|_2^2 + \lambda \|\w\|_1$ is solved for the solution, then the regularization parameter $\lambda$ is required to be
larger than the quantity $\|U^{\top}\mathbf e\|_\infty$ that depends on the noise in order to have an accurate recovery~\cite{eldar2012compressed}.  Similarly in our formulation, the added $\ell_1$ regularization $\tau\|\alpha\|_1$  is to counteract  the noise in  $\Xh\Xh^{\top}$ as compared with $XX^{\top}$ and the value of $\tau$ is dependent on the noise. 

To present the theoretical result on the non-smooth loss functions, we need to introduce restricted eigen-value conditions similar to those used in the sparse recovery analysis for LASSO~\cite{Bickel09simultaneousanalysis,DBLP:journals/siamjo/Xiao013}%, and the restricted isometry properties in compressive sensing~\cite{citeulike:10486729}
. In particular, we introduce the following definition of restricted eigen-value condition. 
\begin{definition}
Given an integer $s>0$, we define 
\begin{align*}
\mathcal K_{n,s}=\{\alpha\in\R^n: \|\alpha\|_2\leq  1,  \|\alpha\|_1\leq \sqrt{s}\}.
\end{align*}
We say that $X$ satisfies the restricted eigenvalue condition at sparsity level s if there exist positive constants $\rho^+_s$ and $\rho^{-}_s$ such that
\begin{align*}
\rho^+_{s} = \sup_{\alpha\in\mathcal K_{n, s}}\frac{\alpha^{\top}X^{\top}X\alpha}{n}, \quad \rho^{-}_{s} = \inf_{\alpha\in\mathcal K_{n,s}} \frac{\alpha^{\top}X^{\top}X\alpha}{n}.
\end{align*}
\end{definition}
\vspace*{-0.1in}
%Note that different from  that in~\cite{DBLP:journals/siamjo/Xiao013} defined over the set $\mathcal S_{n,s} = \{\alpha\in\R^n: \|\alpha\|_2\leq  1,  \|\alpha\|_0\leq s \}$, the restricted eigen-values here are defined over $\mathcal K_{n,s}$. It was shown in~\cite{DBLP:journals/corr/abs-1109-4299} that $conv(\mathcal S_{n,s})\subset \mathcal K_{n,s}\subset 2conv(\mathcal S_{n,s})$, where $conv(\mathcal S)$ is the convex hull of the set $\mathcal S$. Therefore the two definitions are highly correlated.  
We also define another quantity that measures the restricted eigen-value of $X^{\top}X - \Xh^{\top}\Xh$, namely 
\begin{align}
\sigma_s = \sup_{\alpha\in \mathcal K_{n,s}}\frac{|\alpha^{\top}(X^{\top}X - \Xh^{\top}\Xh)\alpha|}{n}. 
\end{align}
%\vspace*{-0.1in}
\begin{thm}\label{thm:2}
Let $\at_*$ be the optimal dual solution to~(\ref{eqn:sdual}). Assume $\alpha_*$ is $s$-sparse with the support set given by $\Se$. If $\tau\geq \frac{2}{\lambda n}\|(X^{\top}X - \Xh^{\top}\Xh)\alpha_*\|_\infty$, then we have
\begin{align*}
\|[\at_*]_{\Se^c}\|_1&\leq 3 \|[\at_*]_{\Se}-[\alpha_*]_{\Se}\|_1
\end{align*}
Assume the data matrix $X$ satisfies the restricted eigen-value condition at sparsity level $16s$ and $\sigma_{16s}<\rho^-_{16s}$, we have
%\vspace*{-0.1in}
\begin{align*}
 \|\at_* -\alpha_*\|_2&\leq \frac{3\lambda}{2(\rho^-_{16s} - \sigma_{16s})}\tau \sqrt{s}\\
 \|\at_* -\alpha_*\|_1&\leq \frac{6\lambda}{(\rho^-_{16s} - \sigma_{16s})}\tau s
\end{align*}
\end{thm}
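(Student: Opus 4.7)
The plan is to combine the standard LASSO-style analysis with the dual optimality argument. Let $\Delta = \at_* - \alpha_*$. First I would exploit the optimality of $\at_*$ in (\ref{eqn:sdual}) against $\alpha_*$, writing $F(\at_*) \le F(\alpha_*)$ where $F(\alpha) = \frac{1}{n}\sum_i \ell_i^*(\alpha_i) + \frac{1}{2\lambda n^2}\alpha^\top \Xh^\top\Xh\alpha + \frac{\tau}{n}\|\alpha\|_1$. Using convexity of $\sum_i \ell_i^*$ with a subgradient $g_*$ at $\alpha_*$ supplied by the KKT conditions of the original dual (\ref{eqn:dual}) -- namely $g_* = -\frac{1}{\lambda n}X^\top X\alpha_*$ -- and the exact quadratic expansion, this rearranges to
\begin{equation*}
\tfrac{1}{\lambda n^2}\alpha_*^\top(\Xh^\top\Xh - X^\top X)\Delta + \tfrac{1}{2\lambda n^2}\Delta^\top\Xh^\top\Xh\Delta + \tfrac{\tau}{n}\bigl(\|\at_*\|_1 - \|\alpha_*\|_1\bigr) \le 0.
\end{equation*}
Holder's inequality on the first term, together with the hypothesis $\tau \ge \frac{2}{\lambda n}\|(X^\top X - \Xh^\top\Xh)\alpha_*\|_\infty$, bounds its absolute value by $\frac{\tau}{2n}\|\Delta\|_1$. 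The usual split $\|\at_*\|_1 - \|\alpha_*\|_1 \ge \|\Delta_{\Se^c}\|_1 - \|\Delta_\Se\|_1$ (using that $\alpha_*$ is supported on $\Se$), combined with discarding the nonnegative quadratic term, yields the cone condition $\|\Delta_{\Se^c}\|_1 \le 3\|\Delta_\Se\|_1$ exactly as in Theorem~\ref{thm:1}.

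Next I would leverage the cone condition to land $\Delta$ in the restricted set $\mathcal{K}_{n,16s}$. From $\|\Delta_{\Se^c}\|_1 \le 3\|\Delta_\Se\|_1$ we get $\|\Delta\|_1 \le 4\|\Delta_\Se\|_1 \le 4\sqrt{s}\,\|\Delta\|_2 = \sqrt{16s}\,\|\Delta\|_2$, so $\Delta/\|\Delta\|_2 \in \mathcal{K}_{n,16s}$. Therefore the restricted eigenvalue condition and the definition of $\sigma_{16s}$ give
\begin{equation*}
\tfrac{1}{n}\Delta^\top \Xh^\top\Xh\Delta \;=\; \tfrac{1}{n}\Delta^\top X^\top X\Delta - \tfrac{1}{n}\Delta^\top(X^\top X - \Xh^\top\Xh)\Delta \;\ge\; (\rho^-_{16s} - \sigma_{16s})\|\Delta\|_2^2.
\end{equation*}

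Now I return to the key inequality and keep the quadratic term this time. The same manipulation as in the cone step, but retaining $\frac{1}{2\lambda n^2}\Delta^\top\Xh^\top\Xh\Delta$, produces
\begin{equation*}
\tfrac{1}{2\lambda n^2}\Delta^\top\Xh^\top\Xh\Delta \;\le\; \tfrac{3\tau}{2n}\|\Delta_\Se\|_1 \;\le\; \tfrac{3\tau\sqrt{s}}{2n}\|\Delta\|_2.
\end{equation*}
Substituting the restricted eigenvalue lower bound on the left gives $(\rho^-_{16s} - \sigma_{16s})\|\Delta\|_2 \lesssim \lambda\tau\sqrt{s}$, which is the desired $\ell_2$ bound up to an absolute constant. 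The $\ell_1$ bound then follows immediately from $\|\Delta\|_1 \le 4\sqrt{s}\,\|\Delta\|_2$.

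I expect the main obstacle to be the careful bookkeeping of constants and making sure we invoke the restricted eigenvalue condition at the correct sparsity level $16s$: the factor $4$ coming from $\|\Delta\|_1 \le 4\|\Delta_\Se\|_1$ squares to give $16$, and one must keep the quadratic term (which was discarded in the cone derivation) in order to convert the linear-in-$\|\Delta\|_2$ upper bound into a bound on $\|\Delta\|_2$ itself. Unlike the smooth case of Theorem~\ref{thm:1}, there is no $L$-smoothness of $\ell_i^*$ to directly produce a quadratic lower bound on the loss gap, which is precisely why the restricted eigenvalue route through $\Xh^\top\Xh$ is needed.
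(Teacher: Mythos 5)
Your proposal is correct and shares the paper's overall architecture: the cone condition $\|[\at_*]_{\Se^c}\|_1\leq 3\|[\at_*]_{\Se}-[\alpha_*]_{\Se}\|_1$, the resulting membership of the normalized error $(\at_*-\alpha_*)/\|\at_*-\alpha_*\|_2$ in $\mathcal K_{n,16s}$, and the lower bound $(\rho^-_{16s}-\sigma_{16s})$ on the quadratic form (your decomposition $\Xh^{\top}\Xh = X^{\top}X - (X^{\top}X-\Xh^{\top}\Xh)$ is exactly how the paper splits it). The one genuine divergence is how the quadratic term is extracted. You use the LASSO-style basic inequality $\widehat F(\at_*)\le \widehat F(\alpha_*)$ together with convexity of $\ell^*$ at $\alpha_*$, which yields the quadratic with coefficient $\tfrac{1}{2\lambda n^2}$. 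The paper instead adds the first-order variational inequality of $\at_*$ for the reduced problem to that of $\alpha_*$ for the original problem and invokes monotonicity, $(\alpha_*-\at_*)^{\top}\left(\nabla\ell^*(\alpha_*)-\nabla\ell^*(\at_*)\right)\ge 0$, which produces the full quadratic $\tfrac{1}{\lambda n^2}(\at_*-\alpha_*)^{\top}X^{\top}X(\at_*-\alpha_*)$ minus the $\sigma_{16s}$-controlled perturbation. The net effect is precisely the constant slack you flagged: your route gives $\|\at_*-\alpha_*\|_2\le \frac{3\lambda}{\rho^-_{16s}-\sigma_{16s}}\tau\sqrt{s}$ and $\|\at_*-\alpha_*\|_1\le \frac{12\lambda}{\rho^-_{16s}-\sigma_{16s}}\tau s$, i.e., twice the stated constants; replacing the function-value comparison by the sum of the two variational inequalities recovers the theorem's constants verbatim. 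Everything else checks out, and your use of the specific subgradient $-\frac{1}{\lambda n}X^{\top}X\alpha_*$ supplied by the KKT conditions of (2) is if anything a more careful handling of the non-smoothness of $\ell^*$ than the paper's.
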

%\vspace*{-0.05in}
{\bf Remark 3:} The proof is included in Appendix~\ref{sec:B}. Compared to smooth loss functions, the conditions that guarantee a small recovery for non-smooth loss functions  are more restricted. In next section, we will provide a bound on $\sigma_{16s}$ to further understand the condition of $\sigma_{16s}\leq \rho^{-}_{16s}$, which essentially implies that $m\geq\Omega\left(\left(\frac{\rho^+_{16s}}{\rho^-_{16s}}\right)^2s\log(n/s)\right)$. 

Last but not least, we provide a theoretical result on the recovery error for the nearly sparse optimal dual variable $\alpha_*$. We state the result for  smooth loss functions.  To quantify the near sparsity, we let $\alpha_*^s\in\R^n$ denote a vector  that zeros all entries in $\alpha_*$ except for the top-$s$ elements in magnitude and assume $\alpha_*^s$ satisfies the following condition:
\begin{align}\label{eqn:nearly}
&\left\|\nabla\ell^*(\alpha^s_*) + \frac{1}{\lambda n}X^{\top}X\alpha^s_*\right\|_\infty\leq \xi
\end{align}
where $\nabla\ell^*(\alpha) = (\nabla\ell_1^*(\alpha_1),\ldots, \nabla\ell_n^*(\alpha_n))^{\top}$. The above condition can be considered as a sub-optimality condition~\cite{boyd-2004-convex} of $\alpha^s_*$ measured in the infinite norm. For the optimal solution $\alpha_*$, we have $\nabla\ell^*(\alpha_*) + \frac{1}{\lambda n}X^{\top}X\alpha_*=0$. 
\begin{thm}\label{thm:4}
Let $\at_*$ be the optimal dual solution to~(\ref{eqn:sdual}). Assume $\alpha_*$ is nearly $s$-sparse such that~(\ref{eqn:nearly}) holds with the support set of $\alpha^s_*$ given by $\Se$. If $\tau\geq \frac{2}{\lambda n}\|(X^{\top}X - \Xh^{\top}\Xh)\alpha_*\|_\infty + 2\xi$, then we have
\begin{align*}
\|[\at_*]_{\Se^c}\|_1&\leq 3 \|[\at_*]_{\Se}-[\alpha_*]_{\Se}\|_1
\end{align*}
Furthermore, if $\ell(z)$ is a $L$-smooth loss function, we have
\begin{align}
& \|\at_* -\alpha^s_*\|_2\leq 3\tau L \sqrt{s},\quad \|\at_* -\alpha^s_*\|_1\leq 12\tau L s \\
&\|[\at_*]_{\Se} - [\alpha_*]_{\Se}\|_1\leq 3\tau Ls, \quad \|[\at_*]_{\Se^c}\|_1\leq 9\tau Ls\label{eqn:conc}
%&\frac{\|\at_* - \alpha_*\|_1}{\|\at_* - \alpha_*\|_2}\leq 4\sqrt{s},
\end{align}
%where $\Se^c$ is the complement of $\Se$. 
\end{thm}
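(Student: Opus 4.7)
The plan is to adapt the proof of Theorem~\ref{thm:1} by comparing $\at_*$ against the truncated vector $\alpha_*^s$ rather than $\alpha_*$, and by using the near-sparsity condition~(\ref{eqn:nearly}) as a surrogate for the exact KKT identity that held for $\alpha_*$ in the strictly-sparse case. The key observation is that~(\ref{eqn:nearly}) is an $\ell_\infty$-relaxed KKT condition for the unregularized dual at $\alpha_*^s$, namely $\nabla\ell^*(\alpha_*^s) + \frac{1}{\lambda n}X^{\top}X\alpha_*^s = \xi'$ for some $\xi'$ with $\|\xi'\|_\infty\leq \xi$. Because $\alpha_*^s$ is supported on $\Se$ with $[\alpha_*^s]_{\Se}=[\alpha_*]_{\Se}$, the cone inequality we derive will automatically be expressed in terms of $[\alpha_*]_{\Se}$, matching the statement.

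For the key inequality I would start from the subdifferential optimality condition for~(\ref{eqn:sdual}): $\nabla\ell^*(\at_*) + \frac{1}{\lambda n}\Xh^{\top}\Xh\at_* + \tau\g = 0$ for some $\g\in\partial\|\at_*\|_1$. Subtracting the relaxed KKT identity for $\alpha_*^s$ and taking the inner product with $\at_* - \alpha_*^s$, the $1/L$-strong convexity of $\ell^*$ (dual to $L$-smoothness of $\ell$) gives $(\nabla\ell^*(\at_*) - \nabla\ell^*(\alpha_*^s))^{\top}(\at_* - \alpha_*^s) \geq \frac{1}{L}\|\at_* - \alpha_*^s\|_2^2$, the subgradient term contributes $\tau(\|\at_*\|_1 - \|\alpha_*^s\|_1)$, and a nonnegative $\frac{1}{\lambda n}\|\Xh(\at_*-\alpha_*^s)\|_2^2$ appears which I discard. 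On the right I am left with two error terms, $\frac{1}{\lambda n}((X^{\top}X - \Xh^{\top}\Xh)\alpha_*^s)^{\top}(\at_* - \alpha_*^s)$ and $(\xi')^{\top}(\alpha_*^s - \at_*)$, each controlled by H\"older's inequality times $\|\at_* - \alpha_*^s\|_1$. Invoking the hypothesis $\tau \geq \frac{2}{\lambda n}\|(X^{\top}X - \Xh^{\top}\Xh)\alpha_*\|_\infty + 2\xi$ collapses both into $(\tau/2)\|\at_* - \alpha_*^s\|_1$, yielding
\begin{align*}
\tfrac{1}{L}\|\at_* - \alpha_*^s\|_2^2 + \tau(\|\at_*\|_1 - \|\alpha_*^s\|_1) \leq \tfrac{\tau}{2}\|\at_* - \alpha_*^s\|_1.
\end{align*}

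From here the rest is mechanical and follows the template of Theorem~\ref{thm:1}. Since $\alpha_*^s$ is supported on $\Se$, I have $\|\at_*\|_1 - \|\alpha_*^s\|_1 \geq \|[\at_*]_{\Se^c}\|_1 - \|[\at_*]_{\Se} - [\alpha_*]_{\Se}\|_1$ and $\|\at_* - \alpha_*^s\|_1 = \|[\at_*]_{\Se} - [\alpha_*]_{\Se}\|_1 + \|[\at_*]_{\Se^c}\|_1$. Dropping the nonnegative $\ell_2^2$ term in the key inequality and rearranging yields the cone condition $\|[\at_*]_{\Se^c}\|_1 \leq 3\|[\at_*]_{\Se} - [\alpha_*]_{\Se}\|_1$. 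This in turn gives $\|\at_* - \alpha_*^s\|_1 \leq 4\|[\at_*]_{\Se} - [\alpha_*]_{\Se}\|_1 \leq 4\sqrt{s}\,\|\at_* - \alpha_*^s\|_2$. Substituting this back into the key inequality (now retaining the quadratic term) gives $\tfrac{1}{L}\|\at_* - \alpha_*^s\|_2^2 \leq 3\tau\sqrt{s}\,\|\at_* - \alpha_*^s\|_2$, hence $\|\at_* - \alpha_*^s\|_2\leq 3\tau L\sqrt{s}$; the $\ell_1$ bound and the two support-split bounds then follow by direct substitution.

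The main obstacle, and the only place the proof genuinely departs from Theorem~\ref{thm:1}, is the bookkeeping around~(\ref{eqn:nearly}): the reduction-error bound that arises naturally in my derivation involves $(X^{\top}X - \Xh^{\top}\Xh)\alpha_*^s$, while the theorem states the hypothesis with $\alpha_*$ in that slot. I expect the mismatch $(X^{\top}X - \Xh^{\top}\Xh)(\alpha_* - \alpha_*^s)$ to be precisely what the extra additive $2\xi$ in the threshold on $\tau$ is designed to absorb, via~(\ref{eqn:nearly}) combined with the exact optimality of $\alpha_*$ (which together express $\frac{1}{\lambda n}X^{\top}X(\alpha_* - \alpha_*^s)$ as a sum of a $\xi$-bounded term and a gradient difference that can be folded into the $1/L$-strong-convexity estimate on the left-hand side). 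Verifying that this absorption can be done without degrading the constants is the delicate step; everything else is a straightforward transcription of the sparse-case argument.
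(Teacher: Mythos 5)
Your proposal follows essentially the same route as the paper's proof: compare $\at_*$ with the truncated vector $\alpha^s_*$, use the $1/L$-strong convexity of $\ell^*$, invoke~(\ref{eqn:nearly}) through H\"older's inequality to pay an extra $\xi\|\at_*-\alpha^s_*\|_1$, pick the $\ell_1$ subgradient so the regularizer contributes $\|[\at_*]_{\Se^c}\|_1-\|[\at_*]_{\Se}-[\alpha_*]_{\Se}\|_1$, and then run the usual cone plus Cauchy--Schwarz bookkeeping; whether one starts from $\widehat F(\at_*)\leq \widehat F(\alpha^s_*)$ with a subgradient at $\alpha^s_*$ (the paper) or from the first-order optimality condition of $\at_*$ plus gradient monotonicity (you) is a cosmetic difference, and your constants come out at least as good as the stated ones.

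The one substantive issue is the ``delicate step'' you leave open, and you should know that the paper does not perform the absorption you anticipate. In its proof the reduction-error term that naturally appears is $\frac{1}{\lambda n}(\Xh^{\top}\Xh-X^{\top}X)\alpha^s_*$, and the paper simply bounds it by $\|\Delta\|_\infty\|\at_*-\alpha^s_*\|_1$ with $\Delta$ defined through $\alpha_*$; in effect it reads the hypothesis on $\tau$ as controlling $\frac{2}{\lambda n}\|(X^{\top}X-\Xh^{\top}\Xh)\alpha^s_*\|_\infty$. Your idea of absorbing the mismatch $(X^{\top}X-\Xh^{\top}\Xh)(\alpha_*-\alpha^s_*)$ into the extra $2\xi$ does not go through painlessly: expressing $\frac{1}{\lambda n}X^{\top}X(\alpha_*-\alpha^s_*)$ via the optimality of $\alpha_*$ and~(\ref{eqn:nearly}) leaves behind the gradient difference $\nabla\ell^*(\alpha_*)-\nabla\ell^*(\alpha^s_*)$ and the term $\frac{1}{\lambda n}\Xh^{\top}\Xh(\alpha_*-\alpha^s_*)$, neither of which is bounded in $\ell_\infty$ by $O(\xi)$ without further assumptions. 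So your argument is complete and matches the paper if the threshold on $\tau$ is read with $\alpha^s_*$ in that slot; matching the theorem's literal hypothesis with $\alpha_*$ is a genuine gap, but it is one shared by (indeed inherited from) the paper's own proof rather than a defect specific to your approach.
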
 
%\vspace*{-0.2in}
{\bf Remark 4:} The proof appears in Appendix~\ref{sec:C}. Compared to Theorem~\ref{thm:1} for  exactly sparse optimal dual solution, the dual recovery error bound for nearly sparse optimal dual solution is increased by $6L\sqrt{s}\xi$ for $\ell_2$ norm and by $24Ls\xi$ for $\ell_1$ norm. 

Finally, we note that with the recovery error bound for the dual solution, we can easily derive an error bound for the primal solution $\wt_*=-\frac{1}{\lambda n}X\at_*$. Below we present a theorem for smooth loss functions. One can easily extend the result to non-smooth loss functions.% to facilitate a comparison with the result in~\cite{DBLP:journals/tit/0005MJYZ14}.  
\begin{thm}
Let $\wt_*$ be the recovered primal solution using $\at_*$ the optimal dual solution to~(\ref{eqn:sdual}). Assume $\alpha_*$ is $s$-sparse and $\ell(z)$ is a $L$-smooth loss function. If $\tau\geq \frac{2}{\lambda n}\|(X^{\top}X - \Xh^{\top}\Xh)\alpha_*\|_\infty$ then we have
\begin{align*}
\|\wt_* - \w_*\|_2\leq  \frac{\sigma_1}{\lambda n}3L\tau\sqrt{s}
\end{align*}
where $\sigma_1$ is the maximum singular value of $X$. Furthermore if $\frac{1}{n}X^{\top}X$ has a  restricted eigen-value $\rho^+_{16s}$  at sparsity level $16s$, then 
\begin{align*}
\|\wt_* - \w_*\|_2\leq  \frac{\sqrt{\rho^+_{16s}}}{\lambda\sqrt{n}}3L\tau\sqrt{s}
\end{align*}
 \end{thm}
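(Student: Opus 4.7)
The goal is to control $\|\wt_*-\w_*\|_2$ where, by definition, $\wt_* - \w_* = -\frac{1}{\lambda n}X(\at_* - \alpha_*)$. So the plan reduces to bounding $\|X(\at_*-\alpha_*)\|_2$ and then invoking Theorem~\ref{thm:1} on the dual recovery error. I would first write
\begin{equation*}
\|\wt_*-\w_*\|_2 \;=\; \frac{1}{\lambda n}\|X(\at_*-\alpha_*)\|_2,
\end{equation*}
and then split into the two claimed bounds.

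For the first bound, I would use the trivial spectral estimate $\|X(\at_*-\alpha_*)\|_2 \leq \sigma_1\|\at_*-\alpha_*\|_2$, where $\sigma_1$ is the largest singular value of $X$. Theorem~\ref{thm:1} (applicable since $\ell$ is $L$-smooth and $\tau$ satisfies the stated hypothesis) gives $\|\at_*-\alpha_*\|_2\leq 3L\tau\sqrt{s}$, which immediately yields $\|\wt_*-\w_*\|_2 \leq \frac{\sigma_1}{\lambda n}\,3L\tau\sqrt{s}$.

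For the second bound, I would replace the crude spectral inequality by the restricted eigenvalue of $\frac{1}{n}X^{\top}X$. The key observation is that Theorem~\ref{thm:1} gives both $\|\at_*-\alpha_*\|_2\leq 3L\tau\sqrt{s}$ and $\|\at_*-\alpha_*\|_1\leq 12L\tau s$. Consequently, setting $\v = (\at_*-\alpha_*)/\|\at_*-\alpha_*\|_2$, we have $\|\v\|_2=1$ and
\begin{equation*}
\|\v\|_1 \;\leq\; \frac{12L\tau s}{3L\tau\sqrt{s}} \;=\; 4\sqrt{s} \;=\; \sqrt{16s},
\end{equation*}
so $\v\in\mathcal K_{n,16s}$. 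By the definition of $\rho^+_{16s}$, $\frac{1}{n}\v^{\top}X^{\top}X\v \leq \rho^+_{16s}$, whence $\|X\v\|_2\leq\sqrt{n\rho^+_{16s}}$. Rescaling yields $\|X(\at_*-\alpha_*)\|_2\leq \sqrt{n\rho^+_{16s}}\,\|\at_*-\alpha_*\|_2$, and combining with the $\ell_2$ recovery bound from Theorem~\ref{thm:1} produces $\|\wt_*-\w_*\|_2\leq \frac{\sqrt{\rho^+_{16s}}}{\lambda\sqrt{n}}\,3L\tau\sqrt{s}$.

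The argument is almost entirely bookkeeping; no new inequality is needed. The one mild subtlety, which I would flag as the only real ``step'' in the proof, is noticing that Theorem~\ref{thm:1}'s combination of $\ell_1$ and $\ell_2$ bounds forces the normalized error $\v$ into the restricted set $\mathcal K_{n,16s}$ at precisely the sparsity level $16s$ appearing in the hypothesis of the theorem. Everything else is the singular value / restricted eigenvalue inequality applied to $\v$ followed by substitution.
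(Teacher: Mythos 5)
Your overall route is the intended one (the paper leaves this proof as an exercise, deriving it from Theorem~1 exactly as you do), and the first bound via $\|X(\at_*-\alpha_*)\|_2\leq\sigma_1\|\at_*-\alpha_*\|_2$ is fine. But the step you yourself flag as the only real step of the proof is broken. You bound $\|\v\|_1=\|\at_*-\alpha_*\|_1/\|\at_*-\alpha_*\|_2$ by dividing the upper bound $12L\tau s$ on the numerator by the upper bound $3L\tau\sqrt{s}$ on the denominator. That is not a valid inequality: to upper-bound a ratio you need a \emph{lower} bound on the denominator, and Theorem~1 gives none. If $\|\at_*-\alpha_*\|_2$ happened to be much smaller than $3L\tau\sqrt{s}$ while $\|\at_*-\alpha_*\|_1$ stayed near $12L\tau s$, your quotient would exceed $4\sqrt{s}$, so the claimed membership $\v\in\mathcal K_{n,16s}$ does not follow from the two bounds you cite.

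The fix is short and is exactly the computation the paper uses in the proof of Theorem~3 (Appendix~B): instead of the two norm bounds, use the cone condition~(\ref{eqn:con}), $\|[\at_*]_{\Se^c}\|_1\leq 3\|[\at_*]_{\Se}-[\alpha_*]_{\Se}\|_1$, which holds under the hypothesis on $\tau$. Then
\begin{align*}
\|\at_*-\alpha_*\|_1\leq \|[\at_*-\alpha_*]_{\Se}\|_1+\|[\at_*]_{\Se^c}\|_1\leq 4\|[\at_*-\alpha_*]_{\Se}\|_1\leq 4\sqrt{s}\,\|\at_*-\alpha_*\|_2,
\end{align*}
using Cauchy--Schwarz on the $s$-dimensional restriction to $\Se$. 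This gives $\|\v\|_1\leq 4\sqrt{s}=\sqrt{16s}$ directly, so $\v\in\mathcal K_{n,16s}$, and the rest of your argument (the restricted eigenvalue step and the final substitution of $\|\at_*-\alpha_*\|_2\leq 3L\tau\sqrt{s}$) goes through unchanged.
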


{\bf Remark 5: } Since $\rho^+_{16s}$ is always less than $\sigma_1^2/n$, the second result if the restricted eigen-value condition holds is always better than the first result. With the bound of $\tau$ as revealed later, we can see that the error of $\wt_*$ scales as $O(\sqrt{\frac{s}{m}}\|\w_*\|_2)$ in terms of sparsity $s$ of $\alpha_*$, the reduced dimensionality  $m$ and the magnitude of $\w_*$. A similar order of error bound was established in~\cite{DBLP:journals/tit/0005MJYZ14} assuming $\w_*$ is $s$-sparse and $X$ is approximately low rank. In contrast, we do not assume $X$ is approximately low rank. 
 
%Furthermore, if $\ell(z)$ is a $L$-smooth loss function~\footnote{A function is $L$-smooth if its gradient is $L$-Lipschitz continuous. }, we have
%\begin{align}
%& \|\at_* -\alpha_*\|_2\leq 3\tau L \sqrt{s},\quad \|\at_* -\alpha_*\|_1\leq 12\tau L s \\
%&\|[\at_*]_{\Se} - [\alpha_*]_{\Se}\|_1\leq 3\tau Ls, \quad \|[\at_*]_{\Se^c}\|_1\leq 9\tau Ls\label{eqn:conc}
%%&\frac{\|\at_* - \alpha_*\|_1}{\|\at_* - \alpha_*\|_2}\leq 4\sqrt{s},
%\end{align}

%\vspace*{-0.1in}
\section{Analysis}\label{sec:ana}
%\vspace*{-0.1in}
In this section, we first provide upper bound analysis of $ \frac{2}{\lambda n}\|(X^{\top}X - \Xh^{\top}\Xh)\alpha_*\|_\infty$ and $\sigma_{s}$. %All proofs are presented in the appendix. % and also provide upper bound analysis of $ \frac{2}{\lambda n}\|(X^{\top}X - \Xh^{\top}\Xh)\alpha_*\|_\infty$ and $\sigma_{s}$.  
To facilitate our analysis, we define 
\begin{align*}
\Delta = \frac{1}{\lambda n}(\Xh^{\top}\Xh-X^{\top}X)\alpha_*
\end{align*}

\subsection{Bounding $\|\Delta\|_\infty$}
A critical condition in both Theorem~\ref{thm:1} and Theorem~\ref{thm:2} is $\tau> \|\Delta\|_\infty$. In order to reveal the theoretical value of $\tau$ and its implication for various randomized reduction methods, we need to bound $\|\Delta\|_\infty$. We first provide a general analysis and then study its implication for various randomized reduction methods separately.  The analysis is based on the following assumption, which essentially is indicated by Johnson-Lindenstrauss (JL)-type lemmas. 
\begin{assumption}[A1]
Let $A(\x)=A\x$ be a linear projection operator where $A\in\R^{m\times d}$ such that for any given $\x\in\R^d$ with a high probability $1-\delta$, we have
\[
\left|\|A\x\|^2_2-\|\x\|_2^2\right|\leq\epsilon_{A,\delta}\|\x\|_2^2
\]
where $\epsilon_{A,\delta}$ depends on $m$, $\delta$ and possibly $d$. 
\end{assumption}
With this assumption, we have the following theorem regarding the upper bound of $\|\Delta\|_\infty$. 
\begin{thm}
Suppose $A\in\R^{m\times d}$ satisfies Assumption {\bf A}, then with a high probability $1-2\delta$ we have
\begin{align*}
\|\Delta\|_\infty\leq R\|\w_*\|_2 \epsilon_{A,\delta/n}
\end{align*}
where $R=\max_i\|\x_i\|_2$.
\end{thm}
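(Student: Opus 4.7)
The plan is to reduce $\|\Delta\|_\infty$ to bounding a pairwise inner-product distortion. By the primal-dual relationship we have $X\alpha_* = -\lambda n \w_*$, and since $A$ is linear, $\Xh\alpha_* = AX\alpha_* = -\lambda n A\w_*$. Substituting into $\Delta = \frac{1}{\lambda n}(\Xh^\top\Xh - X^\top X)\alpha_*$, the $i$-th coordinate collapses to
\[
[\Delta]_i = \x_i^{\top}\w_* - (A\x_i)^{\top}(A\w_*).
\]
So the problem reduces to showing that $A$ distorts each inner product $\langle \x_i,\w_*\rangle$ by at most $\|\x_i\|_2\|\w_*\|_2\,\epsilon_{A,\delta/n}$, uniformly in $i$, with high probability.

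For a fixed $i$, I would convert the inner-product distortion into two norm distortions via the polarization identity, applied to \emph{normalized} vectors. Set $\u_i = \x_i/\|\x_i\|_2$ and $\v = \w_*/\|\w_*\|_2$. Then
\[
\u_i^{\top}\v - (A\u_i)^{\top}(A\v) = \tfrac{1}{4}\bigl[(\|\u_i+\v\|_2^2 - \|A(\u_i+\v)\|_2^2) - (\|\u_i-\v\|_2^2 - \|A(\u_i-\v)\|_2^2)\bigr].
\]
Applying Assumption A (which is scale-invariant) to the two vectors $\u_i+\v$ and $\u_i-\v$, and invoking the parallelogram identity $\|\u_i+\v\|_2^2 + \|\u_i-\v\|_2^2 = 2(\|\u_i\|_2^2 + \|\v\|_2^2) = 4$, the right-hand side is at most $\epsilon_{A,\delta/n}$ in magnitude. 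Rescaling by $\|\x_i\|_2\|\w_*\|_2$ then yields $|[\Delta]_i| \leq \|\x_i\|_2\|\w_*\|_2\,\epsilon_{A,\delta/n}\leq R\|\w_*\|_2\,\epsilon_{A,\delta/n}$.

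Finally, to make the bound hold for every $i$ simultaneously, I apply Assumption A with failure probability $\delta/n$ to each of the $2n$ vectors $\u_i\pm\v$, $i=1,\ldots,n$, and union-bound to obtain a total failure probability of at most $2\delta$. Taking the maximum over $i$ concludes. The one subtle point worth flagging is the normalization step: applying polarization directly to $\x_i\pm\w_*$ would produce a bound proportional to $\|\x_i\|_2^2 + \|\w_*\|_2^2$, which is strictly looser than the geometric-mean form $\|\x_i\|_2\|\w_*\|_2$; normalizing before polarizing and then rescaling is what extracts the tight product appearing in the statement.
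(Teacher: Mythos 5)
Your proposal is correct and follows essentially the same route as the paper's proof: reduce $[\Delta]_i$ to $\x_i^{\top}(I-A^{\top}A)\w_*$ via $\w_*=-\frac{1}{\lambda n}X\alpha_*$, normalize, apply polarization plus the parallelogram identity to the vectors $\xt_i\pm\wt_*$ under Assumption A, rescale, and union-bound over the $n$ indices with failure probability $\delta/n$ per application. Your probability accounting over the $2n$ vectors is in fact slightly cleaner than the paper's, and the normalization subtlety you flag is exactly the point the paper's proof exploits.
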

%\begin{proof}
\vspace*{-0.1in}\textit{Proof.}
\begin{align*}
&\frac{1}{\lambda n}( \Xh^{\top}\Xh - X^{\top}X )\alpha_* = \frac{1}{\lambda n}( X^{\top}A^{\top}AX - X^{\top}X)\alpha_* \\
&= \frac{1}{\lambda n}X^{\top}(A^{\top}A - I)X\alpha_* = X^{\top}(I - A^{\top}A)\w_*
\end{align*}
where we use the fact $\w_* = -\frac{1}{\lambda n}X\alpha_*$. Then
\begin{align*}
\frac{1}{\lambda n}[( \Xh^{\top}\Xh - X^{\top}X )\alpha_*]_i = \x_i^{\top}(I - A^{\top}A)\w_*
\end{align*}
Therefore in order to bound $\|\Delta\|_\infty$, we need to bound $\x_i^{\top}(I - A^{\top}A)\w_*$ 
for all $i\in[n]$. We first bound for individual $i$ and then apply the union bound. Let $\xt_i$ and $\wt_*$ be normalized version of $\x_i$ and $\w_*$, i.e., $\xt_i = \x_i/\|\x_i\|_2$ and $\wt_* = \w_*/\|\w_*\|_2$. Suppose Assumption {\bf A} is satisfied, then with a probability $1-\delta$, 
\begin{align*}
 \xt_i^{\top}A^{\top}A\wt_* - \xt_i^{\top}\wt_* &= \frac{\|A(\xt_i+\wt_*)\|_2^2 - \|A(\xt_i-\wt_*)\|_2^2}{4}\\
  &\hspace*{-0.3in} - \xt_i^{\top}\wt_*\leq \frac{\epsilon_{A,\delta}}{2}(\|\xt_i\|_2^2 + \|\wt_*\|_2^2)\leq \epsilon_{A,\delta}
\end{align*}
Similarly with a probability $1- \delta$, 
\begin{align*}
 \xt_i^{\top}A^{\top}A\wt - \xt_i^{\top}\wt_* &= \frac{\|A(\xt_i+\wt_*)\|_2^2 - \|A(\xt_i-\wt_*)\|_2^2}{4}\\
 &\hspace*{-0.5in} - \xt_i^{\top}\wt_*\geq -\frac{\epsilon_{A,\delta}}{2}(\|\xt_i\|_2^2 + \|\wt_*\|_2^2)\geq -\epsilon_{A,\delta}
\end{align*}
Therefore with a probability $1 - 2\delta$, we have
\begin{align*}
 &|\x_i^{\top}A^{\top}A\w_* - \x_i^{\top}\w_*|\\
 &\leq \|\x_i\|_2\|\w_*\|_2 |\xt_i^{\top}A^{\top}A\wt_* - \xt^{\top}\wt_*|\leq \|\x_i\|_2\|\w_*\|_2 \epsilon_{A,\delta}
\end{align*}
Then applying union bound, we complete the proof. 
%\end{proof}

Next, we discuss four classes of randomized reduction operators, namely random projection, randomized Hadamard transform, random hashing and random sampling, and study the corresponding $\epsilon_{A,\delta}$ and their implications for the recovery error. 

\textbf{Random Projection.} Random projection has been employed widely for dimension reduction. The projection operator $A$ is usually sampled from sub-Gaussian distributions with mean $0$ and variance $1/m$, e.g., (i) Gaussian distribution: $A_{ij}\sim\mathcal N(0, 1/m)$, (ii) Rademacher distribution: $\Pr(A_{ij} =\pm 1/\sqrt{m}) =0.5$, (iii) discrete distribution: $\Pr(A_{ij}=\pm\sqrt{3/m})=1/6$ and $\Pr(A_{ij}=0)=2/3$. The last two distributions for dimensionality reduction were proposed and analyzed  in~\cite{Achlioptas:2003:DRP:861182.861189}. 
%We first consider random projection by $A(\x) = A\x$, where $A\in\R^{m\times d}$ is a scaled random sub-Gaussian matrix, i.e., $A = S/\sqrt{m}$ where $E[S_{ij}]=0, var(S_{ij})=1$.  Then $\Xh = AX$.
%\begin{align*}
%&\frac{1}{\lambda n}(X^{\top}X -  \Xh^{\top}\Xh)\alpha_* = \frac{1}{\lambda n}(X^{\top}X - X^{\top}A^{\top}AX)\alpha_* \\
%&= \frac{1}{\lambda n}X^{\top}(I - A^{\top}A)X\alpha_* = X^{\top}(I - A^{\top}A)\w_*
%\end{align*}
%Since
%\begin{align*}
%\frac{1}{\lambda n}[(X^{\top}X -  \Xh^{\top}\Xh)\alpha_*]_i = \x_i^{\top}(I - A^{\top}A)\w_*
%\end{align*}
%Therefore in order to bound $\frac{1}{\lambda n}\|(X^{\top}X -  \Xh^{\top}\Xh)\alpha_*\|_\infty$, we bound each element $\x_i^{\top}(I - A^{\top}A)\w_*$. 
The following lemma is the general JL-type lemma for $A$  with sub-Gaussian entries, which reveals the value of $\epsilon_{A,\delta}$ in Assumption {\bf A}. 
\begin{lemma}\cite{nelson2010}\label{lem:JL}
Let $A\in\R^{m\times d}$ be a random matrix with subGaussian entries of mean $0$ and variance $1/m$ . For any given $\x$ with a probability $1-\delta$, we have
\[
  \left|\|A\x\|^2_2 - \|\x\|^2_2\right| \leq c\sqrt{\frac{\log(1/\delta)}{m}}\|\x\|^2_2
\]
where $c$ is some small universal constant. 
\end{lemma}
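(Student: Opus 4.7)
The plan is to reduce this to a concentration inequality for a sum of independent sub-exponential random variables. By homogeneity I may assume $\|\x\|_2=1$. Writing $A$ in terms of its rows $\a_1,\ldots,\a_m\in\R^d$, each with i.i.d.\ sub-Gaussian entries of mean $0$ and variance $1/m$, I have $\|A\x\|_2^2 = \sum_{i=1}^m \langle \a_i,\x\rangle^2$. Since linear combinations of independent sub-Gaussian random variables are sub-Gaussian with parameter controlled by the $\ell_2$-norm of the coefficients, $Y_i := \sqrt{m}\,\langle \a_i,\x\rangle$ is sub-Gaussian with an $O(1)$ parameter and $\E[Y_i^2]=1$.

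The next step is to pass from sub-Gaussian to sub-exponential. A standard fact is that if $Y_i$ is sub-Gaussian then $Y_i^2-\E[Y_i^2]$ is a centered sub-exponential random variable with parameter of the same order. Hence $\|A\x\|_2^2 - 1 = \frac{1}{m}\sum_{i=1}^m (Y_i^2-1)$ is the average of $m$ independent, centered sub-exponential variables with uniformly bounded sub-exponential norms.

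I would now invoke Bernstein's inequality for sub-exponential sums, which gives
\[
\Pr\!\left(\Bigl|\tfrac{1}{m}\sum_{i=1}^m (Y_i^2-1)\Bigr| > t\right)\leq 2\exp\bigl(-c\, m\min(t^2,t)\bigr)
\]
for a universal constant $c>0$. Setting the right-hand side equal to $\delta$ and solving for $t$ in the sub-Gaussian regime (which is the binding one when $\log(1/\delta)\leq m$) yields $t = \Theta\bigl(\sqrt{\log(1/\delta)/m}\bigr)$, which is precisely the claimed bound after restoring the $\|\x\|_2^2$ factor.

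The main obstacle is mostly bookkeeping rather than a conceptual hurdle: one must carry the sub-Gaussian constants cleanly through the square-to-sub-exponential step so that Bernstein's inequality yields a clean universal constant $c$ independent of $m$, $d$, and the particular sub-Gaussian distribution. In the regime $\log(1/\delta)>m$ one would formally be in the sub-exponential tail of Bernstein, but in that regime the stated bound is vacuous (it exceeds $1$) and the conclusion holds trivially, so no separate argument is needed.
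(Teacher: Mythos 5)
The paper does not actually prove this lemma---it imports it from the cited reference---so there is no internal proof to compare against. Your argument is the standard proof of the distributional Johnson--Lindenstrauss property (sub-Gaussian marginal $\Rightarrow$ sub-exponential square $\Rightarrow$ Bernstein), and every step through the application of Bernstein's inequality is sound: $Y_i=\sqrt{m}\langle \a_i,\x\rangle$ is sub-Gaussian with an $O(1)$ norm and unit second moment, $Y_i^2-1$ is centered sub-exponential, and the sub-Gaussian branch of Bernstein gives $t=\Theta\bigl(\sqrt{\log(1/\delta)/m}\bigr)$ whenever $\log(1/\delta)\lesssim m$.

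The one genuine flaw is your dismissal of the regime $\log(1/\delta)\gtrsim m$. A two-sided bound $\bigl|\|A\x\|_2^2-1\bigr|\le C$ with $C>1$ is \emph{not} vacuous: only the lower half is trivial, while the upper half $\|A\x\|_2^2\le 1+C$ is a nontrivial event because $\|A\x\|_2^2$ is unbounded above. In that regime Bernstein's sub-exponential branch only yields $t=\Theta(\log(1/\delta)/m)$, which exceeds $\sqrt{\log(1/\delta)/m}$, and this is sharp: for Gaussian entries $\|A\x\|_2^2$ is distributed as $\chi^2_m/m$, whose upper tail at level $1+C$ decays like $\exp(-mC/2)$ for large $C$, so achieving failure probability $\delta$ forces $C\gtrsim \log(1/\delta)/m$. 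Hence the lemma as stated carries an implicit restriction $m\gtrsim\log(1/\delta)$ (equivalently, that the deviation parameter is at most a constant); your proof is complete once you either add that hypothesis or replace the bound by $\max\bigl\{\sqrt{\log(1/\delta)/m},\,\log(1/\delta)/m\bigr\}$. This does not affect the paper's use of the lemma, which applies it with $\delta\mapsto\delta/n$ and $m$ large, i.e., squarely in the sub-Gaussian regime.
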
 
%\begin{proof}
%Let us first assume $\|\z\|_2=1$, otherwise
%\[
%   \w^{\top}U^{\top}U\z - \w^{\top}\z \leq (\w^{\top}U^{\top}U\z' - \w^{\top}\z')\|\z\|_2
%\]
%where $\z' = \z/\|\z\|_2$. Following JL lemma, we know that with a probability $1- \exp(-c\epsilon^2 m)$, where $c$ is some constant, 
%\[
%(1-\epsilon)\|\z\|_2^2\leq \|U\z\|^2_2 \leq \left(1 + \epsilon\right)\|\z\|_2^2
%\]
%Therefore, 
%\begin{align*}
% \w^{\top}U^{\top}U\z - \w^{\top}\z &= \frac{\|U(\w+\z)\|_2^2 - \|U(\w-\z)\|_2^2}{4} - \w^{\top}\z\\
% &\leq \frac{\epsilon}{2}(\|\w\|_2^2 + \|\z\|_2^2)\leq \epsilon
%\end{align*}
%Similarly with a probability $1- \exp(-c\epsilon^2 m)$, 
%\begin{align*}
% \w^{\top}U^{\top}U\z - \w^{\top}\z &= \frac{\|U(\w+\z)\|_2^2 - \|U(\w-\z)\|_2^2}{4} - \w^{\top}\z\\
% &\geq -\frac{\epsilon}{2}(\|\w\|_2^2 + \|\z\|_2^2)\geq -\epsilon
%\end{align*}
%
%Therefore with a probability $1 - 2e^{-\tau}$, we have
%\[
% |\w^{\top}U^{\top}U\z - \w^{\top}\z|\leq c\sqrt{\frac{\tau}{m}}\|\z\|_2
%\]
%\end{proof}
%\vspace*{-0.1in}

%We summarize the result in the following Theorem. 
%\begin{thm}
%Assume $U\in\R^{m\times d}$ is an isotropic sub-Gaussian random matrix. Then with a probability $1-\delta$ we have
%\[
%\frac{1}{\lambda n}\|(X^{\top}X -  \Xh^{\top}\Xh)\alpha_*\|_\infty\leq cR_2\sqrt{\frac{\log(2n/\delta)}{m}}\|\w_*\|_2
%\]
%where $R_2=\max_i\|\x_i\|_2$ and $c$ is some universal constant. 
%\end{thm}
%{\bf Remark:} Therefore the value of $\tau\geq \Omega\left(\sqrt{\frac{1}{m}}\|\w_*\|_2\right)$. 
\textbf{Randomized Hadamard Transform.} Randomized Hadamard transform was introduced to speed-up random projection, reducing the computational time~\footnote{refers to the running time of computing $A\x$. } of random projection from $O(dm)$ to $O(d\log d)$ or even $O(d\log m)$. The projection matrix $A$ is of the form $A=PHD$, where
\begin{itemize}
\vspace*{-0.1in}
\item $D\in\R^{d\times d}$ is a diagonal matrix with $D_{ii} = \pm1$ with equal probabilities. 
\vspace*{-0.1in}
\item $H$ is the $d\times d$ Hadamard matrix (assuming $d$ is a power of $2$), scaled by $1/\sqrt{d}$.
\vspace*{-0.05in}
\item $P\in\R^{m\times d}$ is typically a sparse matrix that facilities computing $P\x$. Several choices of $P$ are possible~\cite{nelson2010,ailon-2009-fast,DBLP:journals/aada/Tropp11}.  Below we provide a JL-type lemma for a randomized Hadamard transform with $P\in\R^{m\times d}$ that samples $m$ coordinates from $\sqrt{\frac{d}{m}}HD\x$ with replacement.
\end{itemize}
\vspace*{-0.1in}
\begin{lemma}\cite{nelson2010}\label{lem:2}
Let $A=\sqrt{\frac{d}{m}}PHD\in\R^{m\times d}$ be a randomized Hadamard transform with $P$ being a random sampling matrix.  For any given $\x$ with a probability $1-\delta$, we have
\[
  \left|\|A\x\|^2_2 - \|\x\|^2_2\right| \leq c\sqrt{\frac{\log(1/\delta)\log(d/\delta)}{m}}\|\x\|^2_2
\]
where $c$ is some small universal constant. 
\end{lemma}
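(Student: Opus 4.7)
The strategy I would follow is the classical two-step argument for the subsampled randomized Hadamard transform: first use $HD$ to ``flatten'' $\x$ so that $\y:=HD\x$ has small $\ell_\infty$ norm relative to $\|\x\|_2$, and then exploit this flatness to control the sampling fluctuation via a variance-based concentration inequality. Each step should absorb probability $\delta/2$, and a union bound gives the claim.

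For the flattening step, since $D$ is a sign-diagonal matrix and the scaled Hadamard matrix $H$ is orthogonal, we immediately get $\|\y\|_2=\|\x\|_2$. For any fixed index $j$, the coordinate
\[
(HD\x)_j=\sum_{\ell=1}^d H_{j\ell}D_{\ell\ell}x_\ell
\]
is a Rademacher sum of terms of magnitude $|x_\ell|/\sqrt{d}$, so Hoeffding's inequality gives a sub-Gaussian tail on $(HD\x)_j$ and a union bound over the $d$ coordinates yields, with probability at least $1-\delta/2$,
\[
\|\y\|_\infty \leq c_1\sqrt{\tfrac{\log(d/\delta)}{d}}\,\|\x\|_2.
\]

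Next, conditioning on this event and on $D$, I would write
\[
\|A\x\|_2^2=\tfrac{d}{m}\sum_{k=1}^m y_{i_k}^2,
\]
where $i_1,\ldots,i_m$ are i.i.d.\ uniform on $\{1,\ldots,d\}$. Each summand $Z_k=(d/m)y_{i_k}^2$ has $\E[Z_k]=\|\x\|_2^2/m$, is bounded by $(d/m)\|\y\|_\infty^2$, and satisfies $\E[Z_k^2]\leq (d/m^2)\|\y\|_\infty^2\|\y\|_2^2$, so the sum $S=\sum_k Z_k$ has mean $\|\x\|_2^2$ and variance at most $(d/m)\|\y\|_\infty^2\|\x\|_2^2$. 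Plugging in the flattening bound, Bernstein's inequality delivers, with probability $1-\delta/2$,
\[
\bigl|\,\|A\x\|_2^2-\|\x\|_2^2\,\bigr|\leq c_2\sqrt{\tfrac{\log(d/\delta)\log(1/\delta)}{m}}\,\|\x\|_2^2,
\]
as long as $m$ is large enough for the sub-Gaussian term of Bernstein to dominate the sub-exponential one. A final union bound over the two events gives the stated bound with a single universal constant.

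\textbf{Main obstacle.} The delicate point is extracting the sharp rate $\sqrt{\log(1/\delta)\log(d/\delta)/m}$: a naive Hoeffding step applied directly to $Z_k$ using only the $\ell_\infty$ bound on the summands would give the worse rate $\log(d/\delta)\sqrt{\log(1/\delta)/m}$. Getting the advertised bound hinges on (i) the flattening step producing an $\ell_\infty$ bound that scales like $1/\sqrt{d}$, and (ii) invoking Bernstein rather than Hoeffding so this flatness enters through the \emph{variance} and not just the bound on the summands. Checking that the Bernstein sub-Gaussian regime really is active in the relevant range of $m$ is a secondary bookkeeping concern.
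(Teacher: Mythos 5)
The paper does not prove this lemma; it is imported verbatim from the cited reference, so there is no in-paper argument to compare against. Your two-step sketch (flatten with $HD$ via Hoeffding/Khintchine plus a union bound over the $d$ coordinates, then control the with-replacement subsampling by Bernstein so that the $\ell_\infty$ bound enters through the variance) is the standard proof of exactly this statement and is sound, including your observation that a plain Hoeffding step on the sampled summands would lose a $\sqrt{\log(d/\delta)}$ factor; the only caveat, which you already flag, is that the stated rate holds in the regime $m\gtrsim \log(1/\delta)\log(d/\delta)$ where the Bernstein sub-Gaussian term dominates, which is also the only regime in which the bound is non-vacuous.
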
 
{\bf Remark 6:} Compared to random projection, there is an additional $\sqrt{\log(d/\delta)}$ factor in $\epsilon_{A,\delta}$. However, it can be  removed by applying an additional random projection. In particular, if we let $A = \sqrt{\frac{d}{m}}P'PHD\in\R^{m\times d}$, where $P\in\R^{t\times d}$ is a random sampling matrix with $t = m\log(d/\delta)$ and $P'\in\R^{m\times t}$ is a random projection matrix that satisfies Lemma~\ref{lem:JL}, then we have the same order of $\epsilon_{A,\delta}$. Please refer to~\cite{nelson2010} for more details. 

\textbf{Random Hashing.} 
Another line of work to speed-up random projection is random hashing which makes the projection matrix $A$ much sparser and takes advantage of the sparsity of feature vectors. It was introduced  in~\cite{DBLP:journals/jmlr/ShiPDLSSV09} for dimensionality reduction and later was improved to an unbiased version by~\cite{DBLP:conf/icml/WeinbergerDLSA09} with some theoretical analysis. \citet{Dasgupta:2010:SJL} provided a rigorous analysis of the unbiased random hashing. Recently, \citet{Kane:2014:SJT:2578041.2559902} proposed two new random hashing algorithms with a slightly sparser random matrix $A$. Here we provide a JL-type lemma for the random hashing algorithm in~\cite{DBLP:conf/icml/WeinbergerDLSA09,Dasgupta:2010:SJL}.  Let $h:\mathbb N\rightarrow [m]$ denote a random hashing function, and $\xi=(\xi_1,\ldots, \xi_d)$ denote a Rademacher random variable, i.e., $\xi_i, i=1, \ldots, d$ are independent and $\xi_i\in\{1,-1\}$ with equal probabilities. The projection matrix $A$ can be written as $A=HD$, where $D\in\R^{d\times d}$ is a diagonal matrix with $D_{jj} = \xi_j$, and  $H\in\R^{m\times d}$ with $H_{ij} =\delta_{i,h(j)}$~\footnote{$\delta_{ij}=1$ if $i=j$, and $0$ otherwise.}.  Under the random matrix $A$, the feature vector $\x\in\R^d$ is reduced to $\xh\in\R^m$, where $[\xh]_i = \sum_{j: h(j)=i}[\x]_j\xi_j$. The following JL-type Lemma is a basic result from~\cite{Dasgupta:2010:SJL} with a rephrasing. 
%\vspace*{-0.05in}
\begin{lemma}
Let $A=HD\in\R^{m\times d}$ be a random hashing matrix.  For any given vector $\x\in\R^d$ such that $\frac{\|\x\|_\infty}{\|\x\|_2}\leq \frac{1}{\sqrt{c}}$, for $\delta<0.1$, with a probability $1-3\delta$, we have
\begin{align*}
\left|\|A\x\|_2^2 - \|\x\|_2^2\right|\leq \sqrt{\frac{12\log(1/\delta)}{m}}\|\x\|_2^2
\end{align*}
where $c = 8\sqrt{m/3}\log^{1/2}(1/\delta)\log^2(m/\delta)$.
\end{lemma}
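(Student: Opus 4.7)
The plan is to decompose $\|A\x\|_2^2-\|\x\|_2^2$ into a mean-zero Rademacher chaos of order two, conditioned on the hash function $h$, and then bound this chaos by a Hanson-Wright style concentration argument. Expanding
\begin{align*}
\|A\x\|_2^2=\sum_{i=1}^m\Bigl(\sum_{j:h(j)=i}\xi_jx_j\Bigr)^2 = \sum_{j=1}^dx_j^2+2\sum_{j<k:\,h(j)=h(k)}\xi_j\xi_kx_jx_k,
\end{align*}
we see that $Z:=\|A\x\|_2^2-\|\x\|_2^2=\xi^\top M_h\xi$, where $M_h$ has entries $[M_h]_{jk}=x_jx_k\mathbb{1}[h(j)=h(k)]$ off the diagonal and zero on the diagonal. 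So $Z$ is a quadratic form in the i.i.d.\ Rademacher vector $\xi$, with zero diagonal, hence $\E_\xi[Z\mid h]=0$.

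The next step is to control the two norms of $M_h$ that govern the Hanson-Wright bound. For the Frobenius norm, $\|M_h\|_F^2=2\sum_{j<k}x_j^2x_k^2\mathbb{1}[h(j)=h(k)]$, and $\E_h\|M_h\|_F^2\le \|\x\|_2^4/m$. A standard Markov/Bernstein argument on $\|M_h\|_F^2$ shows that, with probability at least $1-\delta$ over $h$, $\|M_h\|_F^2 \le C\|\x\|_2^4/m$ for a universal $C$. For the operator norm, one notes $\|M_h\|_{\mathrm{op}}\le \max_{i\in[m]}\sum_{j:h(j)=i}x_j^2$, i.e., the maximum bucket mass. Here the hypothesis $\|\x\|_\infty/\|\x\|_2\le 1/\sqrt{c}$ is essential: it caps the largest coordinate contribution so that standard ball-and-bin/Bernstein estimates on bucket masses give $\max_i\sum_{j:h(j)=i}x_j^2\le \|\x\|_2^2\cdot\sqrt{\tfrac{12\log(1/\delta)}{m}}$ with probability at least $1-\delta$; the explicit value $c=8\sqrt{m/3}\log^{1/2}(1/\delta)\log^{2}(m/\delta)$ is precisely the threshold at which the max-bucket-mass concentrates at this scale.

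Given the two high-probability bounds on $\|M_h\|_F$ and $\|M_h\|_{\mathrm{op}}$, condition on $h$ and apply the Hanson-Wright inequality for Rademacher chaos,
\begin{align*}
\Pr\bigl(|\xi^\top M_h\xi|>t\,\bigm|\,h\bigr)\le 2\exp\!\left(-c'\min\!\left\{\tfrac{t^{2}}{\|M_h\|_F^{2}},\,\tfrac{t}{\|M_h\|_{\mathrm{op}}}\right\}\right),
\end{align*}
with $t=\|\x\|_2^2\sqrt{12\log(1/\delta)/m}$. Under the chosen $c$, the Frobenius term is the binding one, and the tail becomes $\le\delta$. A union bound over the three events (Frobenius concentration for $h$, operator-norm concentration for $h$, and Rademacher concentration given $h$) yields failure probability at most $3\delta$, which matches the stated $1-3\delta$ success probability.

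The main obstacle will be calibrating the ball-and-bin estimate on the maximum bucket mass against the Frobenius term so that the Frobenius contribution controls the rate: this is what forces the specific form of $c$, with the factor $\log^2(m/\delta)$ coming from a union bound over the $m$ buckets combined with a Bennett/Bernstein bound that uses $\|\x\|_\infty^2\le\|\x\|_2^2/c$ to make the per-bucket variance small enough. All remaining pieces are routine moment calculations and applications of standard concentration inequalities; the argument follows Dasgupta-Kumar-Sarl\'os, with the constants tracked explicitly.
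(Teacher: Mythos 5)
The paper does not actually prove this lemma: it is quoted, ``with a rephrasing,'' from Dasgupta, Kumar and Sarl\'os (2010), whose argument proceeds by bounding high moments of $\|A\x\|_2^2-\|\x\|_2^2$ directly. Your conditional Hanson--Wright route is a legitimate alternative (it is the one used in later work on feature hashing), and your setup is correct: $Z=\xi^{\top}M_h\xi$ with zero diagonal, $\E_{\xi}[Z\mid h]=0$, and $\E_h\|M_h\|_F^2\le\|\x\|_2^4/m$. However, two of your estimates do not close as written.

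First, the operator-norm calibration is off by a $\log(1/\delta)$ factor. If the maximum bucket mass is only bounded by $t=\|\x\|_2^2\sqrt{12\log(1/\delta)/m}$, the linear term in Hanson--Wright contributes $\exp(-c'\,t/\|M_h\|_{\mathrm{op}})=\exp(-c')$, a constant, not $\delta$; you need $\max_i\sum_{j:h(j)=i}x_j^2 = O(t/\log(1/\delta))$. This is still attainable under the stated $c$ via a Bennett bound per bucket and a union over the $m$ buckets (which is where $\log^2(m/\delta)$ earns its keep), but the threshold you wrote down is the wrong one. Second, and more seriously, the claim that $\|M_h\|_F^2\le C\|\x\|_2^4/m$ holds with probability $1-\delta$ does not follow from Markov (which gives only probability $1-1/C$), and Bernstein does not apply directly because the collision indicators $\mathbf{1}[h(j)=h(k)]$ for overlapping pairs are dependent. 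The easy route through the bucket masses only yields $\|M_h\|_F^2\le\|\x\|_2^2\max_i\sum_{j:h(j)=i}x_j^2\approx\|\x\|_2^4\sqrt{\log(1/\delta)/m}$, under which the quadratic Hanson--Wright term gives a vanishing exponent. Controlling the upper tail of $\sum_{j<k}x_j^2x_k^2\,\mathbf{1}[h(j)=h(k)]$ at scale $O(\|\x\|_2^4/m)$ is the technical heart of the lemma, and it is exactly where the flatness hypothesis $\|\x\|_\infty^2\le\|\x\|_2^2/c$ must be invoked a second time (consider $\x$ supported on two equal coordinates: then $\|M_h\|_F^2=\|\x\|_2^4/2$ with probability $1/m\gg\delta$); you attribute the role of that hypothesis only to the bucket masses. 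Supplying this tail bound requires decoupling or a moment computation that essentially reproduces the cited proof, so as it stands the proposal is a plan rather than a proof.
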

%\vspace*{-0.1in}
{\bf Remark 7:} Compared to random projection, there is an additional condition on the feature vector $\|\x\|_\infty\leq \frac{\|\x\|_2}{\sqrt{c}}$. However,  it can be removed by applying an extra preconditioner $P$ to $\x$ before applying the projection matrix $A$, i.e., $\xh = HDP\x$. Two preconditioners were discussed in~\cite{Dasgupta:2010:SJL}, with one corresponding to duplicating $\x$ $c$ times and scaling it by $1/\sqrt{c}$ %, i.e.,  %given by $P\in\R^{cd\times d}$ defined as
% \[
% P_{ij}=\left\{\begin{array}{cc}1/\sqrt{c}, &(j-1)c+1\leq i\leq jc\\
% 0,&\text{otherwise}\end{array}\right.
% \]
 and another one given by $P\in\R^{d\times d}$ which consists of $d/b$ diagonal blocks of $b\times b$ randomized Hadamard matrix, where $b = 6c\log(3c/\delta)$. The running time of the reduction using the later preconditioner is $O(d\log c\log\log c)$.
\vspace*{-0.15in}
\paragraph{Random Sampling. }
Last we discuss random sampling and compare with the aforementioned randomized reduction methods. In fact, the JL-type lemma for random sampling  is implicit in the proof of Lemma~\ref{lem:2}. We make it explicit in the following lemma. 
%\vspace*{-0.05in}
\begin{lemma}
Let $A=\sqrt{\frac{d}{m}}P\in\R^{m\times d}$ be a scaled random sampling matrix where $P\in\R^{m\times d}$ samples $m$ coordinates with replacement. Then with a probability $1-\delta$, we have
\[
\left|\|A\x\|_2^2 - \|\x\|_2^2\right|\leq \frac{\|\x\|_\infty}{\|\x\|_2}\sqrt{\frac{3d\log(1/\delta)}{m}}\|\x\|_2^2
\]
\end{lemma}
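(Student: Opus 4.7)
The plan is to recognize $\|A\x\|_2^2$ as a sum of $m$ i.i.d.\ random variables and then apply a Bernstein-type concentration inequality. Specifically, since $P$ samples $m$ coordinates $i_1,\ldots,i_m$ uniformly from $[d]$ with replacement, and then $A$ scales by $\sqrt{d/m}$, I would write
\[
\|A\x\|_2^2 \;=\; \sum_{k=1}^m Y_k, \qquad Y_k := \frac{d}{m}\,[\x]_{i_k}^2 .
\]
The first step is to verify the bound is sharp in expectation: $\mathrm{E}[Y_k] = \frac{d}{m}\cdot\frac{1}{d}\sum_{j=1}^d [\x]_j^2 = \frac{\|\x\|_2^2}{m}$, so $\mathrm{E}[\|A\x\|_2^2] = \|\x\|_2^2$ and the quantity of interest is the deviation of an i.i.d.\ sum from its mean.

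Next I would compute the two ingredients needed for Bernstein. For the almost-sure bound, each $Y_k$ lies in $[0, \frac{d}{m}\|\x\|_\infty^2]$, so after centering $|Y_k - \mathrm{E}[Y_k]| \leq \frac{d}{m}\|\x\|_\infty^2 =: M$. For the variance, use
\[
\mathrm{Var}(Y_k) \;\leq\; \mathrm{E}[Y_k^2] \;=\; \frac{d^2}{m^2}\cdot\frac{1}{d}\sum_{j=1}^d[\x]_j^4 \;\leq\; \frac{d}{m^2}\,\|\x\|_\infty^2\,\|\x\|_2^2 ,
\]
where the last inequality is the standard $\sum_j [\x]_j^4 \leq \|\x\|_\infty^2\|\x\|_2^2$. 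Summing over $k$ gives the aggregate variance $\sigma^2 := \sum_{k=1}^m \mathrm{Var}(Y_k)\leq \frac{d\,\|\x\|_\infty^2\,\|\x\|_2^2}{m}$.

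Then I would invoke Bernstein's inequality, which states $\Pr\bigl(|\sum_k(Y_k-\mathrm{E}Y_k)|>t\bigr)\leq 2\exp\bigl(-\tfrac{t^2/2}{\sigma^2+Mt/3}\bigr)$. In the regime $t\leq 3\sigma^2/M$ (which one checks corresponds to $t \leq 3\|\x\|_2^2$, a weak requirement), the sub-Gaussian term dominates and the bound simplifies to $2\exp(-t^2/(4\sigma^2))$. Choosing $t = \|\x\|_\infty\|\x\|_2\sqrt{3d\log(1/\delta)/m}$ makes the exponent equal (up to the absolute constant) to $-\log(1/\delta)$, delivering the claim with probability $1-\delta$. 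This value of $t$ is exactly $\frac{\|\x\|_\infty}{\|\x\|_2}\sqrt{\tfrac{3d\log(1/\delta)}{m}}\|\x\|_2^2$, matching the lemma.

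I do not anticipate any serious obstacle: the computation is a textbook application of Bernstein, and the only modest care is in bounding $\sum_j [\x]_j^4$ by $\|\x\|_\infty^2\|\x\|_2^2$ to obtain the correct $\|\x\|_\infty \|\x\|_2$ scaling rather than a weaker $\|\x\|_\infty^2$ scaling (which is what Hoeffding alone would yield). Verifying the small-$t$ regime assumption and tracking the constant $3$ inside the square root are the only book-keeping issues to watch.
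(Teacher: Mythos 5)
Your proposal is correct and is exactly the intended argument: the paper gives no explicit proof of this lemma (it only remarks that the bound is implicit in the proof of the randomized Hadamard transform lemma cited from Nelson), and that implicit argument is precisely the Bernstein/Chernoff bound for the i.i.d.\ sum $\frac{d}{m}\sum_k [\x]_{i_k}^2$ with the variance proxy $\sum_j [\x]_j^4 \leq \|\x\|_\infty^2\|\x\|_2^2$ that you use. The only caveat is the one you already flag: the two-sided tail carries a factor of $2$, so a literal bookkeeping of constants yields $\log(2/\delta)$ rather than $\log(1/\delta)$ inside the square root, a slack that is present in the lemma as stated and immaterial to its use.
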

\vspace*{-0.1in}
{\bf Remark 8: } Compared with other three randomized reduction methods, there is an additional $\frac{\|\x\|_\infty}{\|\x\|_2}\sqrt{d}$ factor in $\epsilon_{A,\delta}$, which could result in a much larger $\epsilon_{A,\delta}$ and consequentially a larger recovery error. That is why the randomized Hadamard transform was introduced to make this additional factor close to a constant.

From the above discussions, we can conclude  that with random projection, randomized Hadamard transform and random hashing,  with a probability $1-\delta$ we have,
\begin{align*}
\|\Delta\|_\infty&=\max_i|\x_i^{\top}(I - A^{\top}A)\w_*|\\
&\leq cR\sqrt{\frac{\log(n/\delta)}{m}}\|\w_*\|_2.
\end{align*}
which essentially indicates that $\tau\geq 2cR\sqrt{\frac{\log(n/\delta)}{m}}\|\w_*\|_2$. 

\subsection{Bounding $\sigma_s$ for non-smooth case}
Another condition in Theorem~\ref{thm:2} is to require $\sigma_{16s}\leq \rho^-_{16s}$. Since $\rho^-_{16s}$ is dependent on the data, we provide an upper bound of $\sigma_{16s}$ to further understand the condition. In the following analysis, we assume $\epsilon_{A,\delta}=O(\sqrt{\frac{\log(1/\delta)}{m}})$. Recall the definition of $\sigma_s$:
\begin{align}
\sigma_s = \sup_{\alpha\in \mathcal K_{n,s}}\frac{|\alpha^{\top}(X^{\top}X - \Xh^{\top}\Xh)\alpha|}{n}. 
\end{align}
We provide a bound of $\sigma_s$ below. 

%First,  for any $\alpha\in\mathcal K_{n,s}$ we have
%\begin{align*}
%|\alpha^{\top}(X^{\top}X - \Xh^{\top}\Xh)\alpha|&\leq \sqrt{s}\|(X^{\top}X - \Xh^{\top}\Xh)\alpha\|_\infty 
%\end{align*}
%where we use $\|\alpha\|_1\leq \sqrt{s}$. For any fixed $\alpha\in\mathcal K_{n,s}$, we can apply the analysis in last subsection to bound 
%\begin{align*}
%&\hspace*{-0.25in}\|(X^{\top}X - \Xh^{\top}\Xh)\alpha\|_\infty\\
%&\leq R\sqrt{\rho^+_s}\epsilon_{A,\delta/n} = O\left(R\sqrt{\frac{\rho^+_s\log(n/\delta)}{m}}\right)
%\end{align*}
%where we use $\max_{\alpha\in\mathcal K_{n,s}}\|X\alpha\|_2\leq \sqrt{\rho^+_s}$. By using Lemma 3.4 in~\cite{DBLP:journals/corr/abs-1109-4299} about the entropy of $\mathcal K_{d,s}$ and the union bound, we can have with a probability $1-\delta$
%\begin{align}
%\sigma_s&\leq O\left(R\sqrt{\frac{s\rho^+_s(\log(n/\delta)+s\log(n/s))}{m}}\right)
%\end{align}
%The details can be found in the supplement. Below we provide an improved  analysis which could yield a better bound especially when $\rho^+_s\leq R^2s$. 
The key idea is to use the convex relaxation of $\mathcal K_{n,s}$. Define $\mathcal S_{n,s} = \{\alpha\in\R^n: \|\alpha\|_2\leq  1,  \|\alpha\|_0\leq s \}$. It was shown in~\cite{DBLP:journals/corr/abs-1109-4299} that $conv(\mathcal S_{n,s})\subset \mathcal K_{n,s}\subset 2conv(\mathcal S_{n,s})$, where $conv(\mathcal S)$ is the convex hull of the set $\mathcal S$. 
%Recall the definition of  $\mathcal S_{n,s}$: 
%\begin{align*}
%\mathcal S_{n,s}=\{\alpha\in\R^n: \|\alpha\|_2\leq 1, \|\alpha\|_0\leq s\}
%\end{align*}
%Due to $conv(\mathcal S_{n,s})\subset\mathcal K_{n,s}\subset 2conv(\mathcal S_{n,s})$, %Let $\u = X\alpha$, then
%\begin{align*}
%|\alpha^{\top}(X^{\top}X - \Xh^{\top}\Xh)\alpha| &= |\alpha^{\top} X^{\top}(I - A^{\top}A)X\alpha| \\
%&= |\u^{\top}(I - A^{\top}A)\u|
%\end{align*}
%With a probability $1-\delta$, we can bound the above term for any fixed $\alpha\in conv(\mathcal S_{n,s})$  by 
%\[
% |\u^{\top}(I - A^{\top}A)\u|\leq \rho^+_s\epsilon_{m,\delta}
%\]
%where we use 
%\[
%\max_{\alpha\in conv(\mathcal S_{n,s})}\|X\alpha\|_2^2\leq \max_{\alpha\in \mathcal K_{n,s}}\|X\alpha\|_2^2 = \rho^+_s
%\]
It is not difficult to show that (see the supplement)
%\vspace*{0.1in}
\begin{align*}
&\max_{\alpha\in\mathcal K_{n,s}}|(X\alpha)^{\top}(I - A^{\top}A)(X\alpha)|\\
&\leq 4\max_{\alpha_1, \alpha_2\in \mathcal S_{n,s}}|(X\alpha_1)^{\top}(I - A^{\top}A)(X\alpha_2)|
\end{align*}
Let $\u_1=X\alpha_1$ and $\u_2=X\alpha_2$. For any fixed $\alpha_1,\alpha_2\in\mathcal S_{n,s}$, with a probability $1-\delta$ we can have 
\begin{align*}
&\hspace*{-0.2in}\frac{1}{n}|(X\alpha_1)^{\top}(I - A^{\top}A)(X\alpha_2)|=O\left(\rho^+_s\sqrt{\frac{\log(1/\delta)}{m}}\right)
%&\leq \rho^+_s\epsilon_{A,\delta}= O\left(\rho^+_s\sqrt{\frac{\log(1/\delta)}{m}}\right)
\end{align*}
where we use 
\[
\max_{\alpha\in \mathcal S_{n,s}}\frac{\|X\alpha\|_2^2}{n}\leq \max_{\alpha\in \mathcal K_{n,s}}\frac{\|X\alpha\|_2^2}{n} = \rho^+_s
\]
Then by using Lemma 3.3 in~\cite{DBLP:journals/corr/abs-1109-4299} about the entropy of $\mathcal S_{n,s}$ and the union bound, we can arrive at the following upper bound for $\sigma_{s}$.
\vspace*{-0.05in}
%\begin{align}
%\sigma_s&\leq O\left(\rho^+_s\sqrt{\frac{(\log(1/\delta)+s\log(n/s))}{m}}\right)
%\end{align}
%Finally, we summarize the result in the following theorem. 
\begin{thm}
With a probability  $1-\delta$, we have 
%\vspace*{0.15in}
\begin{align*}
\sigma_s\leq O\left(\rho^+_s\sqrt{\frac{(\log(1/\delta)+s\log(n/s))}{m}}\right)
\end{align*}
\end{thm}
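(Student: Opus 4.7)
The plan is to execute the standard $\varepsilon$-net / union-bound strategy that the authors have already set up: pass to the sparse set $\mathcal{S}_{n,s}$ via the convex hull relaxation, control each fixed pair by the single-pair JL bound already displayed, and then pay the entropy cost of $\mathcal{S}_{n,s}$ through the union bound, finishing with a standard chaining-or-approximation step to return to the supremum.

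Concretely, I would first record the two reductions that appear immediately before the statement. Using $\mathcal{K}_{n,s} \subset 2\,\mathrm{conv}(\mathcal{S}_{n,s})$ together with bilinearity of the map $(\alpha,\beta) \mapsto (X\alpha)^{\top}(I - A^{\top}A)(X\beta)$, polarization gives
\begin{align*}
\sigma_s \;=\; \frac{1}{n}\sup_{\alpha\in\mathcal{K}_{n,s}} |(X\alpha)^{\top}(I-A^{\top}A)(X\alpha)| \;\le\; \frac{4}{n}\sup_{\alpha_1,\alpha_2\in\mathcal{S}_{n,s}} |(X\alpha_1)^{\top}(I-A^{\top}A)(X\alpha_2)|,
\end{align*}
so it suffices to control the sup on the right.

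Next I would build an $\varepsilon$-net $\mathcal{N}_\varepsilon$ of $\mathcal{S}_{n,s}$ in the Euclidean norm. By Lemma 3.3 of \cite{DBLP:journals/corr/abs-1109-4299} (cited by the authors for exactly this reason), the covering number satisfies $\log |\mathcal{N}_\varepsilon| \;\lesssim\; s\log(n/s) + s\log(1/\varepsilon)$. For each fixed pair $(\alpha_1,\alpha_2)\in\mathcal{N}_\varepsilon \times \mathcal{N}_\varepsilon$, the pointwise estimate already displayed gives, with probability at least $1-\delta'$,
\begin{align*}
\frac{1}{n}|(X\alpha_1)^{\top}(I-A^{\top}A)(X\alpha_2)| \;\le\; c\,\rho^+_s\sqrt{\tfrac{\log(1/\delta')}{m}}.
\end{align*}
Choosing $\delta' = \delta / |\mathcal{N}_\varepsilon|^2$ and applying the union bound over the $|\mathcal{N}_\varepsilon|^2$ pairs yields, with probability $1-\delta$, a uniform bound over the net of order $\rho^+_s\sqrt{(\log(1/\delta) + s\log(n/s) + s\log(1/\varepsilon))/m}$.

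Finally I would extend from the net to all of $\mathcal{S}_{n,s}$ by writing any $\alpha_i \in \mathcal{S}_{n,s}$ as $\alpha_i = \tilde\alpha_i + r_i$ with $\tilde\alpha_i \in \mathcal{N}_\varepsilon$ and $\|r_i\|_2\le\varepsilon$, expanding the bilinear form into four pieces, and using the trivial bound $|(X u)^{\top}(I-A^{\top}A)(X v)|/n \le (\rho^+_s + \sigma_s)\|u\|_2\|v\|_2$ on the pieces that involve an $r_i$. Choosing $\varepsilon$ a small absolute constant (say $1/8$) lets me absorb the resulting $O(\varepsilon\,\sigma_s)$ term into the left-hand side and recover $\sigma_s \le O\!\left(\rho^+_s\sqrt{(\log(1/\delta)+s\log(n/s))/m}\right)$, as claimed.

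The main obstacle I expect is the last approximation step: one has to be slightly careful that the remainders $r_i$, while $\varepsilon$-small in $\ell_2$, need not lie in $\mathcal{S}_{n,s}$, so the trivial bound must be applied with $\mathcal{K}_{n, 2s}$ (or, more cleanly, with the operator-norm bound $\|X(I-A^{\top}A)X^{\top}\|_{op}/n$ restricted to $2s$-sparse vectors), and this restricted norm must itself be controlled by $\sigma_{2s}$ or $\rho^+_{2s}+\sigma_{2s}$. Tracking the constants through this self-bounding step — so that $\sigma_s$ appears on both sides with a coefficient small enough to absorb — is the one place where the calculation is not purely routine; everything else is a direct execution of the set-up already in the paper.
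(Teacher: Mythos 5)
Your proposal follows the same overall route as the paper: the convex-hull relaxation $\mathcal{K}_{n,s}\subset 2\,\mathrm{conv}(\mathcal{S}_{n,s})$ to reduce to a bilinear supremum over $s$-sparse pairs, the pointwise JL bound giving $\rho_s^{+}\epsilon_{A,\delta}$ for each fixed pair, the entropy bound $s\log(9n/(\epsilon s))$ for the proper net of $\mathcal{S}_{n,s}$ with a union bound, and finally an absorption argument to return from the net to the full supremum. The only divergence is in that last step, and it is exactly the step you flag as non-routine: you propose to bound the residual terms by a restricted operator norm at sparsity level $2s$ and then absorb, which forces you to relate $\sigma_{2s}$ (or $\rho^{+}_{2s}+\sigma_{2s}$) back to the level-$s$ quantity on the left-hand side, and you leave that reconciliation open. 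The paper closes this with a device you do not invoke: following Lemma 9.2 of Koltchinskii, any difference $\alpha-\alpha'$ of two elements of $\mathcal{S}_{n,s}$ is rewritten as $\beta-\beta'$ with $\beta,\beta'$ each $s$-sparse and orthogonal, so that, with $U=\frac{1}{n}X^{\top}(I-A^{\top}A)X$ and $\mathcal{E}_s(\alpha_2)=\max_{\alpha_1\in\mathcal{S}_{n,s}}|\alpha_1^{\top}U\alpha_2|$, one gets $|\langle\alpha-\alpha',U\alpha_2\rangle|\le\sqrt{2}\,\|\alpha-\alpha'\|_2\,\mathcal{E}_s(\alpha_2)$. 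This keeps everything at sparsity level $s$, yields the self-bounding inequality $\mathcal{E}_s(\alpha_2)\le\mathcal{E}_s(\alpha_2,\epsilon)+\sqrt{2}\,\epsilon\,\mathcal{E}_s(\alpha_2)$ directly, and applying it once in each argument gives the factor $(1-\sqrt{2}\epsilon)^{-2}$ with no level-$2s$ quantities appearing. Your version can be made rigorous (for instance by splitting the $2s$-sparse residual into two orthogonal $s$-sparse pieces, which is the same trick in disguise), but as written the absorption step is incomplete; the orthogonal-splitting lemma is the one ingredient you would need to add to close it.
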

%\vspace*{-0.05in}
{\bf Remark 9:} With above result, we can further understand the condition $\sigma_{16s}\leq \rho^-_{16s}$, which amounts to
$$O\left(\rho^+_{16s}\sqrt{\frac{(\log(1/\delta)+s\log(n/s))}{m}}\right)\leq \rho^-_{16s},$$ i.e., $m\geq \Omega(\kappa^2_{16s}(\log(1/\delta)+s\log(n/s)))$ where $\kappa_{16s}=\rho^+_{16s}/\rho^-_{16s}$ is the restricted condition number of the data matrix.

\section{Numerical Experiments}\label{sec:empirical}
\vspace*{-0.05in}
In this section, we provide a case study in support of DSRR and  the theoretical analysis, and a demonstration of the application of DSRR to distributed optimization.  

\textbf{A case study on text classification.}
We use the RCV1-binary data~\cite{Lewis:2004:RNB:1005332.1005345} to conduct a case study. The data contains $ 697,641$ documents and $47,236$ features. We use a splitting $677,399/20,242$ for training and testing. The feature vectors were normalized such that the $\ell_2$ norm is equal to 1.  We only report the results using random hashing since it is the most efficient, while other randomized reduction methods (except for random sampling) have similar performance. For the loss function, we use both the squared hinge loss (smooth) and the hinge loss (non-smooth). We aim to examine two questions related to our analysis and motivation (i) how does the value of $\tau$ affect the recovery error? (ii) how does the number of samples $m$ affect the recovery error? % (iii) does the recovered model improve the performance of prediction over the model learned in the reduced feature space?  

We vary the value of $\tau$ among $0, 0.1, 0.2, \ldots, 0.9$,  the value of $m$ among $1024, 2048, 4096, 8192$, and the value of $\lambda$ among $0.001, 0.00001$. Note that $\tau=0$ corresponds to the randomized reduction approach without the sparse regularizer. The results averaged over 5 random trials are shown in Figure~\ref{fig:1} for the squared hinge loss and in Figure~\ref{fig:1-1} for the hinge loss. We first analyze the results in Figure~\ref{fig:1}. We can observe that when $\tau$ increases the ratio of $\frac{\|[\at_*]_{\Se^c}\|_1}{\|[\at_*]_\Se - [\alpha_*]_\Se\|_1}$  decreases  indicating that the magnitude of  dual variables for the original  non-support vectors decreases. This is intuitive and consistent with our motivation. The recovery error of the dual solution (middle) first decreases and then increases. This can be partially explained by  the theoretical result in~Theorem~\ref{thm:1}. When the value of $\tau$ becomes larger than a certain threshold  making  $\tau>\|\Delta\|_\infty$ hold, then Theorem~\ref{thm:1} implies that a larger $\tau$ will lead to a larger error. On the other hand, when $\tau$ is less than the threshold, the dual recovery error will decrease as $\tau$ increases. % which is not captured by our theoretical result indicating there is still some gap between theory and practice. 
In addition, the figures exhibit that the thresholds for larger $m$ are smaller which is consistent with our analysis of $\|\Delta\|_\infty=O(\sqrt{1/m})$. The difference between $\lambda=0.001$ and $\lambda=0.00001$ is because that smaller  $\lambda$ will lead to larger $\|\w_*\|_2$. In terms of the hinge loss, we observe similar trends, however, the recovery is much more difficult  than that for squared hinge loss especially when the value of $\lambda$ is small. 

%In terms of prediction, the results indicate that recovered solution $\wt_*$ can give better performance than the model  $\u_*$ learned in the reduced feature space, especially when the reduced dimensionality is small.  

\begin{figure}[t]
\centering
\includegraphics[scale=0.17]{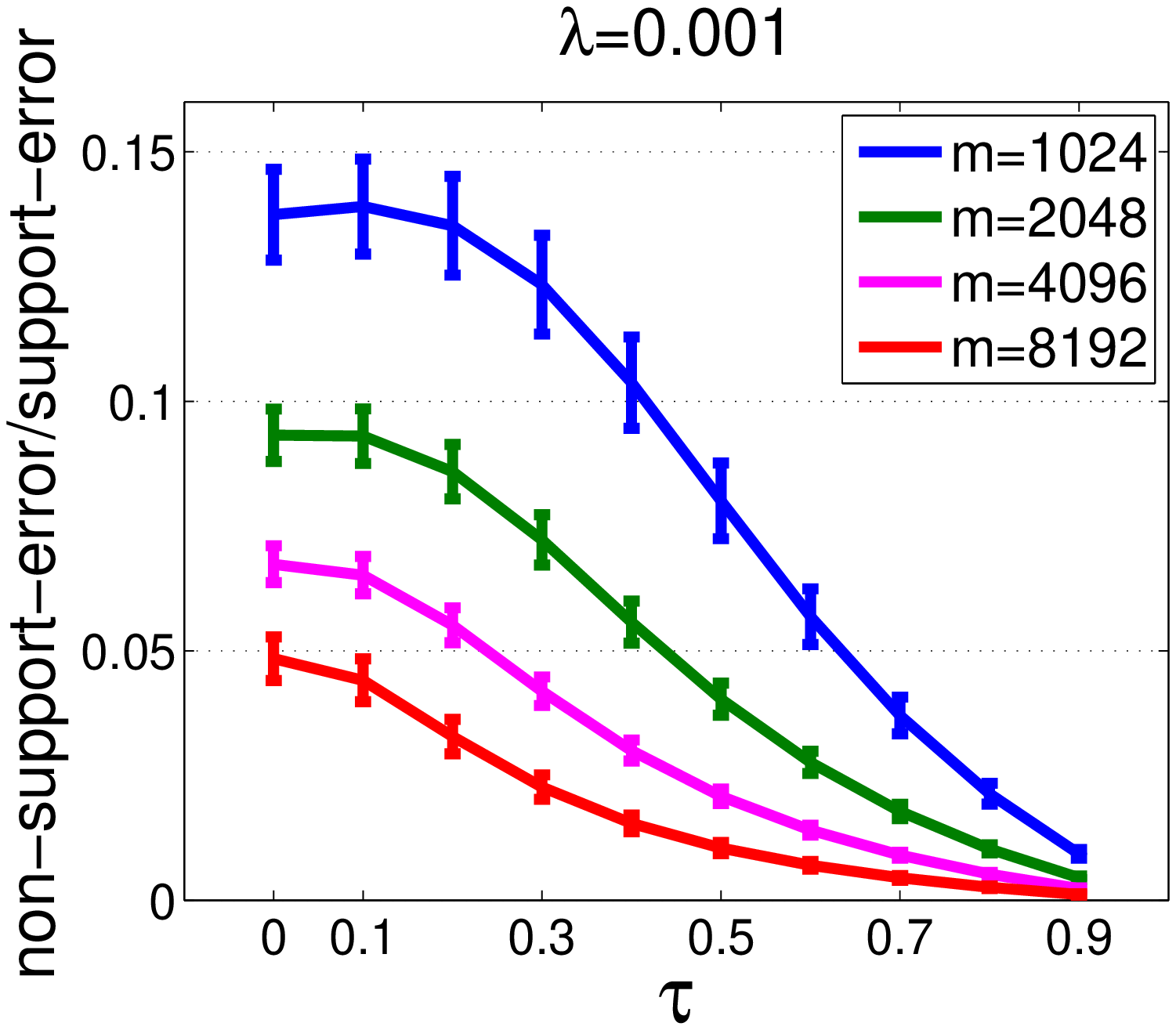}\hspace*{-0.1in}
\includegraphics[scale=0.17]{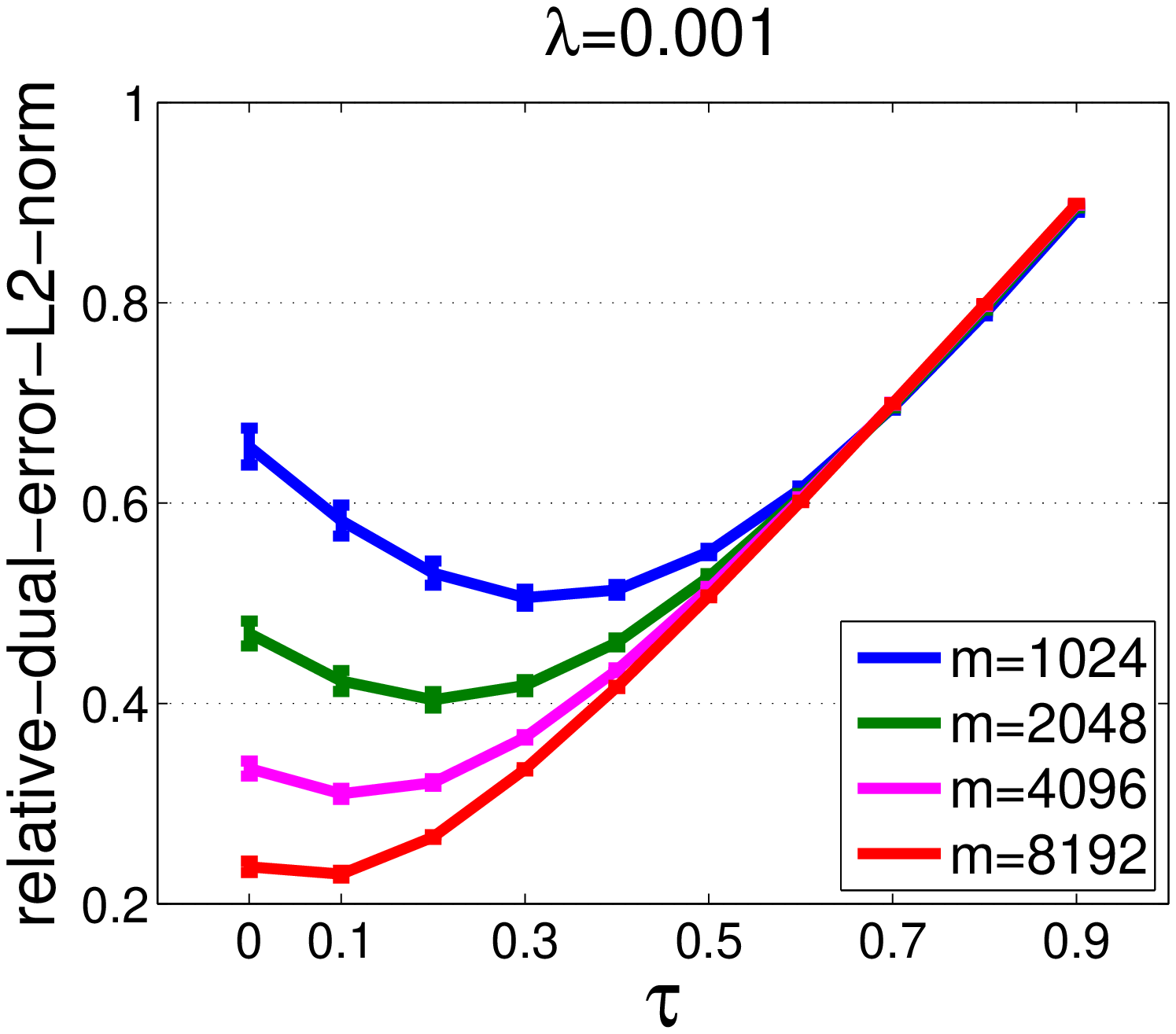}\hspace*{-0.1in}
\includegraphics[scale=0.17]{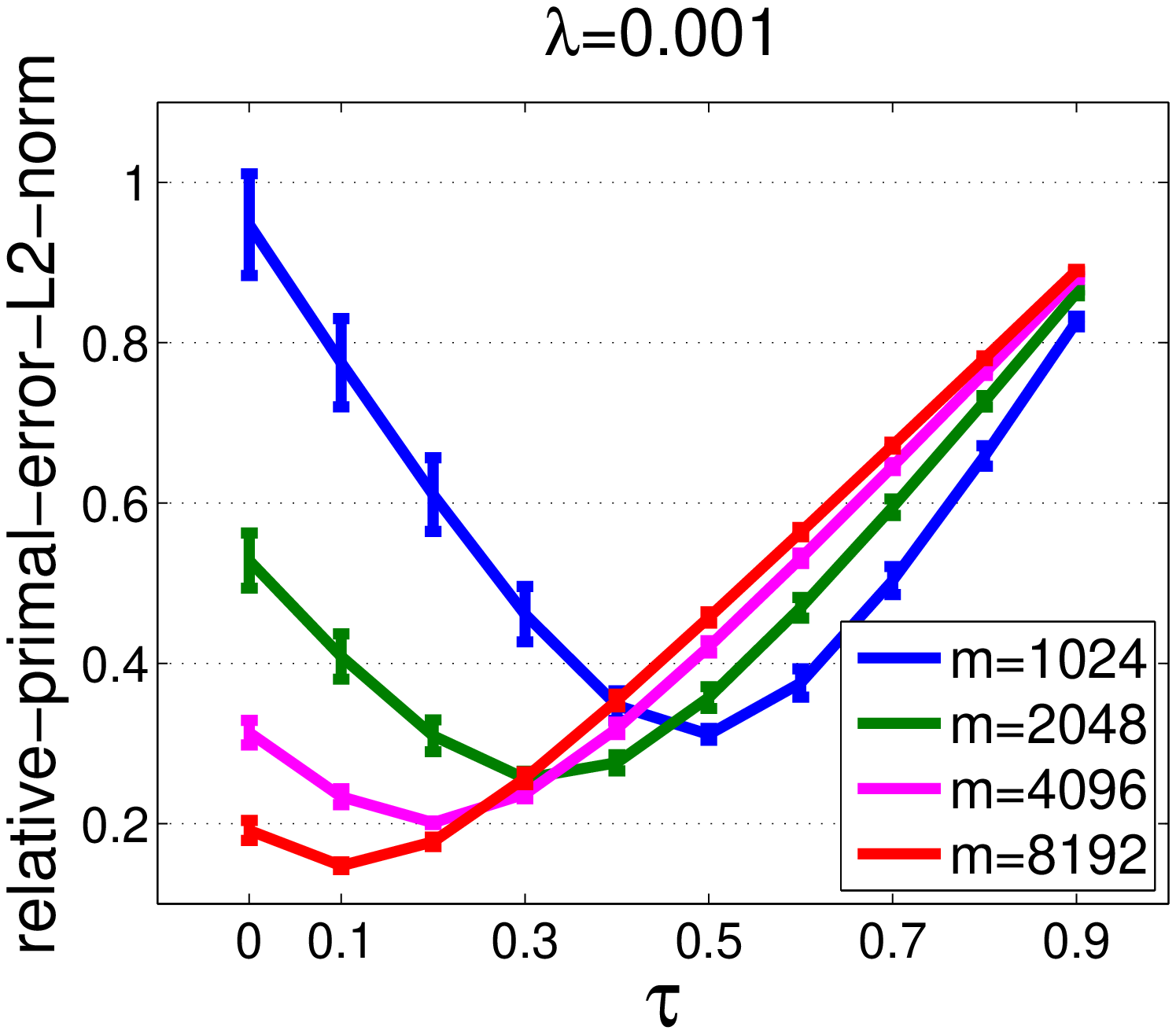}

\includegraphics[scale=0.178]{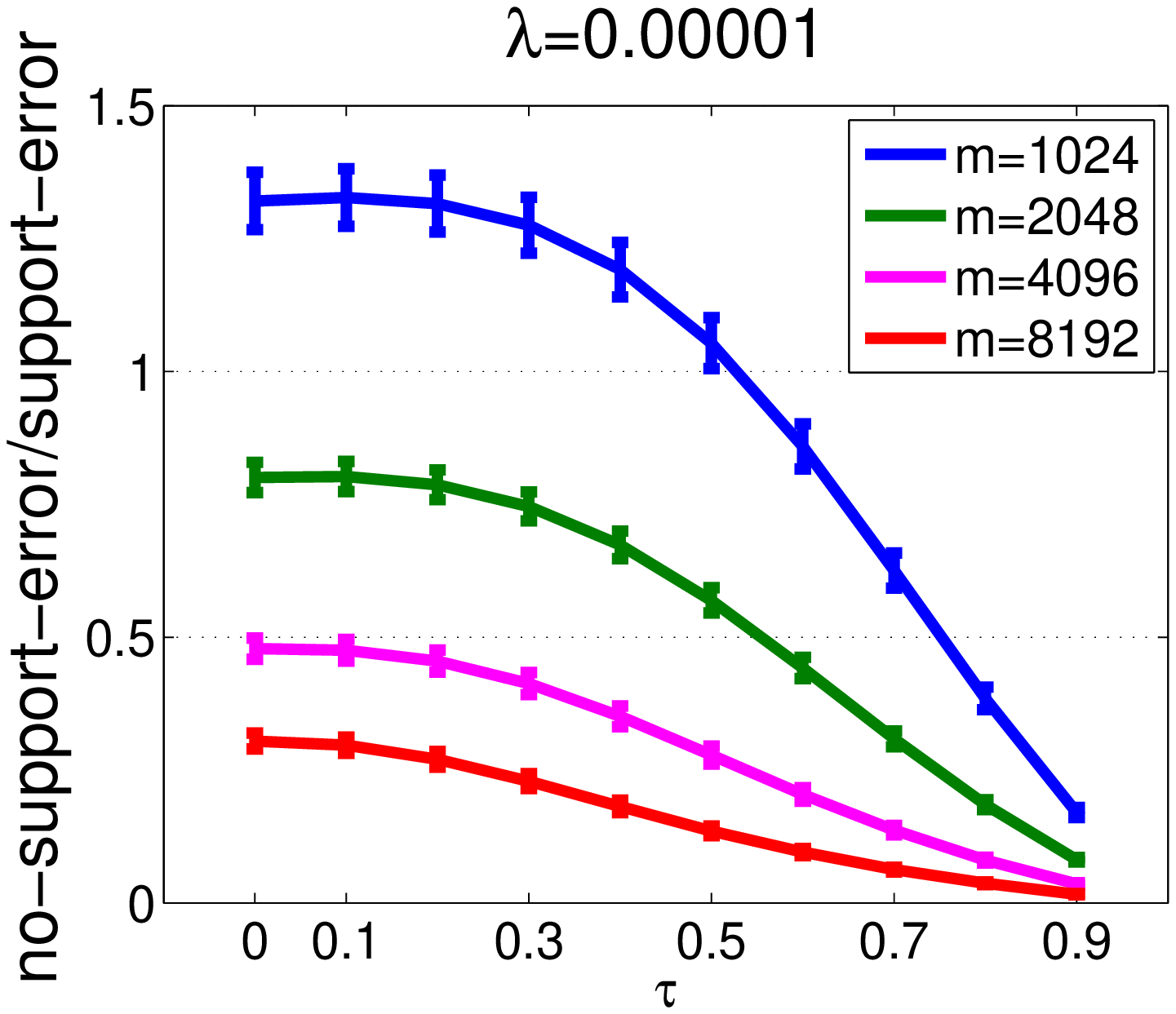}\hspace*{-0.1in}
\includegraphics[scale=0.178]{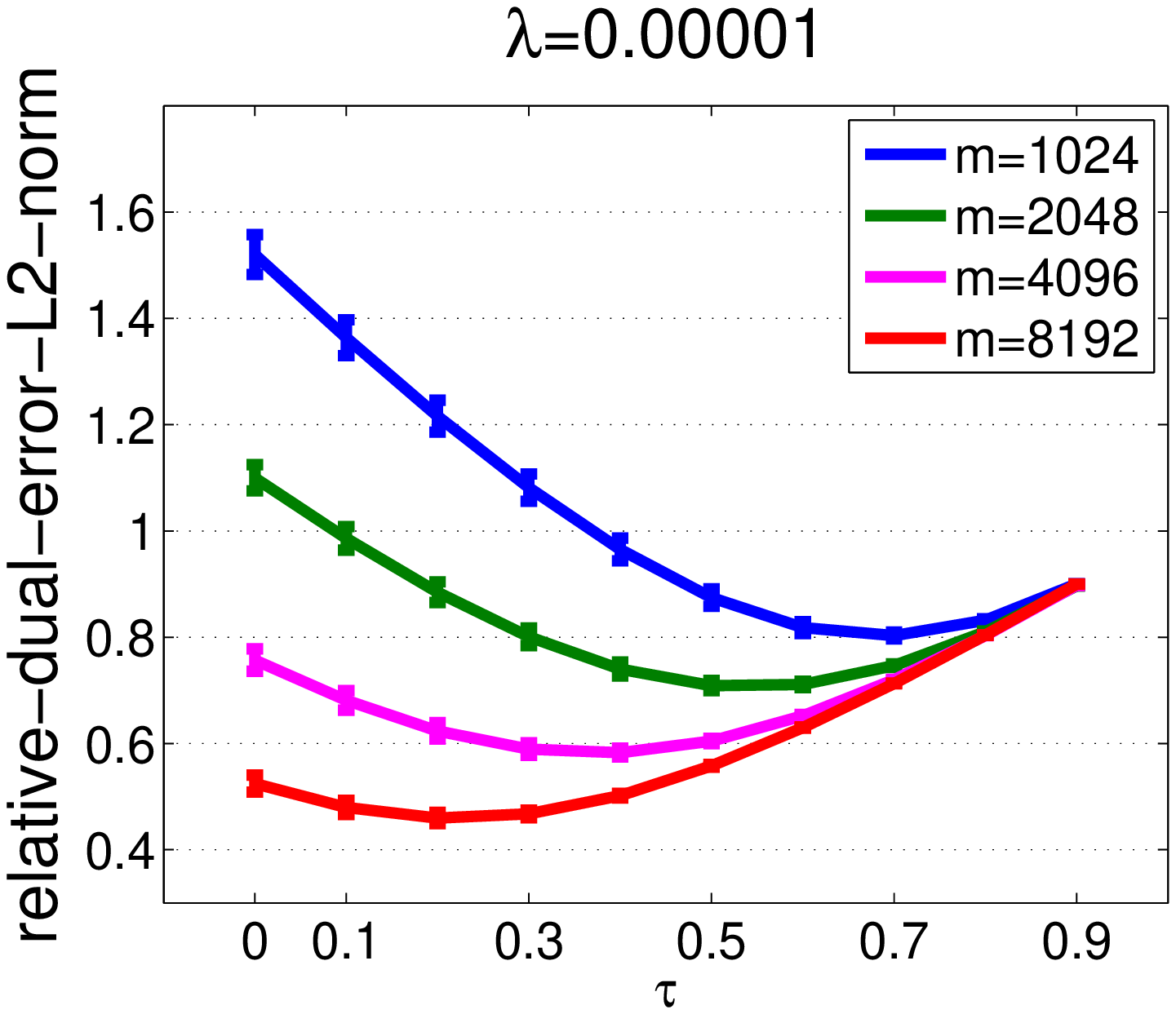}\hspace*{-0.1in}
\includegraphics[scale=0.178]{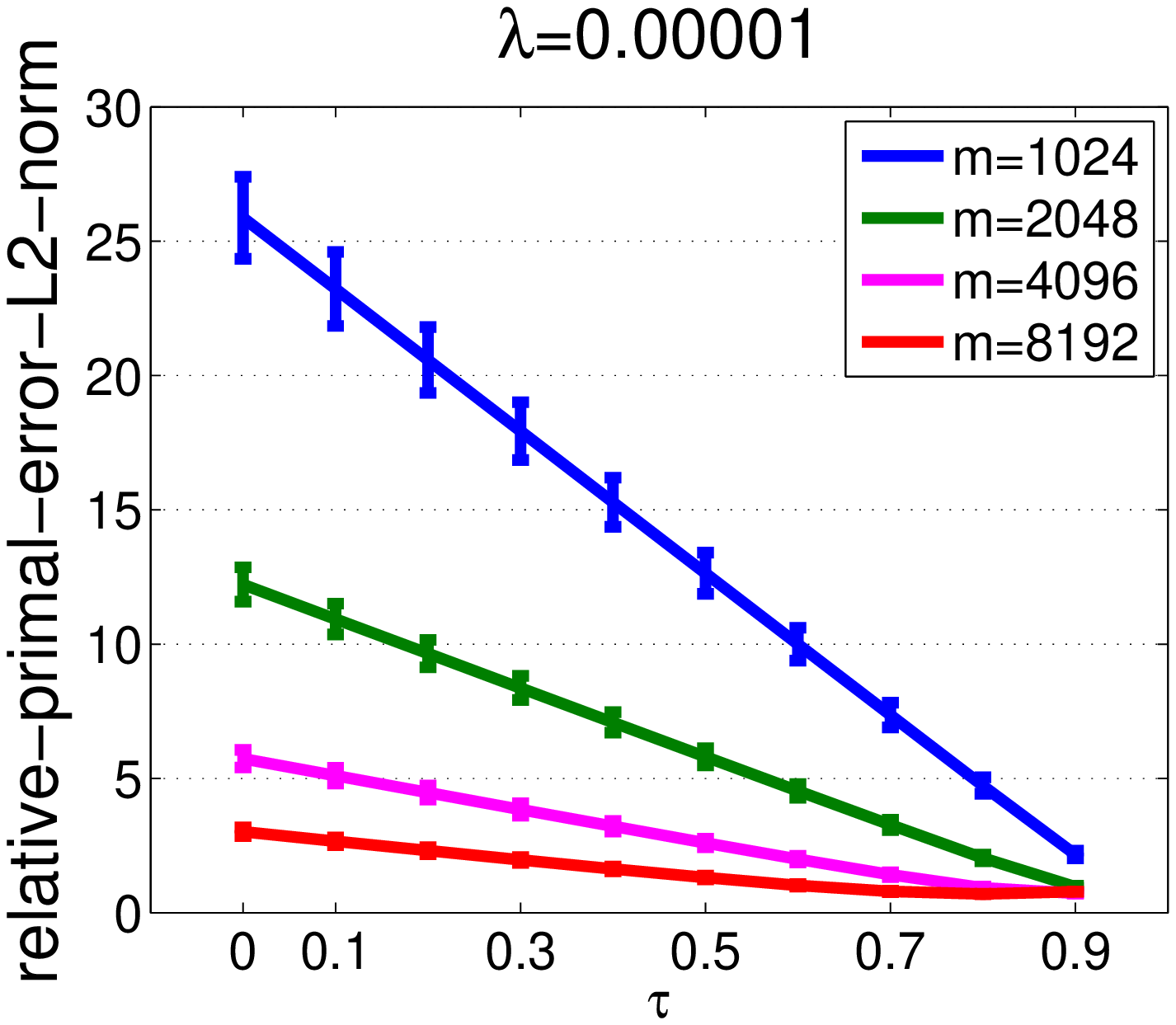}
\vspace*{-0.2in}
\caption{Recovery error for squared hinge loss. From left to right: $\frac{\|[\at_*]_{\Se^c}\|_1}{\|[\at_*]_\Se - [\alpha_*]_\Se\|_1}$ vs $\tau$, $\frac{\|\at_* - \alpha_*\|_2}{\|\alpha_*\|_2}$ vs $\tau$, and $\frac{\|\wt_* - \w_*\|_2}{\|\w_*\|_2}$ vs $\tau$. %Bottom: comparison of  testing error of the model learned in the reduced feature space (Reduce), the recovered model (Recover),  and the optimal model learned on the original data (Original). The performance of the recovered models with different values of $\tau$ varies little. 
}\label{fig:1}
\vspace*{0.1in}
\includegraphics[scale=0.185]{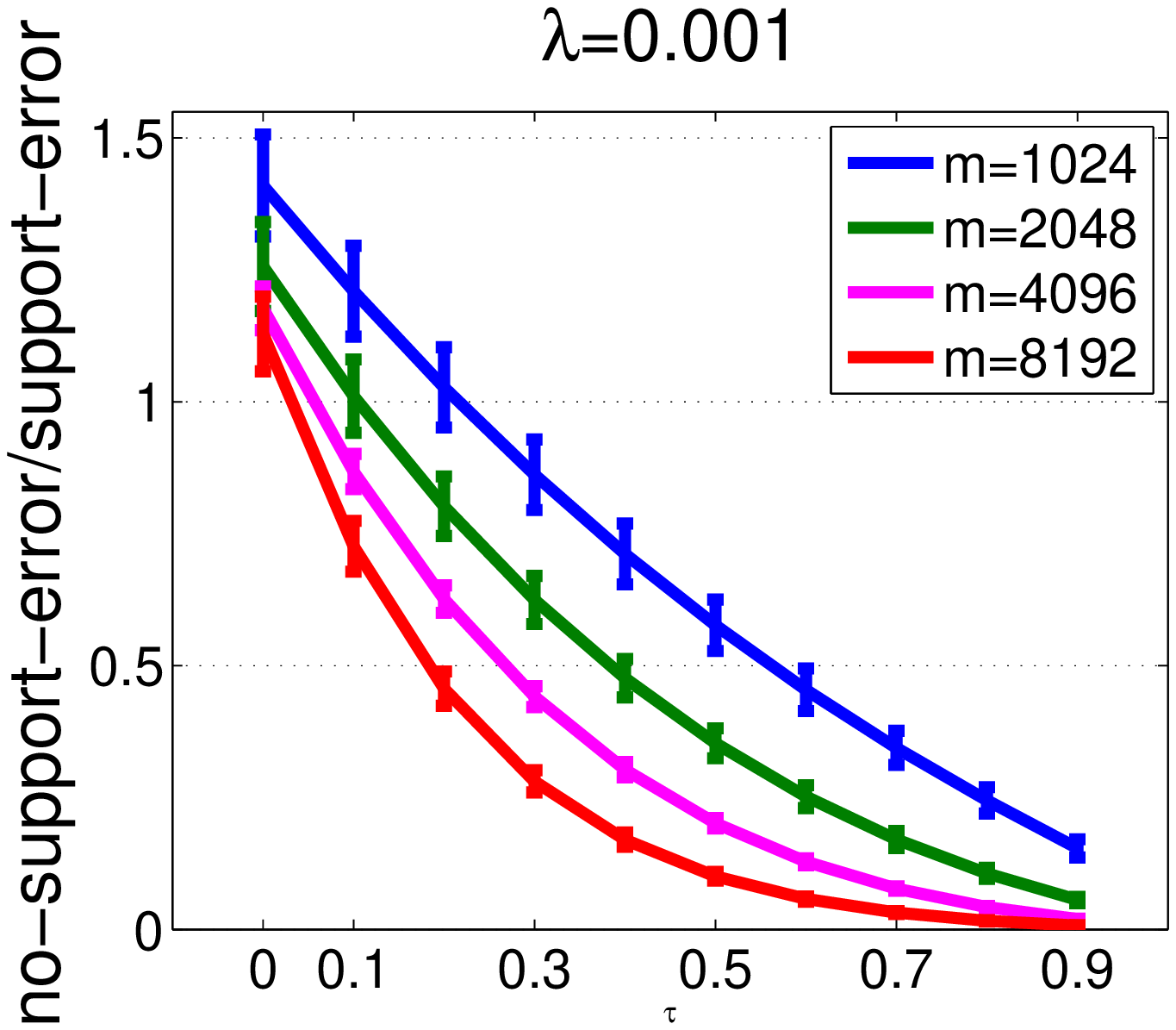}\hspace*{-0.1in}
\includegraphics[scale=0.185]{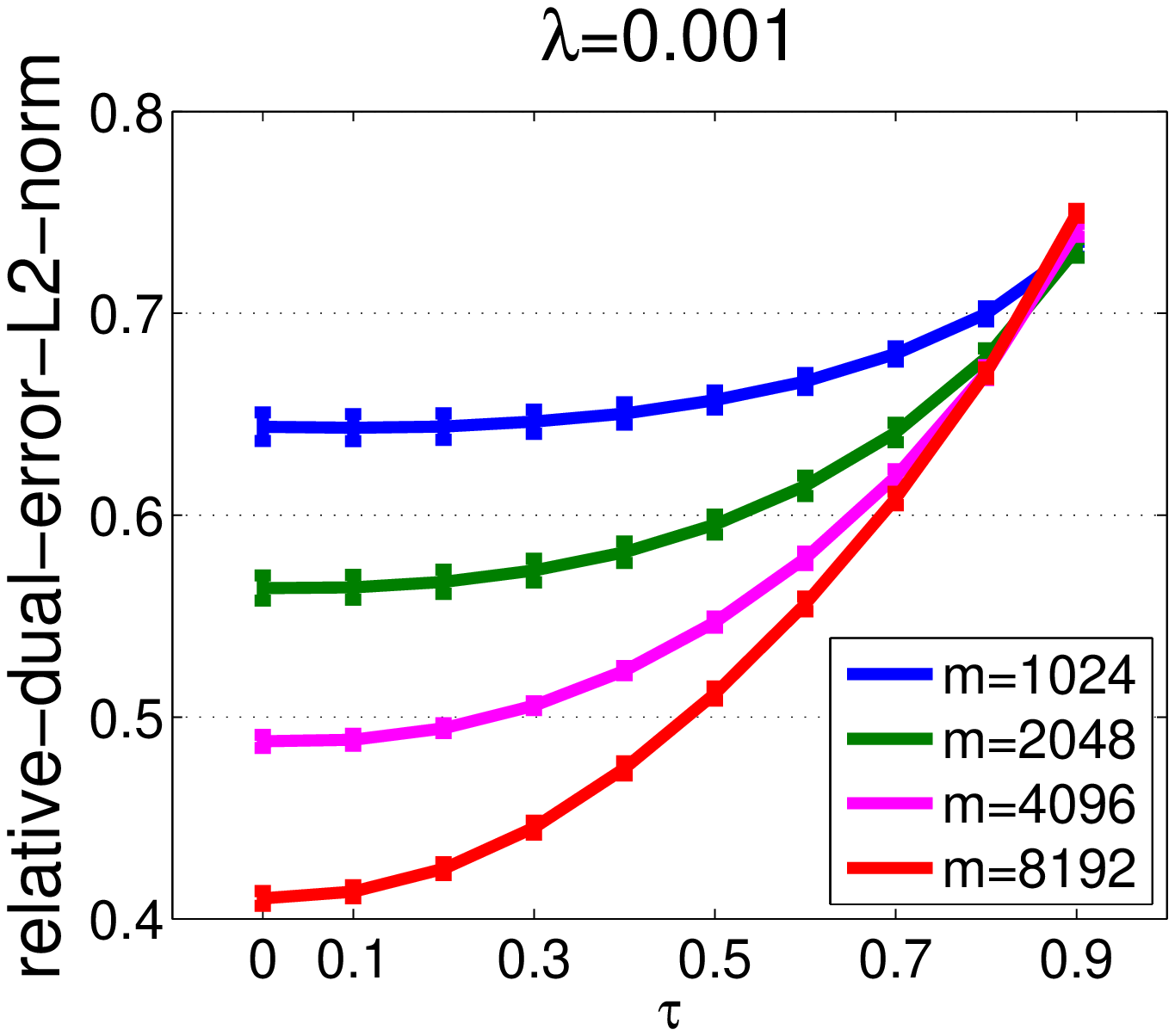}\hspace*{-0.1in}
\includegraphics[scale=0.185]{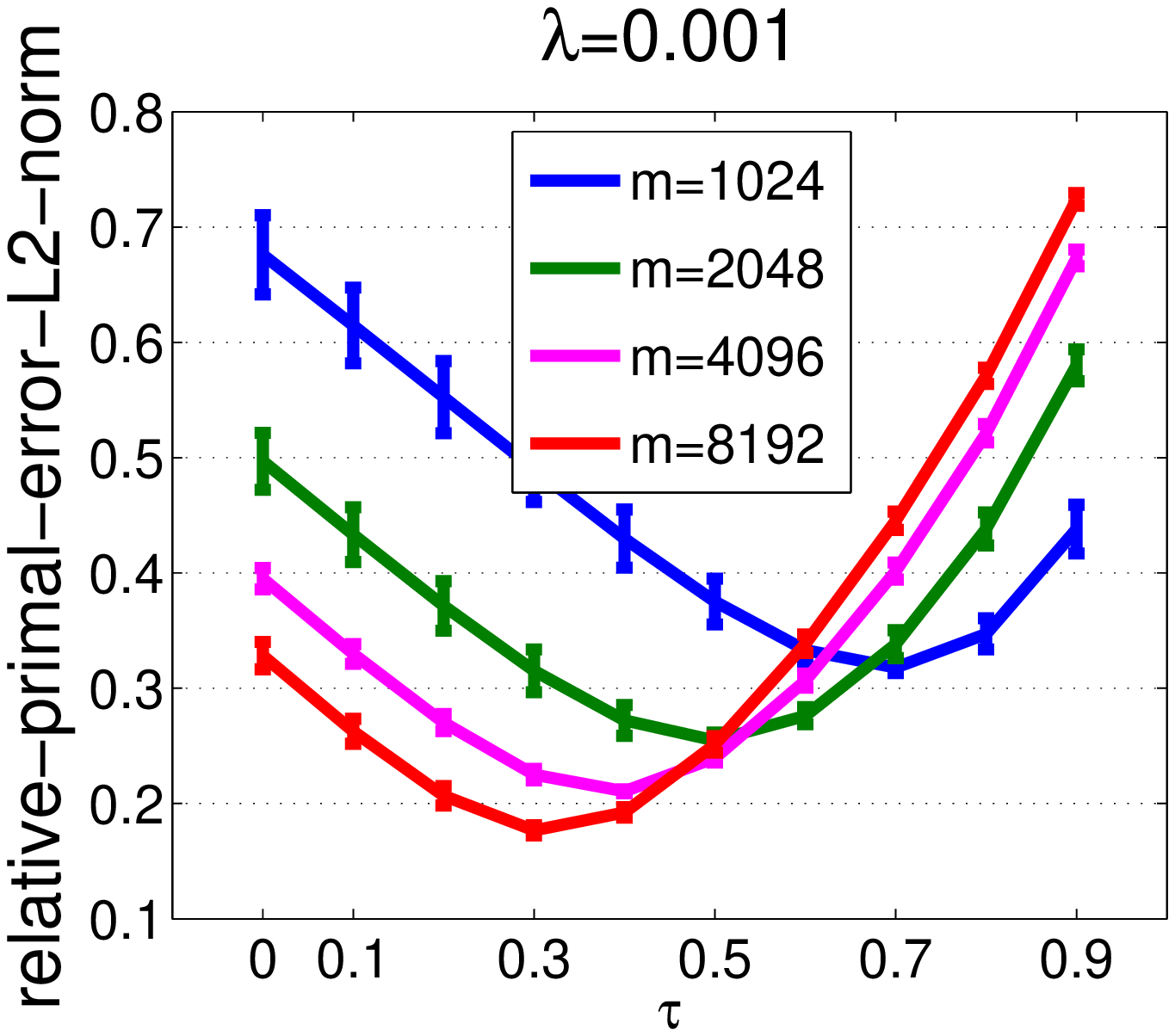}

\includegraphics[scale=0.185]{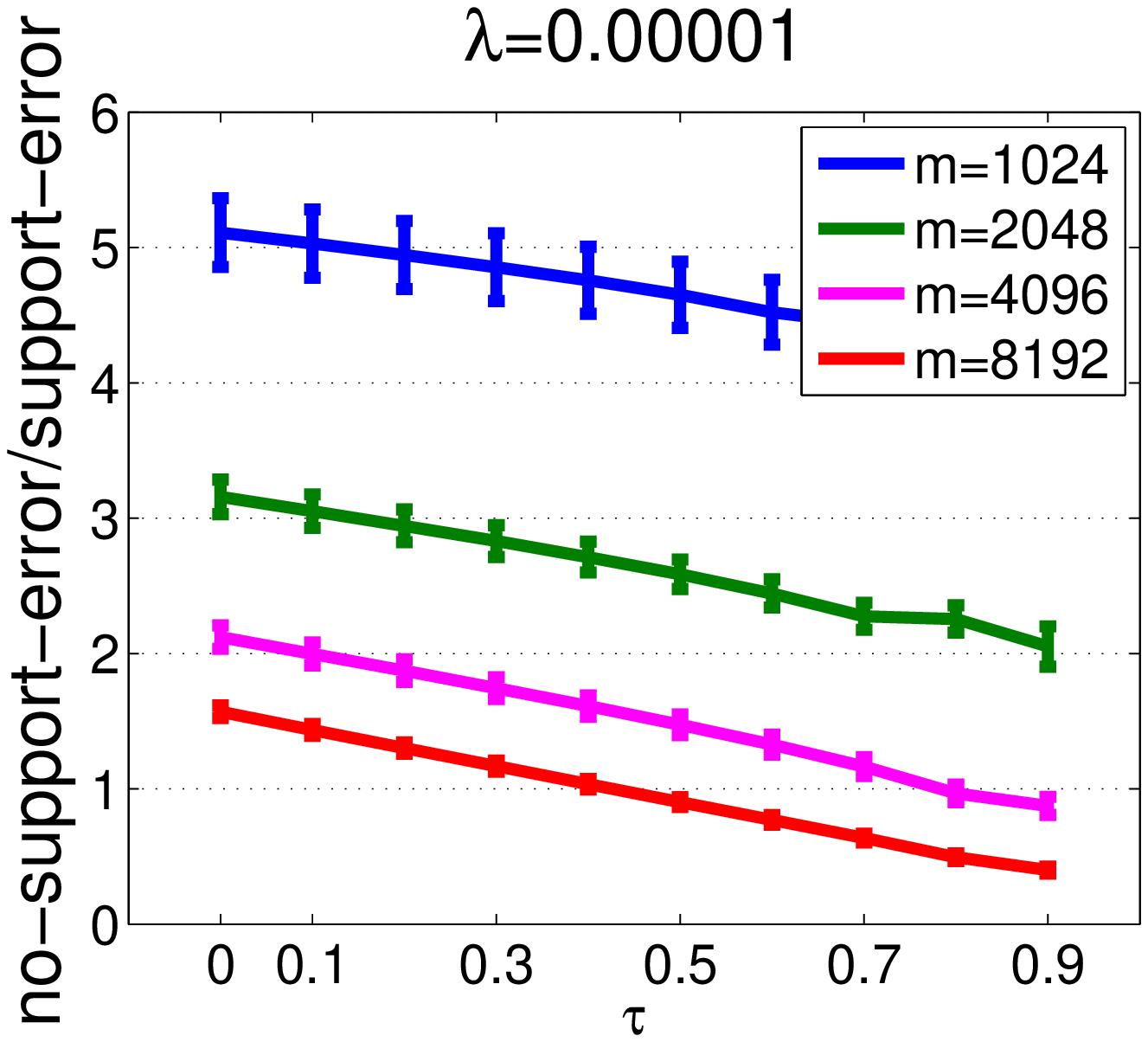}\hspace*{-0.1in}
\includegraphics[scale=0.185]{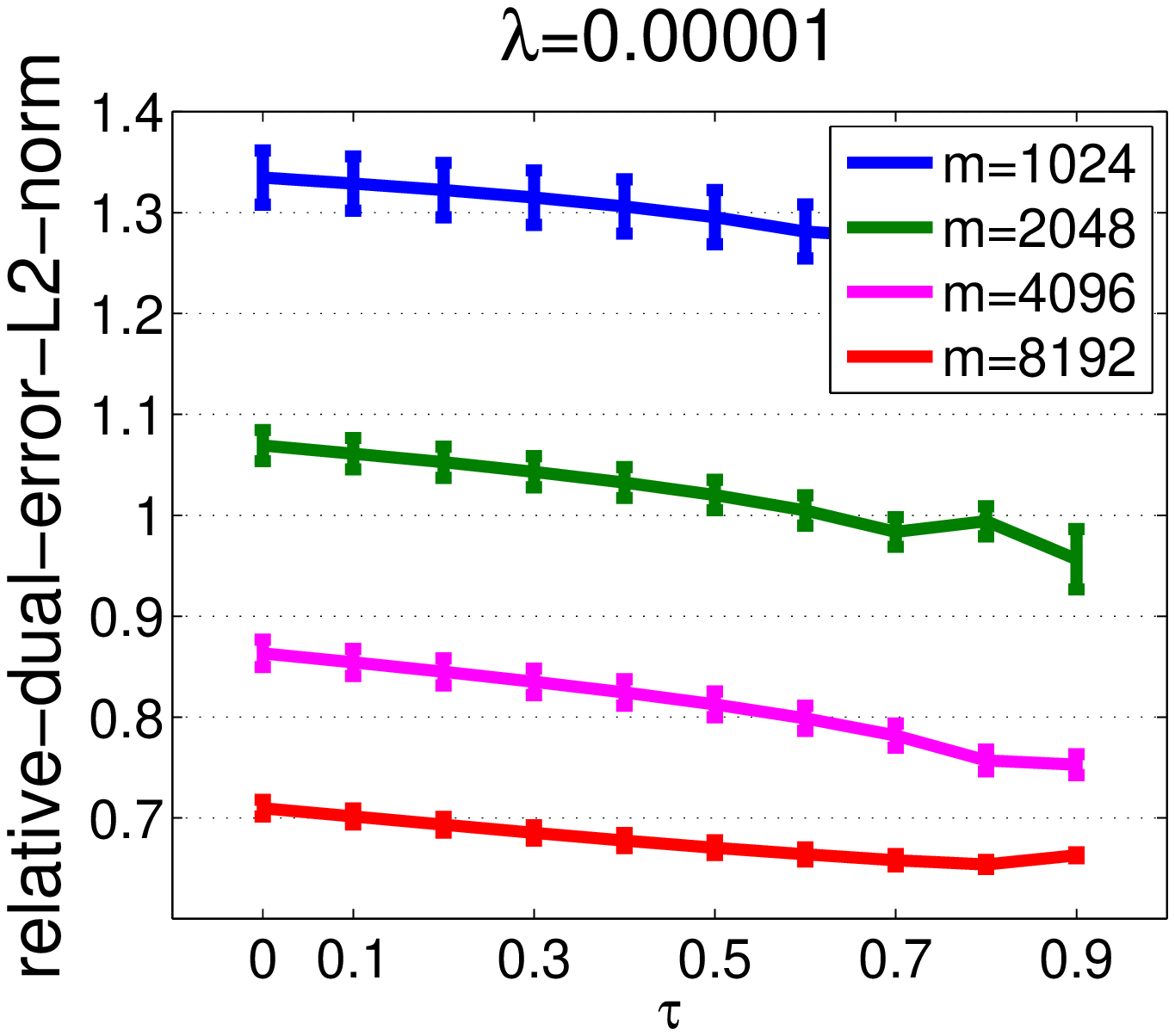}\hspace*{-0.1in}
\includegraphics[scale=0.185]{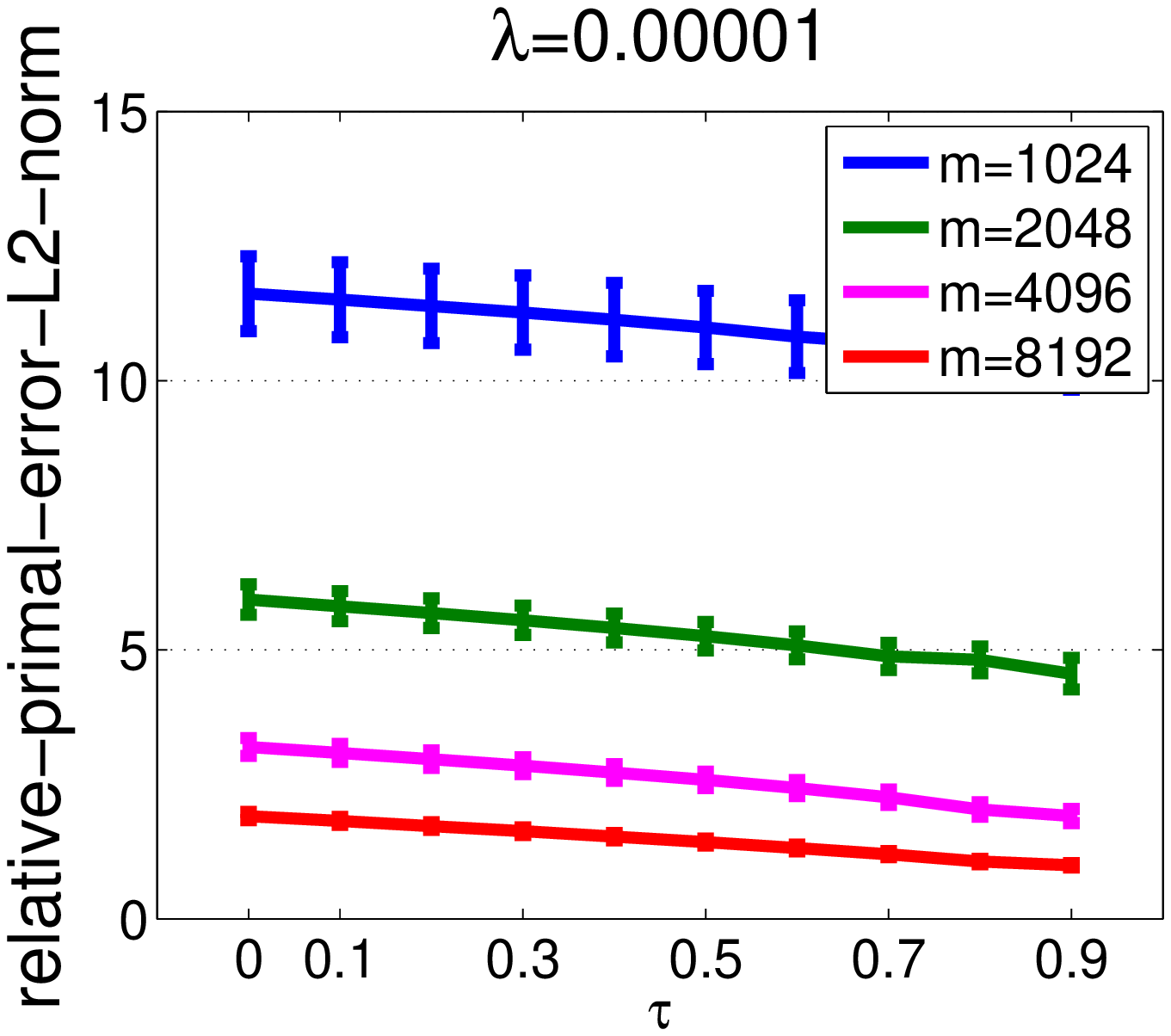}
%\centering
%\includegraphics[scale=0.17]{figs/reduce-recover-original-lambda-0p001.eps}
%\includegraphics[scale=0.17]{figs/reduce-recover-original-lambda-0p000001.eps}
\vspace*{-0.1in}
\caption{Same curves as above but for non-smooth hinge loss.}\label{fig:1-1}
\vspace*{-0.25in}
\end{figure}

\textbf{An application to distributed learning.} Although in some cases the solution learned in the reduced space  can provide sufficiently good performance, it usually performs worse than the optimal solution that solves the original problem  and sometimes the performance gap between them can not be ignored as seen in following experiments. To address this issue, we combine the benefits of distributed learning and the proposed randomized reduction methods for solving big data problems.  When data is too large and sits on multiple machines, distributed learning can be employed to solve the optimization problem.  In distributed learning, individual machines iteratively solve sub-problems associated with the subset of  data on them and communicate some global variables (e.g., the primal solution $\w\in\R^d$) among them. When the dimensionality $d$ is very large, the total communication cost could be very high. To reduce the total communication cost, we propose to first solve the reduced data problem and then use the found solution as the initial solution to the distributed learning  for the original data. 

Below, we demonstrate the effectiveness of DSRR for  the recently proposed distributed stochastic dual coordinate ascent (DisDCA) algorithm~\cite{DBLP:conf/nips/Yang13}. The procedure is (1) reduce original high-dimensional data to very low dimensional space on individual machines; (2) use DisDCA to solve the reduced problem; (3) use the optimal dual solution to the reduce problem as an initial solution to DisDCA for solving the original problem. We record the running time for randomized reduction in step 1 and optimization of the reduced problem in step 2, and the optimization of the original problem in step 3. We compare the performance of four methods (i) the {\bf DSRR} method that uses the model of the reduced problem solved by DisDCA to make predictions, (ii) the method that uses the recovered model in the original space, referred to as {\bf DSRR-Rec}; (iii) the method that uses the dual solution to the reduced problem as an initial solution of DisDCA and runs  it for the original problem with  $k=1$ or $2$ communications  (the number of updates before each communication is set to the number of examples in each machine), referred to as {\bf DSRR-DisDCA-$k$};  and (iv) the distributed  method that directly solves the original problem by {\bf DisDCA}.  For DisDCA to solve the original problem, we stop running  when its performance on the testing data does not improve.  Two data sets are used, namely RCV1-binary, KDD 2010 Cup data. % and Splice site data~\cite{conf/icml/SonnenburgF10}.
 For KDD 2010 Cup data, we use the one available on LibSVM data website. % For Splice site data, we take 1M sequences as training data and explicitly compute their feature representation induced by the weighted degree kernel with degree equal to 8, and report the area under precision-recall curve as before.  
The statistics  of the two data sets are summarized in Table~\ref{tab:data}. The results averaged over 5 trials are shown in Figure~\ref{fig:2}, which  exhibit that the performance of DSRR-DisDCA-1/2  is remarkable in the sense that it achieves almost the same performance of directly training on the original data (DisDCA) and uses much less training time. In addition, DSRR-DisDCA performs much better than DSRR and has small computational overhead. 

\begin{table}[t]
\caption{Statistics of datasets}
%\vspace*{-0.1in}
\begin{center}
\small{
\begin{tabular}{lllll}
\toprule
Name & \#Training&\#Testing &\#Features&\#Nodes \\
RCV1& 677,399& 20,242& 47, 236 & 5 \\
KDD  & 8,407,752 &748,401&29,890,095& 10\\
%Splice    & 1,000,000&4,627,840&12,495,340& 10\\
\bottomrule
\end{tabular}
}
\end{center}
\label{tab:data}
\vspace*{-0.2in}
\end{table}

\begin{figure}[t]
\centering
\includegraphics[scale=0.2]{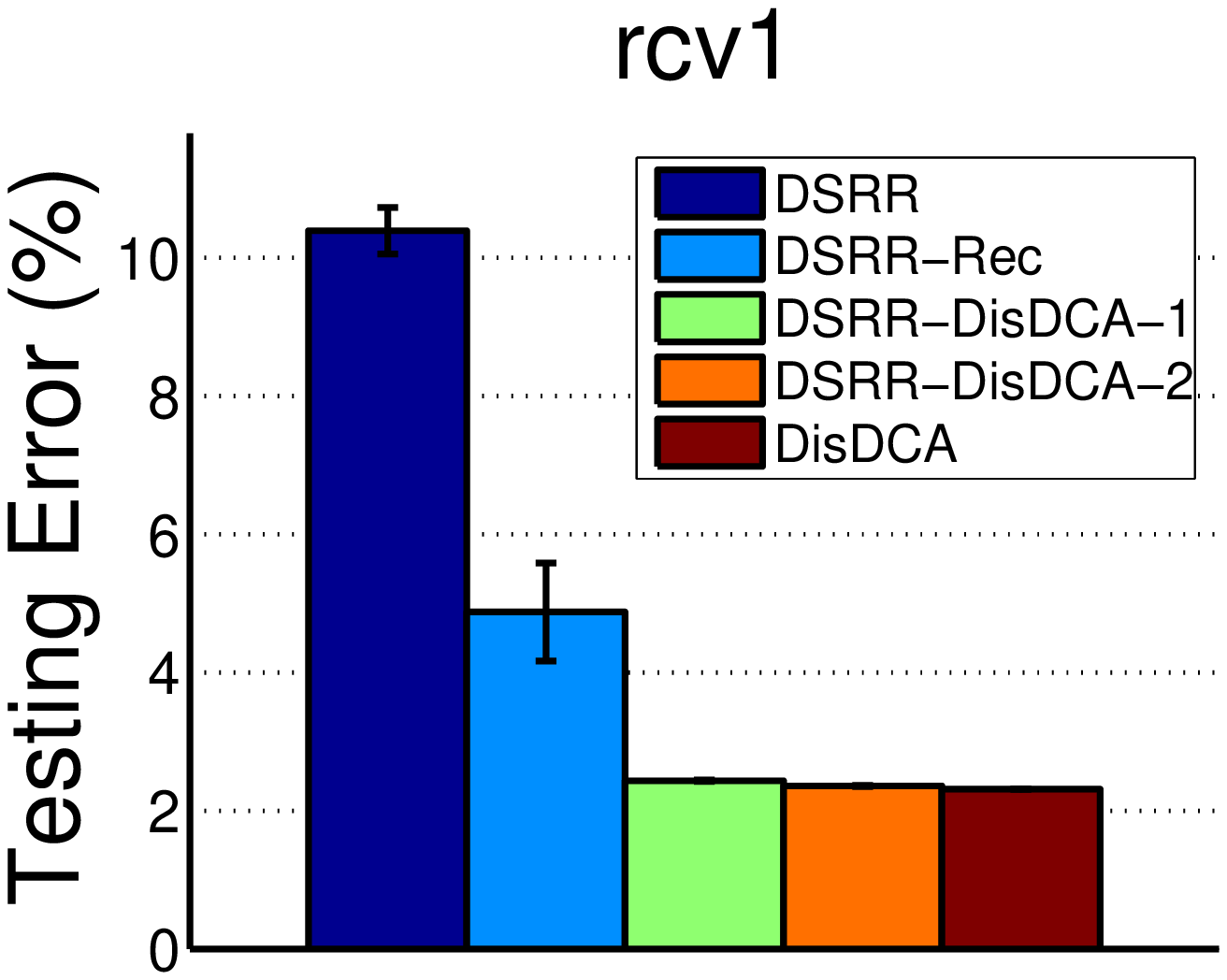}\hspace*{0.3in}
\includegraphics[scale=0.2]{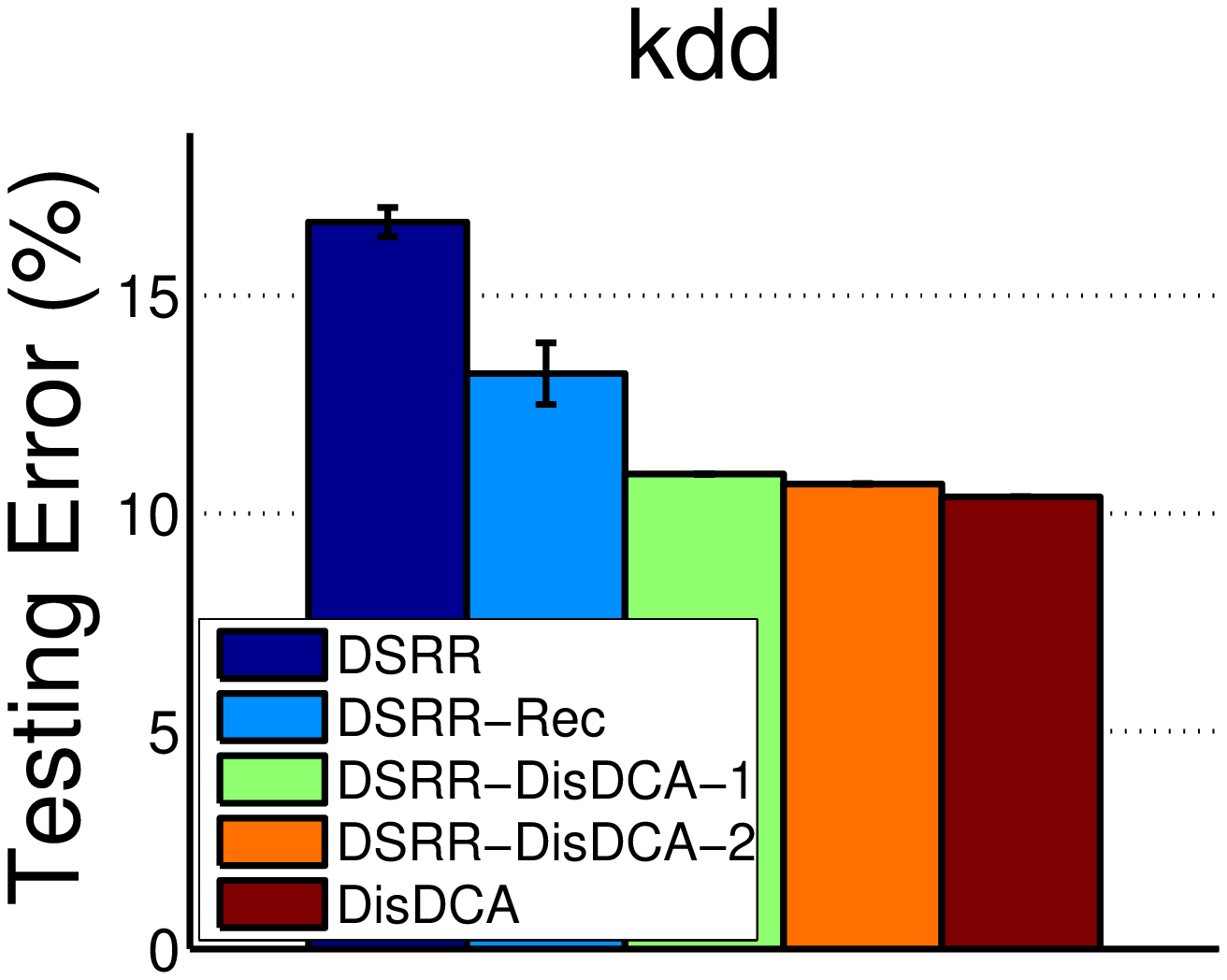}\hspace*{-0.1in}

\includegraphics[scale=0.2]{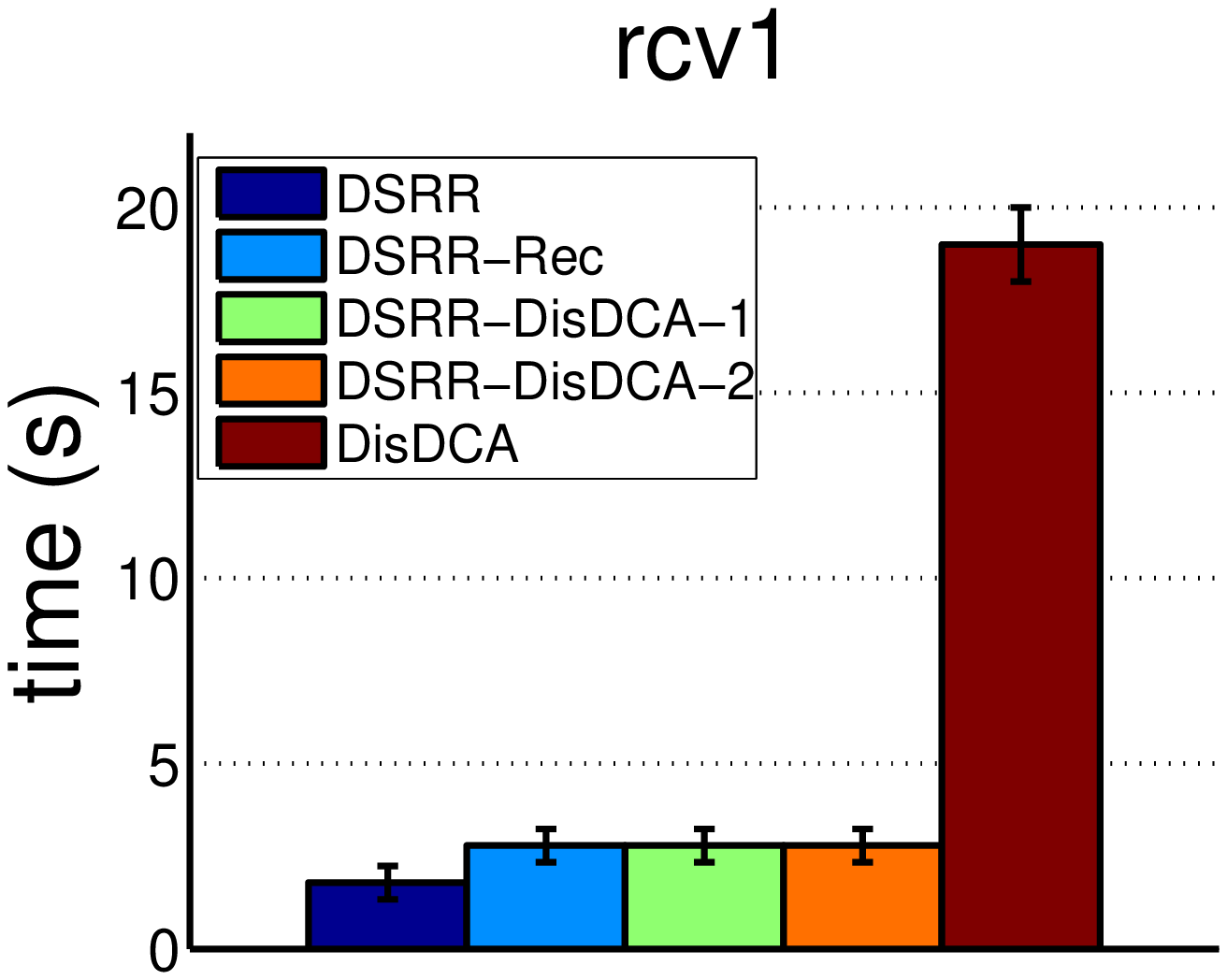}\hspace*{0.3in}
\includegraphics[scale=0.2]{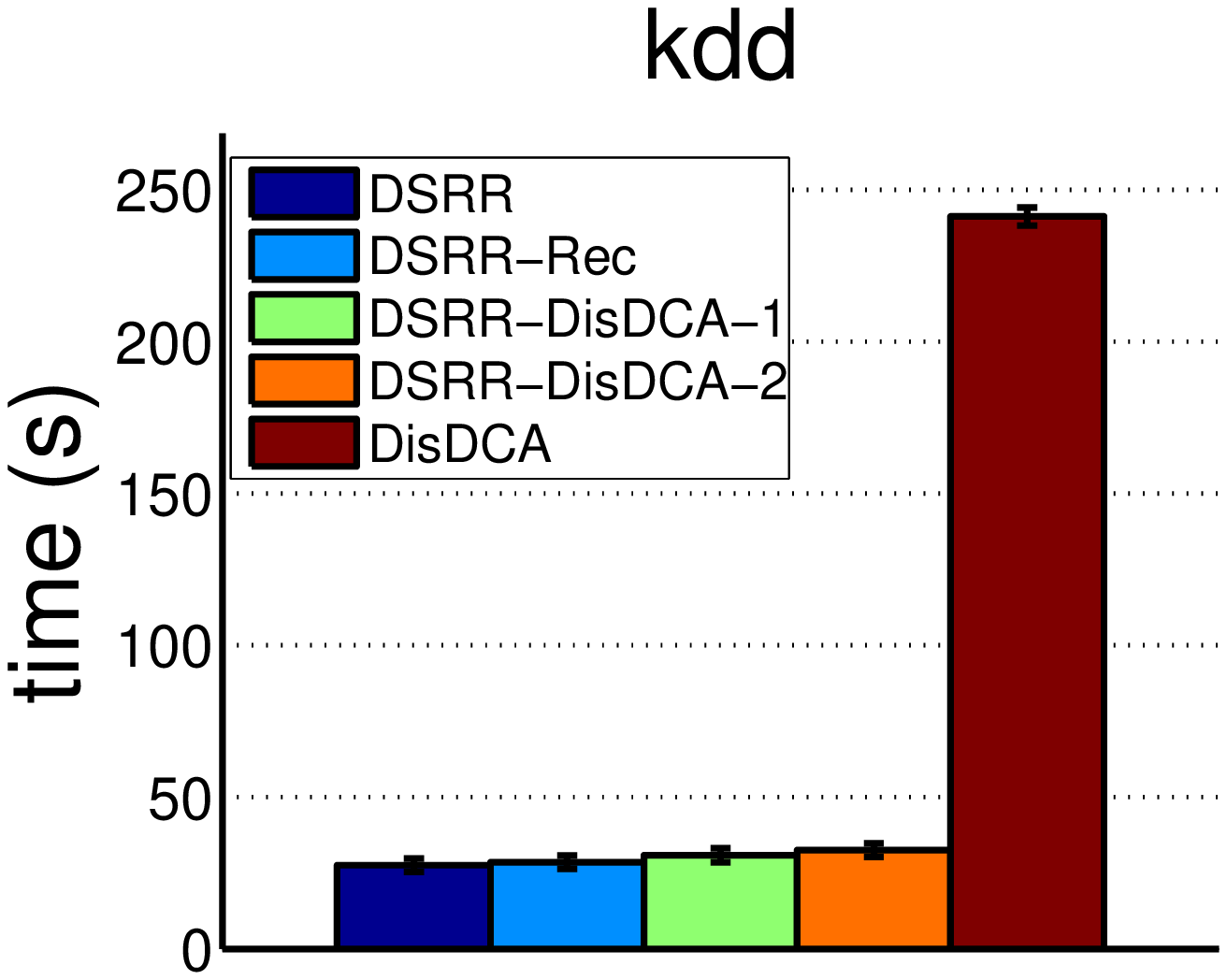}\hspace*{-0.1in}
%\includegraphics[scale=0.17]{splice_time.eps}
%\centering
%\includegraphics[scale=0.17]{figs/reduce-recover-original-lambda-0p001.eps}
%\includegraphics[scale=0.17]{figs/reduce-recover-original-lambda-0p001.eps}
\caption{Top: Testing error for different methods.  Bottom: Training time for different methods. The value of $\lambda=10^{-5}$ and the value of $\tau=0.9$. The high-dimensional features are reduced to $m=1024$-dimensional space using random hashing. The loss function is the squared hinge loss. }\label{fig:2}
\end{figure}

\section{Conclusions}
In this paper, we have proposed dual-sparse regularized randomized reduction methods for classification. We presented rigorous   theoretical analysis of the proposed dual-sparse randomized reduction methods in terms of recovery error under a mild condition that the optimal dual variable is (nearly) sparse for both smooth and non-smooth loss functions, and for various randomized reduction approaches. The numerical experiments validate our theoretical analysis and also demonstrate that the proposed reduction and recovery framework can benefit  distributed optimization  by providing a good initial solution.

\section*{Acknowledgements}
The authors would like to thank the anonymous reviewers for their helpful and insightful
comments. T. Yang was supported in part by NSF (IIS-1463988). R. Jin was partially supported by NSF IIS-1251031 and ONR N000141410631. 

\bibliography{all}
\bibliographystyle{icml2015}

\appendix
\section{Proof  of Theorem 1}\label{sec:A}
Let $\widehat F(\alpha)$ be defined as 
\[
\widehat F (\alpha) = \frac{1}{n}\sum_{i=1}^n\ell_i^*(\alpha_i) + \frac{1}{2\lambda n^2}\alpha^T\Xh^{\top}\Xh\alpha  + \frac{\tau}{n}\|\alpha\|_1
\]
%and $F(\alpha)$ be defined as
%\[
%F (\alpha) = \frac{1}{n}\sum_{i=1}^n\ell_i^*(\alpha_i) + \frac{1}{2\lambda n^2}\alpha^TX^{\top}X\alpha 
%\]
Since $\at_* = \arg\min \widehat F(\alpha)$  therefore for any $g_*\in\partial \|\alpha_*\|_1$
\begin{align*}
0\geq& \hat F(\at_*) - \hat F(\alpha_*)\\
\geq& (\at_* - \alpha_*)^{\top}\left(\frac{1}{n}\nabla\ell^*(\alpha_*) + \frac{1}{\lambda n^2}\Xh^{\top}\Xh\alpha_*\right)\\
& +  \frac{\tau}{n}(\at_* - \alpha_*)^{\top}g_* + \frac{1}{2nL}\|\at_* - \alpha_*\|_2^2
\end{align*}
where we used the strong convexity of $\ell_i^*$ and its strong convexity modulus $1/L$.  %Since $\alpha_*=\arg\min F(\alpha)$, 
By the optimality condition of $\alpha_*$, we can have
\begin{align}\label{eqn:opt}
0&\geq (\alpha_* - \at_*)^{\top}\left(\frac{1}{n}\nabla\ell^*(\alpha_*) + \frac{1}{\lambda n^2}X^{\top}X\alpha_*\right) 
\end{align}
Combining the above two inequalities we have
\begin{align*}
0\geq& (\at_* - \alpha_*)^{\top}\frac{1}{n}\Delta+ \frac{\tau}{n}(\at_* - \alpha_*)^{\top}g_* + \frac{1}{2nL}\|\at_* - \alpha_*\|_2^2
\end{align*}
Since the above inequality holds for any $g_*\in\partial \|\alpha_*\|_1$, if we choose  $[g_*]_i= sign([\at_*]_i), i\in\Se^c$, then we have
\[
(\at_* - \alpha_*)^{\top}g_* \geq - \|[\at_*]_{\Se}-[\alpha_*]_{\Se}\|_1 + \|[\at_*]_{\Se^c}\|_1
\]
%\begin{align}\label{eqn:true}
%\tau\|[\at_*]_{\Se}-[\alpha_*]_{\Se}\|_1\geq& - \|\Delta\|_\infty\|\at_* - \alpha_*\|_1  + \tau\|[\at_*]_{\Se^c}\|_1\nonumber\\
%& + \frac{1}{2L}\|\at_* - \alpha_*\|_2^2
%\end{align}
Combining the above inequalities leads to
\begin{align}\label{eqn:true}
(\tau+\|\Delta\|_\infty) \|[\at_*]_{\Se}-[\alpha_*]_{\Se}\|_1\geq &(\tau - \|\Delta\|_\infty)\|[\at_*]_{\Se^c}\|_1\notag\\
& + \frac{1}{2L}\|\at_* - \alpha_*\|_2^2
\end{align}
Assuming $\tau\geq 2\|\Delta\|_\infty$, we have
\begin{equation}\label{eqn:keyi}
\begin{aligned}
&\|\at_* - \alpha_*\|_2^2\leq 3\tau L \|[\at_*]_{\Se}-[\alpha_*]_{\Se}\|_1\\
&\|[\at_*]_{\Se^c}\|_1\leq 3 \|[\at_*]_{\Se}-[\alpha_*]_{\Se}\|_1
\end{aligned}
\end{equation}
Therefore, 
\begin{align*}
\|[\at_*-\alpha_*]_\Se\|_1^2\leq s\|\at_* - \alpha_*\|_2^2&\leq 3\tau Ls \|[\at_*]_{\Se}-[\alpha_*]_{\Se}\|_1
\end{align*}
leading to the result
\begin{align*}
\|[\at_*]_{\Se}-[\alpha_*]_{\Se}\|_1\leq 3\tau L s.  
\end{align*}
Combing this inequality with inequalities in~(\ref{eqn:keyi}) we have
\begin{align*}
 &\|[\at_*]_{\Se^c}\|_1\leq 9\tau L s,\quad \|\at_*-\alpha_*\|_2\leq 3\tau L\sqrt{s}.
\end{align*}
%and 
%\begin{align*}
%\frac{\|\at_* - \alpha_*\|_1}{\|\at_* - \alpha_*\|_2}&\leq \frac{\|[\at_*]_{\Se}-[\alpha_*]_{\Se}\|_1 + \|[\at_*]_{\Se^c}\|_1}{\|\at_* - \alpha_*\|_2}\\
%&\leq \frac{4\|[\at_*]_{\Se}-[\alpha_*]_{\Se}\|_1}{\|\at_* - \alpha_*\|_2}\leq 4\sqrt{s}
%\end{align*}

\section{Proof of Theorem 3}\label{sec:B}
%\subsection{Proof of Theorem~\ref{thm:2}}
Following the same proof of Theorem 1, we first notice that inequality~(\ref{eqn:true}) holds for $L=\infty$, i.e., 
\begin{align*}
(\tau+\|\Delta\|_\infty) \|[\at_*]_{\Se}-[\alpha_*]_{\Se}\|_1\geq &(\tau - \|\Delta\|_\infty)\|[\at_*]_{\Se^c}\|_1
\end{align*}
Therefore if $\tau\geq 2\|\Delta\|_\infty$, we have
\[
\|[\at_*]_{\Se^c}\|_1\leq 3\|[\at_*]_{\Se}-[\alpha_*]_{\Se}\|_1
\]
As a result, 
\begin{align*}
\frac{\|\at_* - \alpha_*\|_1}{\|\at_* - \alpha_*\|_2}&\leq \frac{\|[\at_*]_{\Se}-[\alpha_*]_{\Se}\|_1 + \|[\at_*]_{\Se^c}\|_1}{\|\at_* - \alpha_*\|_2}\\
&\leq \frac{4\|[\at_*]_{\Se}-[\alpha_*]_{\Se}\|_1}{\|\at_* - \alpha_*\|_2}\leq 4\sqrt{s}
\end{align*}
%\begin{align*}
%&\frac{\|\at_* - \alpha_*\|_1}{\|\at_* - \alpha_*\|_2}\leq \frac{4\|[\at_*]_{\Se}-[\alpha_*]_{\Se}\|_1}{\|\at_* - \alpha_*\|_2}\leq 4\sqrt{s}
%\end{align*}
By the definition of $\mathcal K_{n,s}$, we have  $\displaystyle \frac{\at_* - \alpha_*}{\|\at_* - \alpha_*\|_2}\in \mathcal K_{n, 16s}$.  To proceed the proof, there exists $\widetilde g_*\in\partial |\at_*|_1$ such that 
\begin{align*}
%0\geq& (\alpha_* - \at_*)^{\top}\left(\frac{1}{n}\nabla\ell^*(\alpha_*) + \frac{1}{\lambda n^2}X^{\top}X\alpha_*\right) \\
0\geq&(\at_* - \alpha_*)^{\top}\left(\frac{1}{n}\nabla\ell^*(\at_*) + \frac{1}{\lambda n^2}\Xh^{\top}\Xh\at_*\right)\\
 &  + \frac{\tau}{n}(\at_* - \alpha_*)^{\top}\widetilde g_*
\end{align*}
Adding the above inequality with ~(\ref{eqn:opt}), we have
\begin{align*}
0&\geq  (\alpha_* - \at_*)^{\top}\left(\frac{1}{n}\nabla\ell^*(\alpha_*)   -\frac{1}{n}\nabla\ell^*(\at_*)\right)\\
& + (\alpha_* - \at_*)^{\top}\left(\frac{1}{\lambda n^2}X^{\top}X\alpha_* - \frac{1}{\lambda n^2}\Xh^{\top}\Xh\at_*\right) \\
& +\frac{\tau}{n}\left\|[\at_*]_{\Se^c}\right\|_1 - \frac{\tau}{n}\left\|[\at_*]_{\Se} - [\alpha_*]_{\Se}\right\|_1
\end{align*}
By convexity of $\ell^*$ we have
\[
(\alpha_* - \at_*)^{\top}\left[\frac{1}{n}\nabla\ell^*(\alpha_*)   -\frac{1}{n}\nabla\ell^*(\at_*)\right]\geq 0
\]
Thus, we have
\begin{align*}
\tau&\left\|[\at_*]_{\Se} - [\alpha_*]_{\Se}\right\|_1\geq \tau \left\|[\at_*]_{\Se^c}\right\|_1   \\
&+ (\alpha_* - \at_*)^{\top}\left(\frac{1}{\lambda n}X^{\top}X - \frac{1}{\lambda n}\Xh^{\top}\Xh\right)\alpha_* \\
& -  (\alpha_* - \at_*)^{\top}\left(\frac{1}{\lambda n}X^{\top}X - \frac{1}{\lambda n}\Xh^{\top}\Xh\right)(\alpha_* - \at_*)\\
& + \frac{1}{\lambda n}(\alpha_* - \at_*)^{\top}X^{\top}X(\alpha_* - \at_*)
\end{align*}
Since
\begin{align*}
 (\alpha_* - \at_*)^{\top}\Delta \geq -\|\Delta\|_\infty\|\alpha_* - \at_*\|_1,
\end{align*}
and $\tau\geq2\|\Delta\|_\infty$ and by the definition of $\rho^-_{s}, \sigma_{s}$, we have
\begin{align*}
\frac{3\tau}{2}&\left\|[\at_* - \alpha_*]_{\Se}\right\|_1\geq \frac{\tau}{2}\left\|[\at_*]_{\Se^c}\right\|_1  \\
&+ \frac{\rho^-_{16s} - \sigma_{16s}}{\lambda n}\|\at_* - \alpha_*\|_2^2
\end{align*}
Then the conclusion follows the same analysis as before.  

\section{Proof of Theorem 4}\label{sec:C}
Let $\hat F(\alpha)$ be defined as 
\[
\hat F (\alpha) = \frac{1}{n}\sum_{i=1}^n\ell_i^*(\alpha_i) + \frac{1}{2\lambda n^2}\alpha^T\Xh^{\top}\Xh\alpha  + \frac{\tau}{n}\|\alpha\|_1
\]
and $F(\alpha)$ be defined as
\[
F (\alpha) = \frac{1}{n}\sum_{i=1}^n\ell_i^*(\alpha_i) + \frac{1}{2\lambda n^2}\alpha^TX^{\top}X\alpha 
\]
Since $\at_* = \arg\min \hat F(\alpha)$  therefore for any $g_*\in\partial \|\alpha^s_*\|_1$
\begin{align*}
0\geq& \hat F(\at_*) - \hat F(\alpha^s_*)\\
\geq& (\at_* - \alpha^s_*)^{\top}\left(\frac{1}{n}\nabla\ell^*(\alpha^s_*) + \frac{1}{\lambda n^2}\Xh^{\top}\Xh\alpha^s_*\right)\\
& +  \frac{\tau}{n}(\at_* - \alpha^s_*)^{\top}g_* + \frac{1}{2nL}\|\at_* - \alpha^s_*\|_2^2\\
\end{align*}
where we used the strong convexity of $\ell_i^*$ and its strong convexity modulus $1/L$.  Due to the sub-optimality of $\alpha^s_*$, we have
\begin{align*}
\frac{1}{n}\|\alpha^s_* - \at_*\|_1\xi&\geq (\at_* - \alpha^s_*)^{\top}\left[\frac{1}{n}\nabla\ell^*(\alpha^s_*) + \frac{1}{\lambda n^2}X^{\top}X\alpha^s_*\right]
\end{align*}
Combining the above two inequalities we have
\begin{align*}
\frac{1}{n}\|\alpha^s_* - \at_*\|_1\xi\geq& (\at_* - \alpha^s_*)^{\top}\left(\frac{1}{\lambda n^2}(\Xh\Xh^{\top}-XX^{\top})\alpha^s_*\right) \\
&+ \frac{\tau}{n}(\at_* - \alpha^s_*)^{\top}g_* + \frac{1}{2nL}\|\at_* - \alpha^s_*\|_2^2
\end{align*}
Since the above inequality holds for any $g_*\in\partial \|\alpha^s_*\|_1$, if we choose  $[g_*]_i= sign([\at_*]_i), i\in\Se^c$, then we have
\begin{align}\label{eqn:true-2}
&(\xi + \tau)\|[\at_*]_{\Se}-[\alpha_*]_{\Se}\|_1\geq - \|\Delta\|_\infty\|\at_* - \alpha^s_*\|_1 \\
 &+ (\tau-\xi)\|[\at_*]_{\Se^c}\|_1 + \frac{1}{2L}\|\at_* - \alpha^s_*\|_2^2
\end{align}
Thus
\begin{align*}
&(\tau+\xi+\|\Delta\|_\infty) \|[\at_*]_{\Se}-[\alpha_*]_{\Se}\|_1\\
&\geq (\tau - \xi - \|\Delta\|_\infty)\|[\at_*]_{\Se^c}\|_1 + \frac{1}{2L}\|\at_* - \alpha^s_*\|_2^2
\end{align*}
Assuming $\tau\geq 2(\|\Delta\|_\infty+\xi)$, we have
\begin{equation}\label{eqn:keyi-2}
\begin{aligned}
&\|\at_* - \alpha^s_*\|_2^2\leq 3\tau L \|[\at_*]_{\Se}-[\alpha_*]_{\Se}\|_1\\
&\|[\at_*]_{\Se^c}\|_1\leq 3 \|[\at_*]_{\Se}-[\alpha_*]_{\Se}\|_1
\end{aligned}
\end{equation}
Therefore, 
\begin{align*}
\frac{\|[\at_*]_\Se - [\alpha_*]_\Se\|_1^2}{s}\leq \|\at_* - \alpha^s_*\|_2^2&\leq 3\tau L \|[\at_*]_{\Se}-[\alpha_*]_{\Se}\|_1
\end{align*}
leading to the result
\begin{align*}
&\|[\at_*]_{\Se}-[\alpha_*]_{\Se}\|_1\leq 3\tau L s
\end{align*}
Combing above inequality with inequalities in~(\ref{eqn:keyi}) we have
\begin{align*}
 &\|[\at_*]_{\Se^c}\|_1\leq 9\tau L s,\quad \|\at_*-\alpha^s_*\|_2\leq 3\tau L\sqrt{s}.
\end{align*}
%and 
%\begin{align*}
%\frac{\|\at_* - \alpha_*\|_1}{\|\at_* - \alpha_*\|_2}&\leq \frac{\|[\at_*]_{\Se}-[\alpha_*]_{\Se}\|_1 + \|[\at_*]_{\Se^c}\|_1}{\|\at_* - \alpha_*\|_2}\\
%&\leq \frac{4\|[\at_*]_{\Se}-[\alpha_*]_{\Se}\|_1}{\|\at_* - \alpha_*\|_2}\leq 4\sqrt{s}
%\end{align*}

\section{Proof of Theorem 7}\label{sec:D}
Recall the definition of  $\mathcal S_{n,s}$: 
\begin{align*}
\mathcal S_{n,s}=\{\alpha\in\R^n: \|\alpha\|_2\leq 1, \|\alpha\|_0\leq s\}
\end{align*}
Due to $conv(\mathcal S_{n,s})\subset\mathcal K_{n,s}\subset 2conv(\mathcal S_{n,s})$, %Let $\u = X\alpha$, then
%\begin{align*}
%|\alpha^{\top}(X^{\top}X - \Xh^{\top}\Xh)\alpha| &= |\alpha^{\top} X^{\top}(I - A^{\top}A)X\alpha| \\
%&= |\u^{\top}(I - A^{\top}A)\u|
%\end{align*}
%With a probability $1-\delta$, we can bound the above term for any fixed $\alpha\in conv(\mathcal S_{n,s})$  by 
%\[
% |\u^{\top}(I - A^{\top}A)\u|\leq \rho^+_s\epsilon_{m,\delta}
%\]
%where we use 
%\[
%\max_{\alpha\in conv(\mathcal S_{n,s})}\|X\alpha\|_2^2\leq \max_{\alpha\in \mathcal K_{n,s}}\|X\alpha\|_2^2 = \rho^+_s
%\]
for any $\alpha\in\mathcal K_{n,s}$, we can write it as $\alpha=2\sum_i\lambda_i\beta_i$ where $\beta_i\in\mathcal S_{n,s}$, $\sum_i\lambda_i=1$ and $\lambda_i\geq 0$, then we have 
\begin{align*}
&|(X\alpha)^{\top}(I - A^{\top}A)(X\alpha)|\\
&\leq 4\left|\left(X\sum_{i}\lambda_i\beta_i\right)^{\top}(I - A^{\top}A)\left(X\sum_i\lambda_i\beta_i\right)\right|\\
&\leq 4\sum_{ij}\lambda_i\lambda_j|(X\beta_i)^{\top}(I - A^{\top}A)(X\beta_j)|\\
&\leq 4\max_{\alpha_1, \alpha_2\in \mathcal S_{n,s}}|(X\alpha_1)^{\top}(I - A^{\top}A)(X\alpha_2)|\sum_{ij}\lambda_i\lambda_j\\
&= 4\max_{\alpha_1, \alpha_2\in \mathcal S_{n,s}}|(X\alpha_1)^{\top}(I - A^{\top}A)(X\alpha_2)
\end{align*}
Therefore 
\begin{align*}
&\max_{\alpha\in\mathcal K_{n,s}}|(X\alpha)^{\top}(I - A^{\top}A)(X\alpha)|\\
&\leq 4\max_{\alpha_1, \alpha_2\in \mathcal S_{n,s}}|(X\alpha_1)^{\top}(I - A^{\top}A)(X\alpha_2)
\end{align*}
Let $\u_1=X\alpha_1$ and $\u_2=X\alpha_2$. Following the Proof of Theorem 5, for any fixed $\alpha_1,\alpha_2\in\mathcal S_{n,s}$, with a probability $1-2\delta$ we  have 
\begin{align*}
&\frac{1}{n}|(X\alpha_1)^{\top}(I - A^{\top}A)(X\alpha_2)|\\
&\leq \frac{1}{n}\|X\alpha_1\|_2\|X\alpha_2\|_2\epsilon_{A,\delta}\leq \rho_s^+\epsilon_{A,\delta}\leq O\left(\rho^+_s\sqrt{\frac{\log(1/\delta)}{m}}\right)
\end{align*}
where we use 
\[
\max_{\alpha\in \mathcal S_{n,s}}\frac{\|X\alpha\|_2}{\sqrt{n}}\leq \max_{\alpha\in \mathcal K_{n,s}}\frac{\|X\alpha\|_2}{\sqrt{n}} = \sqrt{\rho^+_s}
\]
In order to extend the inequity to all $\alpha_1, \alpha_2\in\mathcal S_{n,s}$. We consider the $\epsilon$ proper-net of $\mathcal S_{n,s}$~\cite{DBLP:journals/corr/abs-1109-4299} denoted by $\mathcal S_{n,s}(\epsilon)$.  Lemma 3.3 in~\cite{DBLP:journals/corr/abs-1109-4299} shows that the entropy of $\mathcal S_{d,s}$, i.e., the cardinality of $\mathcal S_{n,s}(\epsilon)$ denoted $N(\mathcal S_{n,s},\epsilon)$ is bounded by
\[
\log N(\mathcal S_{n,s},\epsilon) \leq s\log\left(\frac{9n}{\epsilon s}\right)
\]
Then by using the union bound, we have with a probability $1-2\delta$, we have
\begin{align}\label{eqn:dis}
&\max_{\alpha_1\in\mathcal S_{n,s}(\epsilon)\atop \alpha_2\in\mathcal S_{n,s}(\epsilon)}\frac{1}{n}|(X\alpha_1)^{\top}(I - A^{\top}A)(X\alpha_2)|\nonumber\\
&\leq O\left(\rho^+_s\sqrt{\frac{\log(N^2(\mathcal S_{n,s},\epsilon)/\delta)}{m}}\right)\nonumber\\
&\leq O\left(\rho^+_s\sqrt{\frac{\log(1/\delta) + 2s\log(9n/\epsilon s)}{m}}\right)
\end{align}
To proceed the proof, we need the following lemma. 
\begin{lemma} \label{lemma:discrete-bound}
Let \begin{align*}
\Er_s(\alpha_2) = \max_{\alpha_1\in\Se_{n,s}} |\alpha_1^{\top}U\alpha_2|\\
\Er_s(\alpha_2,\epsilon) = \max_{\alpha_1\in\Se_{n,s}(\epsilon)} |\alpha_1^{\top}U\alpha_2|
\end{align*}
For  $\epsilon \in (0, 1/\sqrt{2})$, we have
\begin{align*}
\Er_s(\alpha_2)\leq \left(\frac{1}{1 - \sqrt{2}\epsilon}\right)\Er_s(\alpha_2,\epsilon)\end{align*}
\end{lemma}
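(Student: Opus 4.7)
The plan is to apply a standard $\epsilon$-net argument, where the only subtlety is that when we replace an $s$-sparse vector $\alpha_1$ by its nearest net point $\tilde{\alpha}_1 \in \Se_{n,s}(\epsilon)$, the difference $\alpha_1 - \tilde{\alpha}_1$ is only guaranteed to be $2s$-sparse (the supports of $\alpha_1$ and $\tilde{\alpha}_1$ need not coincide). To feed this back into the quantity $\Er_s(\alpha_2)$, which is defined as a supremum over $s$-sparse vectors, we will split the difference into two $s$-sparse pieces, and this splitting is where the factor $\sqrt{2}$ enters.

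In detail, first let $\alpha_1^\star \in \Se_{n,s}$ attain (or approach) the supremum in the definition of $\Er_s(\alpha_2)$, and pick $\tilde{\alpha}_1 \in \Se_{n,s}(\epsilon)$ with $\|\alpha_1^\star - \tilde{\alpha}_1\|_2 \leq \epsilon$, which exists by definition of a proper $\epsilon$-net. Then
\begin{equation*}
|\alpha_1^{\star\top} U\alpha_2| \;\leq\; |\tilde{\alpha}_1^\top U\alpha_2| + |(\alpha_1^\star - \tilde{\alpha}_1)^\top U\alpha_2| \;\leq\; \Er_s(\alpha_2,\epsilon) + |(\alpha_1^\star - \tilde{\alpha}_1)^\top U\alpha_2|,
\end{equation*}
so the whole game is to bound the second term by $\sqrt{2}\epsilon\,\Er_s(\alpha_2)$.

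For that, write $v := \alpha_1^\star - \tilde{\alpha}_1$, which has support of size at most $2s$ and $\|v\|_2 \leq \epsilon$. Partition its support into two disjoint subsets $I_1, I_2$ each of size at most $s$, and set $v_j = v_{I_j}$ so that $v = v_1 + v_2$ with each $v_j$ being $s$-sparse. By Pythagoras, $\|v_1\|_2^2 + \|v_2\|_2^2 = \|v\|_2^2 \leq \epsilon^2$, hence by Cauchy--Schwarz
\begin{equation*}
\|v_1\|_2 + \|v_2\|_2 \;\leq\; \sqrt{2}\,\sqrt{\|v_1\|_2^2 + \|v_2\|_2^2} \;\leq\; \sqrt{2}\,\epsilon.
\end{equation*}
Since $v_j/\|v_j\|_2 \in \Se_{n,s}$ whenever $v_j \neq 0$, each term $|v_j^\top U\alpha_2|$ is at most $\|v_j\|_2\,\Er_s(\alpha_2)$, giving $|v^\top U\alpha_2| \leq \sqrt{2}\epsilon\,\Er_s(\alpha_2)$.

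Combining the two displays yields $\Er_s(\alpha_2) \leq \Er_s(\alpha_2,\epsilon) + \sqrt{2}\epsilon\,\Er_s(\alpha_2)$, and rearranging (valid because $\sqrt{2}\epsilon<1$ by hypothesis $\epsilon\in(0,1/\sqrt{2})$) produces the claimed inequality. The only real obstacle is recognizing that the naive expectation of a factor $1/(1-\epsilon)$ is too optimistic when the net does not preserve supports, and that the correct way to cope is the two-piece sparse decomposition above; once that is in place the rest is bookkeeping.
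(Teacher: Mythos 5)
Your argument is correct and is essentially the same as the paper's: the paper also splits the difference $\alpha_1^\star-\tilde\alpha_1$ into two orthogonal $s$-sparse pieces (citing Lemma 9.2 of Koltchinskii for the existence of such a decomposition, which is exactly your support-partition construction) and uses $\|v_1\|_2+\|v_2\|_2\le\sqrt{2}\,\|v\|_2$ to get the $\sqrt{2}\epsilon$ term before rearranging. The only difference is that you make the decomposition explicit rather than citing it.
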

\begin{proof} Let $U = \frac{1}{n}X^{\top}(I - A^{\top}A)X$.  Following  Lemma 9.2 of~\cite{koltchinskii2011oracle}, for any $\alpha, \alpha' \in \Se_{n,s}$, we can always find two vectors $\beta$, $\beta'$ such that
\[
\alpha-\alpha'=\beta-\beta', \ \|\beta\|_0 \leq s, \ \|\beta'\|_0 \leq s, \ \beta^\top \beta'=0.
\]
Let \begin{align*}
\Er_s(\alpha_2) = \max_{\alpha_1\in\Se_{n,s}} |\alpha_1^{\top}U\alpha_2|\\
\Er_s(\alpha_2,\epsilon) = \max_{\alpha_1\in\Se_{n,s}(\epsilon)} |\alpha_1^{\top}U\alpha_2|
\end{align*}
Thus
\[
\begin{split}
& |\langle \alpha - \alpha', U\alpha_2 \rangle| \leq  |\langle \beta, U\alpha_2 \rangle| + |\langle -\beta', U\alpha_2 \rangle| \\
= & \|\beta\|_2 \left|\left \langle \frac{\beta}{\|\beta\|_2}, U\alpha_2 \right \rangle\right| + \|\beta'\|_2 \left|\left \langle \frac{-\beta'}{\|\beta'\|_2}, U\alpha_2 \right \rangle\right| \\
\leq & (\|\beta\|_2+ \|\beta'\|_2) \Er_s(\alpha_2)  \leq \Er_s(\alpha_2)  \sqrt{2} \sqrt{\|\beta\|_2^2+ \|\beta'\|_2^2}\\
=& \Er_s(\alpha_2)  \sqrt{2} \|\beta-\beta'\|_2= \Er_s(\alpha_2)  \sqrt{2} \|\beta-\beta'\|_2\\
=& \Er_s(\alpha_2)  \sqrt{2} \|\alpha-\alpha'\|_2.
\end{split}
\]

Then, we have
\[
\begin{split}
& \Er_s(\alpha_2) = \max\limits_{\alpha \in \Se_{n,s}} |\alpha^{\top}U\alpha_2|\\
\leq & \max\limits_{\alpha \in \Se_{n,s}(\epsilon)} |\alpha^{\top}U\alpha_2| + \sup_{\alpha \in \Se_{n,s}\atop \alpha' \in \Se_{n,s}(\epsilon), \|\alpha-\alpha'\|_2 \leq \epsilon} \langle \alpha-\alpha', U\alpha_2 \rangle \\
\leq & \Er_s(\alpha_2, \epsilon) +  \sqrt{2} \epsilon \Er_s(\alpha_2)
\end{split}
\]
which implies
\[
\Er_s(\alpha_2) \leq \frac{\Er_s(\alpha_2, \epsilon)}{1-\sqrt{2} \epsilon}.
\]
\end{proof}

\begin{lemma} \label{lemma:discrete-bound-2}
Let \begin{align*}
\Er_s(\epsilon) = \max_{\alpha_2\in\Se_{n,s}}\Er_s(\alpha_2,\epsilon)=\max_{\alpha_1\in\Se_{n,s}\atop \alpha_2\in\Se_{n,s}(\epsilon)} |\alpha_1^{\top}U\alpha_2|\\
\Er_s(\epsilon,\epsilon)=\max_{\alpha_2\in\Se_{n,s}(\epsilon)}\Er_s(\alpha_2,\epsilon) = \max_{\alpha_1,\alpha_2\in\Se_{n,s}(\epsilon)} |\alpha_1^{\top}U\alpha_2|
\end{align*}
For  $\epsilon \in (0, 1/\sqrt{2})$, we have
\begin{align*}
\Er_s(\epsilon)\leq \left(\frac{1}{1 - \sqrt{2}\epsilon}\right)\Er_s(\epsilon,\epsilon)\end{align*}
\end{lemma}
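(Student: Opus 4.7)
The strategy is to adapt the argument of Lemma \ref{lemma:discrete-bound} almost verbatim, but now discretizing in the $\alpha_2$ variable while holding $\alpha_1 \in \Se_{n,s}(\epsilon)$. The key observation is that once $\alpha_1$ is fixed, the map $\alpha_2 \mapsto \alpha_1^\top U \alpha_2$ is a linear functional on the sparse set, and the covering/Lipschitz-style estimate that was used in the first variable of the previous lemma applies equally well to the second variable. In particular, no symmetry assumption on $U$ is required, only linearity.

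Concretely, I would fix any $\alpha_1 \in \Se_{n,s}(\epsilon)$ and any $\alpha_2 \in \Se_{n,s}$, and choose an $\epsilon$-close representative $\alpha_2' \in \Se_{n,s}(\epsilon)$. Then I would invoke Lemma 9.2 of~\cite{koltchinskii2011oracle} (already used in the previous proof) to decompose $\alpha_2 - \alpha_2' = \beta - \beta'$ with $\|\beta\|_0, \|\beta'\|_0 \leq s$ and $\beta^\top \beta' = 0$. After normalizing, both $\beta/\|\beta\|_2$ and $-\beta'/\|\beta'\|_2$ live in $\Se_{n,s}$, so by the definition of $\Er_s(\epsilon)$ each of the inner products $|\alpha_1^\top U (\beta/\|\beta\|_2)|$ and $|\alpha_1^\top U (-\beta'/\|\beta'\|_2)|$ is at most $\Er_s(\epsilon)$. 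Combining with Cauchy--Schwarz on $(\|\beta\|_2, \|\beta'\|_2)$ and the orthogonality identity $\|\beta\|_2^2 + \|\beta'\|_2^2 = \|\alpha_2 - \alpha_2'\|_2^2$ yields the crucial estimate
\[
|\alpha_1^\top U (\alpha_2 - \alpha_2')| \le \sqrt{2}\,\|\alpha_2 - \alpha_2'\|_2\,\Er_s(\epsilon) \le \sqrt{2}\,\epsilon\, \Er_s(\epsilon).
\]

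To finish, I would split $|\alpha_1^\top U \alpha_2| \leq |\alpha_1^\top U \alpha_2'| + |\alpha_1^\top U (\alpha_2 - \alpha_2')|$, bound the first term by $\Er_s(\epsilon,\epsilon)$ since both $\alpha_1$ and $\alpha_2'$ belong to $\Se_{n,s}(\epsilon)$, take the supremum over $\alpha_1 \in \Se_{n,s}(\epsilon)$ and $\alpha_2 \in \Se_{n,s}$ on the left, and rearrange the resulting inequality $\Er_s(\epsilon) \le \Er_s(\epsilon,\epsilon) + \sqrt{2}\,\epsilon\, \Er_s(\epsilon)$ to obtain the claim for $\epsilon < 1/\sqrt{2}$.

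\textbf{Main obstacle.} There is essentially no new difficulty: the hard work (the Koltchinskii sparse--orthogonal decomposition, the $\sqrt{2}$-factor bookkeeping) is already done in Lemma \ref{lemma:discrete-bound}. The only point I want to be careful about is that the normalized components of $\alpha_2 - \alpha_2'$ genuinely lie in $\Se_{n,s}$ (so that $\Er_s(\epsilon)$ may be applied), and that the supremum in $\Er_s(\epsilon)$ includes all of $\Se_{n,s}$ in its second argument — both of which are built into the definitions. If anything, the mild risk is a notational confusion about which slot is being covered at each step, so I would keep the two variables clearly labeled throughout.
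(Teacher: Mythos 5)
Your proposal is correct and is exactly the argument the paper intends: the paper's proof of this lemma is literally ``follows the same analysis as that of Lemma~\ref{lemma:discrete-bound}'', and you carry out that analysis with the Koltchinskii decomposition applied to the $\alpha_2$ slot while $\alpha_1$ stays in the net, arriving at $\Er_s(\epsilon)\le\Er_s(\epsilon,\epsilon)+\sqrt{2}\,\epsilon\,\Er_s(\epsilon)$ and rearranging. Your remark that only linearity (not symmetry of $U$) is needed is right for your bookkeeping, though note the paper's two displayed expressions for $\Er_s(\epsilon)$ coincide only because $U=\frac{1}{n}X^{\top}(I-A^{\top}A)X$ happens to be symmetric.
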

The proof the above lemma follows the same analysis as that of Lemma 1. 
By combining Lemma 1 and Lemma 2, we have
\begin{align*}
&\sigma_s = \max_{\alpha_2\in\Se_{n,s}}\Er_s(\alpha_2)\leq \frac{\max_{\alpha_2\in\Se_{n,s}}\Er_s(\alpha_2, \epsilon)}{1-\sqrt{2} \epsilon}\\
&= \frac{1}{1-\sqrt{2}\epsilon}\Er_s(\epsilon)\leq \left(\frac{1}{1-\sqrt{2}\epsilon}\right)^2\Er_s(\epsilon,\epsilon)\\
& =  \left(\frac{1}{1-\sqrt{2}\epsilon}\right)^2 \max_{\alpha_1,\alpha_2\in\Se_{n,s}(\epsilon)} |\alpha_1^{\top}U\alpha_2|
\end{align*}
By combing the above inequality with inequality~(\ref{eqn:dis}), we have
\[
\sigma_s\leq \left(\frac{1}{1-\sqrt{2}\epsilon}\right)^2 O\left(\rho^+_s\sqrt{\frac{\log(1/\delta) + 2s\log(9n/\epsilon s)}{m}}\right)
\]

\end{document}